\theoremstyle{plain}
\newtheorem{theorem}{Theorem}[section]
\newtheorem{lemma}[theorem]{Lemma}
\newtheorem{corollary}[theorem]{Corollary}
\theoremstyle{definition}
\newtheorem{definition}[theorem]{Definition}
\theoremstyle{remark}
\newtheorem{remark}[theorem]{Remark}
\newcommand{\E}{\mathcal{E}}
\let\cup\bigcup
\def\floor#1{\lfloor #1 \rfloor}
\def\vzero{{\bm{0}}}
\def\vone{{\bm{1}}}
\def\va{{\bm{a}}}
\def\vb{{\bm{b}}}
\def\ve{{\bm{e}}}
\def\vq{{\bm{q}}}
\def\vu{{\bm{u}}}
\def\vv{{\bm{v}}}
\def\vw{{\bm{w}}}
\def\vx{{\bm{x}}}
\def\mA{{\bm{A}}}
\def\mB{{\bm{B}}}
\def\mI{{\bm{I}}}
\def\mW{{\bm{W}}}
\DeclareMathAlphabet{\mathsfit}{\encodingdefault}{\sfdefault}{m}{sl}
\SetMathAlphabet{\mathsfit}{bold}{\encodingdefault}{\sfdefault}{bx}{n}
\def\gA{{\mathcal{A}}}
\def\gC{{\mathcal{C}}}
\def\gG{{\mathcal{G}}}
\def\gH{{\mathcal{H}}}
\def\gL{{\mathcal{L}}}
\def\gN{{\mathcal{N}}}
\def\gX{{\mathcal{X}}}
\def\E{{\mathbb{E}}}
\def\N{{\mathbb{N}}}
\def\P{{\mathbb{P}}}
\def\R{{\mathbb{R}}}
\def\bbS{{\mathbb{S}}}
\DeclareMathOperator*{\argmin}{arg\,min}
\DeclareMathOperator{\Unif}{Uniform}
\newcommand{\norm}[1]{\left\lVert{#1}\right\rVert}
\newcommand{\setofnns}[1]{\mathcal{N}_{#1}}
\newcommand{\xydist}{\mathscr{D}}
\newcommand{\emloss}[2]{\mathscr{L}_{#1}\left(#2\right)}
\newcommand{\poploss}{\mathscr{L}}
\newcommand{\Repregbias}[1]{R_{#1}}
\newcommand{\realrule}[1]{\ensuremath{\gA^{\theta}_{#1}}}
\newcommand{\alpharealrule}[1]{\gA^{\theta,\alpha}_{#1}}
\newcommand{\idealrule}[1]{\gA^{*}_{#1}}
\newcommand{\Rad}{\mathscr{R}}
\newcommand{\width}{\omega}
\newcommand{\minwidth}{\width_{0}}
\newcommand{\twolayerwidth}{\width_{2}}
\newcommand{\pareto}[1]{\mathcal{P}_{#1}(S)}
\newcommand{\gNbar}{\gN}
\titlespacing*{\section}{0pt}{3pt}{3pt}
\titlespacing*{\subsection}{0pt}{6pt}{3pt}
\titlespacing*{\paragraph}{0pt}{0pt}{3pt}
\title{Depth Separation in Norm-Bounded Infinite-Width Neural Networks}
\author{Suzanna Parkinson\footnote{Committee on Computational and Applied Mathematics, University of Chicago, Chicago, IL, USA.}, Greg Ongie\footnote{Department of Mathematical and Statistical Sciences, Marquette University, Milwaukee, WI, USA.}, Rebecca Willett\footnote{Department of Statistics and Department of Computer Science, University of Chicago, Chicago, IL, USA.}, Ohad Shamir\footnote{Department of Computer Science and Applied Mathematics, Weizmann Institute of Science,  Rehovot, Israel.}, Nathan Srebro\footnote{Toyota Technological Institute at Chicago,  Chicago, IL, USA.}}
\begin{document}
\maketitle

\begin{abstract}
We study depth separation in infinite-width neural networks, where complexity is controlled by the overall squared $\ell_2$-norm of the weights (sum of squares of all weights in the network). 
Whereas previous depth separation results focused on separation in terms of width, such results do not give insight into whether depth determines if it is possible to learn a network that generalizes well even when the network width is unbounded.  
Here, we study separation in terms of the sample complexity required for learnability. Specifically, we show that there are functions that are learnable with sample complexity polynomial in the input dimension by norm-controlled depth-3 ReLU networks, yet are not learnable with sub-exponential sample complexity by norm-controlled depth-2 ReLU networks (with any value for the norm). We also show that a similar statement in the reverse direction is not possible:  any function learnable with polynomial sample complexity by a norm-controlled depth-2 ReLU network with infinite width is also learnable with polynomial sample complexity by a norm-controlled depth-3 ReLU network.
\end{abstract}

\section{Introduction}

It has long been postulated that in training neural networks, ``the
size of the weights is more important than the size of the network''
\citep{bartlett1996valid}. That is, the inductive bias and generalization properties of learning neural networks come from seeking networks with small weights (in terms of magnitude or some norm of the weights), rather than constraining the number of weights.  Small weight norm is sufficient to ensure generalization \citep[e.g.][]{bartlett2002rademacher,neyshabur2015norm,golowich2018size,du2018power,daniely2019generalization}, and may be induced either through
explicit regularization \citep[e.g.,~via weight decay][]{hanson1988comparing} or
implicitly through the optimization algorithm \cite[e.g.][]{neyshabur2014search,neyshabur2017geometry,chizat2020implicit,vardi2023implicit}. The
reliance on weight-norm-based complexity control is particularly relevant with modern, heavily
overparameterized networks, which have more weights than training
examples. These networks can shatter the training set, and hence the size of the network alone does not lead to meaningful generalization guarantees
\citep{zhang2017understanding,neyshabur2014search}.  Indeed, over the years there has been increasing interest in the theoretical study of learning with {\em infinite width} networks, where the number of units per layer is unbounded or even infinite, while controlling the {\em norm} of the weights 
\citep{cho2009kernel,neyshabur2015norm,bach2017breaking,bengio2005convex,mei2019mean,chizat2018global,jacot2018neural,savarese2019infinite,ongie2019function,chizat2020implicit,pilanci2020neural,parhi2021banach,unser2023ridges}.

Considering infinite-width neural networks, and relying only on the norm of the weights for inductive bias and generalization, also requires a fresh look at the role of depth. The traditional study of the role of depth focused on how deeper networks can
represent functions using fewer units.
\cite[e.g.][]{pinkus1999approximation,telgarsky2016benefits,eldan2016power,liang2016deep,lu2017expressive,daniely2017depth,safran2017depth,yarotsky2017error,yarotsky2018optimal,rolnick2018power,arora2018understanding,safran2019depth,vardi2020neural,chatziafratis2020better,venturi2022depth}. Focusing on depth-2 (one hidden layer) versus depth-3 (two hidden layers) feedforward
neural networks with ReLU activations (see \Cref{sec:representation cost def} for precise
details),  traditional depth separation results tell us that there are
functions that can be well-approximated using depth-3, low-width networks (number of neurons polynomial in the input dimension), but cannot be approximated using depth-2 networks unless the width/number of neurons is exponentially high in the input
dimension. However, this separation is not relevant when studying infinite-width networks.

Instead of studying depth separation in terms of the {\em number of
weights} (i.e., width), one can study depth separation in terms of the
{\em size of the weights}, i.e., the norm required to approximate the
target function with a specific depth.  This is captured by the {\em
representation} cost $R_L(f)$, which is the minimal weight norm
(sum of squares of all weights in the network) required to represent
$f$ using an unbounded-width depth-$L$ network. One can ask whether there are functions that can be well approximated with a low $R_3$ representation cost, but which require a high $R_2$ representation cost to approximate, even if we allow unbounded or infinite width. One contribution of our paper is to show that the answer is ``yes'': the same function families that show depth separations in terms of width also demonstrate depth separations in terms of norm or representation cost. Specifically, with depth-3 networks, one can approximate functions in these families with norm polynomial in the input dimension, but with depth-2 networks, even with infinite width, an exponential norm is required to approximate functions in these families even within constant approximation error. At a technical level, this argument follows from explicitly accounting for the norm in the depth-3 representation, and by showing through a Barron-like unit-sampling argument that if such ``hard'' functions were approximable with a low norm in depth 2, they would also be approximable with a small width in depth 2, which we know from the width-based depth separation results is not true. 

What does such separation between $R_3$ 
and $R_2$ representation cost tell us?  
Without further analysis of the effect of this separation on learning capabilities, it is unclear.
One cannot directly compare the values of $R_2$ and $R_3$ since their comparison depends on the precise way we aggregate the norms across layers; see, e.g., \cite{neyshabur2015norm} for a careful discussion. While width-separation results can be thought of as a separation in terms of the required memory costs, when discussing infinite networks we are already abstracting away the computational implementation, and working with exponentially large {\em weights} is not an inherent computational barrier as the number of bits is still polynomial.

Thus, instead of studying depth separation in terms of {\em approximation}, we directly study the separation in terms of {\em learning}, as captured by its effect on sample complexity.  We ask
the following question: If Alice is learning using norm-based
inductive bias (i.e., regularization) with unbounded-width depth-2
networks, and Bob is learning using norm-based inductive bias with
unbounded-width depth-3 networks, are there functions Bob can learn
with a small {\em number of samples}, but which Alice would require a huge {\em number of samples} to learn? On the other hand, are there perhaps functions for which depth-2 would be better, i.e., which Alice can learn with a small number of samples with depth-2, but for which Bob would require a huge number of samples to learn by seeking a low-norm depth-3 network?  As formalized in \Cref{sec:learning rule def}, we think of Alice and Bob as using a standard Regularized
Empirical Risk Minimization or Structural Risk Minimizing (SRM)
approach, where they learn by minimizing some combination of the
empirical loss $\emloss{S}{f}$ and weight norm, or equivalently
representation cost $R_L(f)$, for depth $L=2$ or depth $L=3$.

Our main results are as follows (where we focus on learning functions with samples from a particular  distribution chosen for technical convenience):
\begin{theorem}(Depth Separation, Informal)
    \label{thm:depth separation informal}
There is a family of functions $f_d:\mathbb{R}^{2d}\rightarrow\mathbb{R}$ that requires exponential (in $d$) sample complexity to learn to within constant error by regularizing the norm in an unbounded width depth-2 ReLU network, but which can be learned with $poly(d,1/\varepsilon)$ samples to within any error $\varepsilon$ by regularizing the norm in a depth-3 ReLU network.
\end{theorem}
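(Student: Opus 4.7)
The plan is to combine a classical width-based depth separation theorem with two bridging arguments: a Barron/Maurey-style unit-sampling lemma that converts any low-$R_2$ approximation into a small-width approximation, and a standard Rademacher-complexity argument that converts representation-cost bounds into sample-complexity statements.

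First I would select the function family $f_d:\R^{2d}\to\R$ and the data distribution $\xdist_d$ from one of the classical width-based depth separation results (Eldan--Shamir radial functions, Daniely's inner-product construction, or Safran--Shamir indicators of balls). The imported fact is that under $\xdist_d$, $f_d$ admits a depth-3 ReLU approximation of width $\poly(d,1/\varepsilon)$ to within $L^2(\xdist_d)$-error $\varepsilon$, while every depth-2 ReLU approximation to within a fixed constant error requires width $\exp(\Omega(d))$. For the depth-3 upper bound, I would take the known polynomial-width construction and rebalance incoming/outgoing weights at each hidden unit so that the overall squared $\ell_2$ weight norm is also $\poly(d,1/\varepsilon)$; this gives $R_3(f_d)\le\poly(d,1/\varepsilon)$. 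Plugging this into a Rademacher-complexity generalization bound for norm-bounded depth-3 ReLU networks (Golowich--Neyshabur--Srebro type), the SRM rule that minimizes $\emloss{S}{f}+\lambda R_3(f)$ with an appropriately tuned $\lambda$ attains population excess risk $\varepsilon$ with $\poly(d,1/\varepsilon)$ samples.

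For the depth-2 side, the crux is a sampling lemma: any depth-2 ReLU network of representation cost at most $B$ that approximates a function $f$ to within $L^2(\xdist_d)$-error $\eta$ can be further approximated, for any $W$, by a width-$W$ depth-2 ReLU network within $L^2(\xdist_d)$-error $\eta + O(B/\sqrt{W})$. The proof follows the Maurey/Barron recipe: pass to the infinite-width integral representation of a depth-2 ReLU network, rebalance so that the squared $\ell_2$ weight norm matches (up to constants) the $\ell_1$ path norm of the induced signed measure, sample $W$ units i.i.d.\ from this measure, and bound the $L^2(\xdist_d)$ variance of the Monte Carlo error. Combined with the width lower bound of $\exp(\Omega(d))$ at constant approximation error, this forces $R_2(g)\ge \exp(\Omega(d))$ for \emph{every} depth-2 network $g$ approximating $f_d$ to within a universal constant. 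The sample-complexity lower bound then follows from a standard SRM dichotomy: the learner's output $\hat f$ either has $R_2(\hat f)\le \exp(o(d))$, in which case by the previous sentence its population loss is bounded away from the optimum, or $R_2(\hat f)\ge \exp(\Omega(d))$, in which case the Rademacher complexity of the corresponding norm ball around this scale requires $n\ge \exp(\Omega(d))$ samples to certify small generalization error. Either branch yields nonnegligible excess risk unless $n=\exp(\Omega(d))$.

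The main obstacle will be the sampling lemma, because $R_L$ aggregates weights by the total squared $\ell_2$ norm rather than by the $\ell_1$ path norm that appears naturally in integral representations of infinite-width ReLU networks. To apply a Maurey-style argument I must first establish that, among all depth-2 realizations of a given function, the minimum sum of squared weights is (up to a factor of two) equal to the $\ell_1$ path norm of a canonically rebalanced signed measure over hidden units, and that this equivalence survives the passage to the infinite-width limit. Once this canonical form and its measure-theoretic analogue are in place, the sampled width-$W$ network is a bona fide depth-2 ReLU network on which the width-based lower bound may be invoked, and the remainder of the argument assembles from off-the-shelf depth-separation and generalization black boxes.
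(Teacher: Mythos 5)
Your depth-3 upper bound plan matches the paper's closely: the paper also uses Daniely's inner-product construction (with a sawtooth composed with $\sqrt{d}\,\langle \vx^{(1)},\vx^{(2)}\rangle$), explicitly bounds $R_3$ by $\mathrm{poly}(d)$, and applies the Neyshabur--Tomioka--Srebro Rademacher complexity bound. Your Maurey/Barron sampling lemma is also exactly the paper's Lemma~\ref{lem:can approximate with narrow network with same R2 cost}, which indeed hinges on the normalized-realization characterization of $R_2$ (the paper's Lemma~\ref{lem:reduced depth-2 rep cost}). So the approximation-side machinery is in order.

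The gap is in your ``SRM dichotomy.'' Branch~1 is fine: if the learner outputs $\hat f$ with $R_2(\hat f)$ subexponential, the sampling lemma plus the width lower bound forces constant population error. Branch~2 does not go through: you claim that if $R_2(\hat f)$ is exponentially large, then the Rademacher complexity of the norm ball is large and hence the learner ``requires'' exponentially many samples. But Rademacher-complexity upper bounds are sufficient conditions for generalization, not necessary ones; a large Rademacher bound is merely vacuous and does not imply the learner's population loss is large, nor does it imply the learner fails. More fundamentally, nothing in your argument explains \emph{why} the Pareto-optimal/SRM learner's output would ever have small $R_2$. The crucial missing ingredient is what the paper calls the ``cheap interpolant'' lemma (Lemma~\ref{lem:tent interpolator is cheap}): with high probability over an i.i.d.\ sample from $\Unif(\gX_d)$, there exists a depth-2 interpolant $\hat f$ with $R_2(\hat f) = O(|S|^{(d+3)/(d-1)})$, built from ``tent'' ridge functions that are nonzero only near a single well-separated sample. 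Pareto optimality then forces $R_2(\idealrule{2}(S)) \le R_2(\hat f)$, which is subexponential whenever $|S| < 2^{O(d)}$. That pins the learner's output into Branch~1 and the lower bound follows. Without this interpolant construction there is no handle on the $R_2$ cost of the learner's output, and the dichotomy cannot be closed. Note also that proving the interpolant lemma depends on the concrete choice of input distribution (uniform on $\bbS^{d-1}\times\bbS^{d-1}$ gives the separation probability bound), so the distribution is not a free parameter you can ``import'' from an arbitrary width-separation theorem.
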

The next result ensures that the reverse of \Cref{thm:depth separation informal} does not occur.
\begin{theorem}(No Reverse Depth Separation, Informal)
    \label{thm:no reverse depth separation informal}
    Any function learnable with $poly(d,1/\varepsilon)$ samples by regularizing the norm in an unbounded width depth-2 ReLU network, can also be learned with $poly(d,1/\varepsilon)$ samples by regularizing the norm in a depth-3 ReLU network.
\end{theorem}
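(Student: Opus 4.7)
The plan is to establish a quantitative embedding of the depth-2 representation class into the depth-3 class, i.e., to show that $R_3(f) \le \mathrm{poly}(R_2(f))$, and then apply standard norm-based generalization bounds at depth 3 to transfer Alice's polynomial sample complexity to Bob.

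The core construction is the following embedding. Given any depth-2 representation $f(x)=\sum_{i} a_i\,\sigma(w_i^\T x + b_i) + c$ whose total squared weight norm is close to $R_2(f)$, split the hidden units by the sign of $a_i$ to write $f = f^+ - f^- + c$, where
\begin{equation*}
f^{\pm}(x) \;=\; \sum_{i:\,\pm a_i > 0} |a_i|\,\sigma(w_i^\T x + b_i) \;\ge\; 0.
\end{equation*}
Because $f^{\pm}(x)\ge 0$ pointwise and $\sigma$ is the identity on $[0,\infty)$, we have $\sigma(f^{\pm}(x))=f^{\pm}(x)$, so
\begin{equation*}
f(x) \;=\; \sigma\!\bigl(f^+(x)\bigr) \;-\; \sigma\!\bigl(f^-(x)\bigr) \;+\; c
\end{equation*}
is a valid depth-3 ReLU network with only two neurons in the second hidden layer. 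The first step of the proof is to exhibit this construction and then bound its squared weight norm using positive-homogeneity of $\sigma$: rescale $w_i\to c_i w_i$ and the second-layer weights by $1/c_i$, and rescale the two second-layer output weights by $d_{\pm}$ against the output layer. A direct per-neuron optimization of the resulting expression yields $R_3(f) \le C\,R_2(f)^{2/3}$ for a universal constant $C$ (up to a small additive term coming from $c$). This exponent is dimensionally consistent: since $R_L(\alpha f)=\alpha^{2/L}R_L(f)$ by positive homogeneity, the "comparable quantity" is $R_3(f)^{3/2}$ against $R_2(f)$, and the lemma reads $R_3(f)^{3/2} \le C'\, R_2(f)$.

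With this key lemma in hand, the learning conclusion is immediate from the standard Rademacher-complexity machinery for norm-bounded ReLU networks. The class $\{g : R_L(g)\le B\}$ has Rademacher complexity of order $(B/L)^{L/2}\sqrt{d/n}$ (from AM-GM applied to the product-of-spectral-norms bound), so depth-$L$ SRM with properly tuned regularization achieves expected error $\varepsilon$ on target $f$ with $n = \widetilde{O}(R_L(f)^L\, d/\varepsilon^2)$ samples. Alice's polynomial depth-2 sample complexity forces $R_2(f) = \mathrm{poly}(d,1/\varepsilon)$; the lemma then gives $R_3(f)\le C\,R_2(f)^{2/3} = \mathrm{poly}(d,1/\varepsilon)$, whence Bob's depth-3 sample complexity is $\widetilde{O}(R_3(f)^3 d/\varepsilon^2) = \mathrm{poly}(d,1/\varepsilon)$.

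The main obstacle is Step~2, the norm accounting for the depth-3 construction. The bookkeeping is more delicate than it looks, because the paths through $f^+$ and $f^-$ share first-layer units, so the rescalings $c_i$ (between layers~1 and~2) interact with the rescalings $d_{\pm}$ (between layers~2 and~3); one must optimize the joint expression rather than rebalance paths independently. Additional care is needed for biases $b_i$ (absorbed into extended weight vectors $[w_i;b_i]$) and the output constant $c$ (either folded into an unregularized bias or realized via a trivial constant neuron), to ensure the bound $R_3(f)^{3/2}\le C'R_2(f)$ holds with only harmless additive constants. Once this lemma is established, the rest of the argument is just bookkeeping with the standard norm-based generalization bound.
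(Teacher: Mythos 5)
Your embedding lemma is genuinely different from the paper's, and arguably nicer. The paper (Lemma~\ref{lem:ub on R3b by R2b with identity layer}) inserts an identity layer on the \emph{input} side, writing $\vx = [\vx]_+-[-\vx]_+$, which costs an extra $O(d)$ in the norm and gives $R_3(f)\le \tfrac{4d}{3}+\tfrac43 R_2(f)$. Your sign-split construction $f=\sigma(f^+)-\sigma(f^-)+c$ pushes the identity to the \emph{output} side with a second hidden layer of width two; the per-unit rescaling $c_i^2=|a_i|/d_\pm$ followed by $d_\pm=A_\pm^{1/3}$ does give the dimension-free bound $R_3(f)\le O\!\bigl((\sum_i|a_i|)^{2/3}\bigr)+\tfrac{c^2}{3}$. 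One caveat: because the paper's $\|\bm\phi\|^2$ regularizes \emph{all} biases including the output bias, you cannot actually fold $c$ into an unregularized bias, so you only get $R_3\le O(R_2^{2/3}+R_2)$ in general rather than $O(R_2^{2/3})$; that is still fine for the ``poly'' conclusion, and in the regime $\sum_i|a_i|\gg 1$ your exponent improvement would actually tighten the paper's final $m_3 \sim m_2^{6(d+3)/(d-1)}$ to roughly $m_2^{4(d+3)/(d-1)}$.

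However there is a real gap in the preceding step, which is the crux of the paper's argument. You assert that ``Alice's polynomial depth-2 sample complexity forces $R_2(f)=\mathrm{poly}(d,1/\varepsilon)$,'' but this does not follow from anything you have written, and as stated it is not even well-posed: the target $f_d$ need not be a finite-norm depth-2 network, so $R_2(f_d)$ may be infinite. What one needs, and what the paper proves in Lemma~\ref{lem:learning with two layers means approximate with small cost}, is that there \emph{exists} a depth-2 function $f_\varepsilon$ with $\poploss_{\xydist_d}(f_\varepsilon)\le\varepsilon$ and $R_2(f_\varepsilon)\le \mathrm{poly}(m_2)$. This is obtained by a two-ingredient argument: (i) the interpolation lemma (Lemma~\ref{lem:tent interpolator is cheap}), which constructs a depth-2 interpolant of $m$ i.i.d.\ samples with $R_2$ cost $O(m^{(d+3)/(d-1)})$, by exploiting the $\Omega(m^{-2/(d-1)})$-separation of random points on $\gX_d$ and placing narrow ``tent'' ridge functions on each sample; and (ii) the Pareto-optimality of $\idealrule{2}$, which forces Alice's returned predictor to have $R_2$ cost no larger than that interpolant's. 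Combining these with the assumed generalization of $\idealrule{2}$ produces the required $f_\varepsilon$. A generic ``converse to generalization'' intuition---small sample complexity implies small norm---is false in general and cannot substitute for this step. Without the interpolation lemma the proof does not close; with it, your embedding and Rademacher bookkeeping (modulo the $M^3$ vs.\ $M^6$ arithmetic in the final sample-complexity expression) would complete an alternative proof.
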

From these results, we conclude that functions that are ``easy" to learn with depth-2 ReLU networks form a strict subset of the functions that are ``easy" to learn with depth-3 ReLU networks.

At a high level, the proof of \Cref{thm:depth separation informal} relies on 
choosing a target function that is not approximable by a small norm depth-2 network. 
We then construct a depth-2 interpolant whose representation cost depends only mildly on the number of samples. Using the Alice-and-Bob terminology from earlier, since Alice (who utilizes depth-2 networks) tries to find a function that fits the data well and has a small representation cost, the representation cost of her function will be at least as small as that of the interpolant. Hence, unless she has access to an enormous number of samples, her function will not be able to approximate the target and will not generalize.
However, the target function is approximable by a depth-3 network with a small representation cost, so the Rademacher complexity results of \cite{neyshabur2015norm} lead to sample complexity bounds that allow us to bound Bob's generalization error with many fewer samples. To prove \Cref{thm:no reverse depth separation informal}, we show using a similar argument that Alice can only learn if the $R_2$ cost of approximating the target is small. We show that functions with small $R_2$ cost also have small $R_3$ cost, and so
Bob must also be able to learn these target functions. 

We see our contributions here on two levels:
\begin{enumerate}
    \item Providing a detailed study of depth separation in neural networks in terms of the size of the weights rather than the number of the weights.
    \item Establishing a framework and template for studying depth separation, or model separation more broadly, directly in terms of learning, with the separation being between low and high sample complexity.  This is in contrast to a study solely in terms of 
    the ``complexity'' needed to approximate target functions, which does not directly provide insights into sample complexities. 
\end{enumerate}

\subsection{Outline}
We define the representation cost and describe its connection to weight decay regularization in \Cref{sec:representation cost def}. 
In \Cref{sec:dep sep in width vs rep cost} we consider depth separation in the norm to approximate certain families. 
We more carefully describe what we mean by learning rules using a norm-based inductive bias in \cref{sec:learning rule def}.
The formal statements of \Cref{thm:depth separation informal,thm:no reverse depth separation informal} are in \Cref{sec:main results}, and their proof sketches are in \Cref{sec:dep sep,sec:no rev dep sep proof}, respectively. 
We conclude in \Cref{sec:conclusion} with a discussion of the implications and limitations of these results.
All technical lemmas and their proofs are reserved for \Cref{sec:technical proofs}.

\subsection{Notation}
\label{sec:notation}
The set of depth-$L$ width-$\width$ ReLU neural networks is denoted as $\setofnns{L,\width}$, and the set of depth-$L$ unbounded-width networks is denoted as $\setofnns{L}:= \cup_{\width \in \N} \setofnns{L,\width}$.
We use $\bbS^{d-1}$ for the hypersphere in $\R^d$, and $\gX_d := \bbS^{d-1} \times \bbS^{d-1} \subseteq \R^{2d}$ to denote the Cartesian product of two hyperspheres. Given $\vx \in \gX_d$, we write $\vx^{(1)}$ and $\vx^{(2)}$ for the first and last $d$ entries in $\vx$, respectively.
Throughout the remainder of the paper, we assume that the dimension parameter $d$ is at least two.
We use $\|\cdot\|_{L^2}$ for the $L^2$ norm over $\gX_d$; that is,
$
    \|f\|_{L^2}^2 = \E_{\vx \sim \Unif(\gX_d)}[f(\vx)^2].
$
Similarly, we use $\|\cdot\|_{L^\infty}$ for the $L^\infty$ norm over $\gX_d$.
We write $\xydist_d$ for a distribution on $\gX_d \times [-1,1]$.
We use the squared error loss and write $\poploss_{\xydist_d}(f) = \E_{(\vx,y) \sim \xydist_d}[(f(\vx) - y)^2]$ for the generalization error of a model $f$. Given a sample $S = \{(\vx_i,y_i)\}_{i=1}^m$ of size $m$ drawn i.i.d. from $\xydist_d$, we denote the sample loss as $\emloss{S}{f} := \frac{1}{m} \sum_{i=1}^m(f(\vx_i) - y_i)^2$.

\section{Norm-Based Control in Infinite-Width Networks}
\label{sec:representation cost def}

In this work, we focus on the class of fully connected depth-$L$ neural networks with ReLU activations, $2d$-dimensional inputs, and scalar output (or a \emph{depth-$L$ network}, for short). A depth-$L$ network realizes a function $f_{\bm\phi}:\R^{2d}\rightarrow \R$ of the form:
\begin{equation*}
f_{\bm\phi}(\vx) = \vw_L^\top[\mW_{L-1}[\cdots [\mW_2[\mW_1\vx + \vb_1]_+ + \vb_{2}]_+ \cdots]_+ + \vb_{L-1}]_+ + b_L 
\end{equation*}
where $\bm\phi := (\mW_1,\vb_1,\ldots,\mW_{L-1},\vb_{L-1},\vw_L,b_L)$ denotes the collection of all weight matrices $\mW_\ell \in \R^{\width_{\ell}\times \width_{\ell-1}}$,
 bias vectors $\vb_\ell \in \R^{\width_{\ell}}$, plus outer layer weights  $\vw_L \in \R^{\width_{L-1}}$ and bias $b_L \in \R$,
and $[\cdot]_+$ denotes the ReLU activation applied entrywise. Here,  we allow the hidden-layer widths $\width_{\ell}$ for $\ell=1,...,L-1$ to be arbitrarily large. 

Let $\Phi_L$ denote the collection of all parameter vectors $\bm\phi$ associated with a depth-$L$ network, and define $\gN_L = \{f_{\bm\phi} : \bm \phi \in \Phi_L\}$ to be the space of all functions realized by a depth-$L$ network of unbounded width. Given a function $f \in \gN_L$, we define its depth-$L$ \emph{representation cost} $\Repregbias{L}(f)$ by
\begin{equation}
\label{eq:representation cost def}
\Repregbias{L}(f) = \inf_{\bm\phi \in \Phi_L :  f = f_{\bm \phi}} \frac{\|\bm\phi\|^2}{L}.
\end{equation}
where $\|\bm\phi\|^2$ denotes the sum of squares of all weights/biases in the network $f_{\bm\phi}$, and $f = f_{\bm \phi}$  indicates equality over the domain $\gX_d$. 
More generally, following \cite{savarese2019infinite, ongie2019function}, one can extend the definition of $R_L$ to a broader class of functions $f \in \gC(\gX_d)$ by
\begin{equation}
\Repregbias{L}(f) = \lim_{\epsilon\rightarrow 0}\, \inf \left\{ \frac{\|\bm\phi\|^2}{L} : \|f - f_{\bm \phi}\|_{L^\infty} \leq \epsilon, \bm\phi \in \Phi_L \right\}
\end{equation}
where $R_L(f) = +\infty$ if the limit above does not exist. Any function with $R_L(f) < +\infty$ and $f \notin \gN_L$ can be considered an ``infinite-width'' neural network, i.e., the uniform limit of a sequence depth $L$ networks with unbounded width whose representation cost remains bounded. 
Since we focus on the representation cost needed to approximate functions, it suffices to consider networks whose width is unbounded but finite. In this case, the definition in \eqref{eq:representation cost def} suffices.

The representation cost arises naturally when considering empirical risk minimization (ERM) with weight decay regularization:
\begin{equation}
    \label{eq:regphi}
\min_{{\bm\phi}\in \Phi_L} \emloss{S}{f_{\bm\phi}} + \frac{\lambda}{L} \|\bm\phi\|^2,
\end{equation}
where $\lambda > 0$ is a tunable regularization parameter.
By fixing a function $f \in \gN_L$ and optimizing over its parametrizations $f = f_{\bm\phi}$ as an $L$-layer network, we see that the above parameter space minimization problem is equivalent to the function space minimization problem
\begin{equation}
\min_{f\in {\gN}_{L}} \emloss{S}{f} + \lambda \Repregbias{L}(f).
\end{equation}
In other words, the representation cost is the function space regularization penalty induced by imposing weight decay regularization in parameter space.

\begin{remark}
    In \Cref{remark:bounded width results}, we consider generalizations of our results to bounded-width networks. In that case, it is useful to consider the bounded-width version of the representation cost, which is the natural analog of the weight decay penalty in the function space $\setofnns{L,\width}$ of functions realized by an $L$-layer network with the hidden-layer widths bounded by $\width$. In this case we write the representation cost as $\Repregbias{L}(f;\width)$, and we formally define
    \begin{equation}
        \Repregbias{L}(f;\width) := \inf_{\substack{\bm\phi \in \Phi_L :  f = f_{\bm \phi} \\ \width_1,\ldots,\width_{L-1} \le \width}} \frac{\|\bm\phi\|^2}{L}.
    \end{equation}
\end{remark}

To better understand the inductive bias of learning with weight decay regularization, several recent works have sought to give explicit function space characterizations of the representation cost $R_L(f)$.  First, \cite{savarese2019infinite} showed that, for univariate functions, and assuming unregularized bias terms, $R_2(f)$ coincides with the $L^1$-norm of the second derivative of the function. This was generalized to multidimensional inputs $(d>1)$ by \citet{ongie2019function}, where it is shown that $R_2(f)$ is equal to the $L^1$-norm of the Radon transform of a $(d+1)$-order derivative operator applied $f$. Related works have studied the impact of other activation functions \citep{parhi2020role}, multi-dimensional outputs \citep{shenouda2023vector} and regularizing bias terms \citep{boursier2023penalising}. An ongoing effort is to characterize $R_L(f)$ with depth $L>2$. For networks with multi-dimensional outputs, the limit as depth $L\rightarrow \infty$ is studied in \citep{jacot2022implicit}, where it is conjectured that the limiting representation cost coincides with the so-called ``bottleneck rank'' of a function, defined as the minimum $r$ such that $f = g\circ h$ with $h:\R^{d_\text{in}}\rightarrow\R^r$ and $g:\R^r\rightarrow \R^{d_{\text{out}}}$. Finite depth modifications to this characterization are also studied by \citet{jacot2023bottleneck}.
\section{Norm-Based Depth Separation in Approximation}
\label{sec:dep sep in width vs rep cost}
\label{sec:depth separation in norm}
Most previous depth separation results focus on separation in terms of the size of the network (i.e., the number of neurons) needed to represent or well-approximate a given target function.
Specifically, \citet{eldan2016power,daniely2017depth,safran2017depth} showed there are families of target functions parameterized by input dimension $d$ that are well-approximated by a depth-3 network whose number of neurons is polynomial in $d$, but require width exponential in $d$ to approximate within constant accuracy using a depth-2 network. 
For concreteness, we highlight the result from \cite{daniely2017depth}:
\begin{lemma}[\cite{daniely2017depth}]\label{lem:Daniely} There exists a family of functions $\{f_d\}_{d=1}^\infty \subset L^2(\gX_d)$ such that any depth-two ReLU network $f_{\bm\phi}\in \gN_2$ with $\|\bm\phi\|_\infty \leq 2^d$ satisfying
$
\|f_d-f_{\bm\phi}\|_{L^2} < 10^{-4}$
must have width $\width = 2^{\Omega(d\log(d))}$. Conversely, for any $\epsilon > 0$, there exist a depth-three ReLU network $\tilde{f}_{\bm\phi} \in \gN_3$ 
with $O(\mathrm{poly}(d)/\epsilon)$ neurons and $\|\bm\phi\|_\infty =O(\mathrm{poly}(d))$, such that $\|f_d - \tilde{f}_{\bm\phi}\|_{L^\infty} < \epsilon$.
\end{lemma}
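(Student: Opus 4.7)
The plan is to take $f_d$ to be an oscillatory function of the inner product $\langle \vx^{(1)}, \vx^{(2)}\rangle$ on $\gX_d$. Specifically, I would fix $f_d(\vx) = g(\langle \vx^{(1)}, \vx^{(2)}\rangle)$ with $g(t) = \cos(Nt)$ for $N = \Theta(d)$ (smoothly truncated and normalized), so that $f_d$ is easy to represent by depth three---because depth three can separately form the inner product and then apply a univariate map---while most of the $L^2$ energy of $f_d$ is concentrated in very high-degree spherical harmonics, which a single depth-two ridge unit cannot efficiently produce.

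For the depth-three upper bound, I would use the polarization identity $\langle \vx^{(1)}, \vx^{(2)}\rangle = 1 - \tfrac{1}{2}\|\vx^{(1)} - \vx^{(2)}\|^2$, valid on $\gX_d$. Each coordinate square $u \mapsto u^2$ on $[-2,2]$ can be approximated up to error $\epsilon/\poly(d)$ by $\poly(d)/\epsilon$ ReLU units with polynomially bounded weights, and summing yields a depth-two sub-network producing $\hat t \approx \langle \vx^{(1)}, \vx^{(2)}\rangle$. The remaining hidden layer of the depth-three network then forms a univariate piecewise-linear interpolant of $g$ with $O(N/\epsilon)$ uniform knots in $\hat t$, i.e., a linear combination of $[\hat t - c_k]_+$ terms. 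Composing the two sub-networks yields a depth-three network of size $\poly(d)/\epsilon$ and $\|\bm\phi\|_\infty = O(\poly(d))$ that approximates $f_d$ to within $\epsilon$ in $L^\infty$.

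For the depth-two lower bound, I would expand both $f_d$ and any candidate $f_{\bm\phi} \in \gN_2$ in the basis of spherical harmonics on $\gX_d = \bbS^{d-1} \times \bbS^{d-1}$, which factorizes as tensor products of harmonics on each sphere. Since $f_d$ is a function of $\langle \vx^{(1)}, \vx^{(2)}\rangle$, only bi-degree $(k,k)$ components are populated, and the Gegenbauer coefficients of $g$ place a constant $L^2$ mass at degrees $k = \Theta(d)$. Each neuron $[\vw^\T \vx + b]_+$, viewed on $\gX_d$, depends only on the two scalars $\langle \vw^{(1)}, \vx^{(1)}\rangle$ and $\langle \vw^{(2)}, \vx^{(2)}\rangle$, and its projection onto the bi-degree $(k,k)$ subspace can be bounded via the Funk--Hecke formula together with Gegenbauer-coefficient estimates for the ReLU. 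Under the weight constraint $\|\bm\phi\|_\infty \le 2^d$, matching the $L^2$ mass of $f_d$ at degree $\Theta(d)$ within $10^{-4}$ then forces the number of neurons to be $2^{\Omega(d\log d)}$.

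The main technical obstacle is obtaining a sufficiently sharp bound on the bi-degree $(k,k)$ harmonic mass of a single ReLU neuron, uniformly over weights bounded by $2^d$. The naive per-neuron $L^2$ bound is too weak; to reach $2^{\Omega(d\log d)}$ rather than a weaker $2^{\Omega(d)}$ one has to exploit that the harmonic expansion of a ridge factorizes along the two spheres independently and that only a tiny fraction of ridge directions can simultaneously align with any particular high-degree bi-harmonic. Quantifying the Gegenbauer-coefficient decay of the ReLU with the correct dimensional dependence, and converting pointwise coefficient decay into an $L^2$-approximation lower bound, is the delicate step; the remaining ingredients reduce to standard polarization-plus-interpolation constructions and orthogonality arguments.
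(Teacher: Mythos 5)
The paper does not actually prove this lemma: it is quoted directly from \cite{daniely2017depth}, and the paper's own technical work (Appendix A.3, Lemmas A.5--A.9) only adapts that argument to a sawtooth profile $\psi_{3d}(\sqrt d\,\cdot)$ so as to extract explicit constants. Measured against Daniely's original proof, your sketch follows the same high-level plan: a rapidly oscillating function of $\langle\vx^{(1)},\vx^{(2)}\rangle$, a polarization-plus-piecewise-linear depth-3 upper bound, and a spherical-harmonics lower bound for depth 2. The depth-3 half of your sketch is essentially correct and matches the construction the paper carries out in Lemmas A.11--A.13.

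On the lower bound, you single out ``quantifying the Gegenbauer-coefficient decay of the ReLU with the correct dimensional dependence'' as the delicate step, and this is a misidentification of where the difficulty lies. Daniely's lower bound (his Theorem 4, restated in the paper as Lemma A.6) needs no decay estimate for the ReLU and in fact holds for arbitrary bounded activations. The mechanism is that the projection of a single ridge $\sigma(\vw^\top\vx+b)$ onto the bi-degree $(k,k)$ subspace of $L^2(\gX_d)$ is, up to normalization, a rank-one tensor (a product of a function of $\langle\vw^{(1)},\vx^{(1)}\rangle$ and a function of $\langle\vw^{(2)},\vx^{(2)}\rangle$), whereas the $(k,k)$ component of $f_d$ is proportional to the zonal kernel $\sum_{j} Y_{k,j}(\vx^{(1)})Y_{k,j}(\vx^{(2)})$, an object of ``full rank'' $N_{d,k}$. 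A single Cauchy--Schwarz step against this kernel then produces the suppression $\|h_i\|_{L^2}/\sqrt{N_{d,n}}$ per neuron, with $\|h_i\|_{L^2}$ the raw $L^2$ norm of that neuron. The only place a univariate Gegenbauer computation is needed is to lower-bound $A_{d,n}(g)$, the distance of the profile $g$ to polynomials of degree $<n$ under the inner-product marginal $\mu_d$; that is about the oscillation of $g$, not about the ReLU.

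Two smaller points. Your choice $g(t)=\cos(Nt)$ with $N=\Theta(d)$ is in the right regime but the constant is load-bearing: $\mu_d$ concentrates at scale $1/\sqrt d$, and a Mehler--Heine/Hermite asymptotic places the Gegenbauer mass of $\cos(Nt)$ near degree $\approx N^2/d$, so you need the implied constant chosen so that this peak lies above $2d$ (the paper instead makes this explicit by using a sawtooth supported on $|t|\le 1/\sqrt d$ with $3d$ cycles and a root-counting argument, Lemma A.8). Finally, the hypothesis $\|\bm\phi\|_\infty\le 2^d$ enters exactly by bounding each per-neuron $L^2$ norm in the Daniely inequality by a $2^{O(d)}$ quantity, which is then dominated by $\sqrt{N_{d,\Theta(d)}}=2^{\Omega(d\log d)}$; this bookkeeping is the actual use of the weight bound and is worth spelling out.
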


However,  a width-based depth separation like the one above is not meaningful in the infinite-width setting. Instead, we consider whether a similar depth separation occurs in terms of the norm of the network (i.e., its representation cost). As a first result in this direction, \cite{ongie2019function} showed that there are functions in any input dimension $d$ with finite $R_3$ representation cost but infinite $R_2$ representation cost, in the sense that any sequence of depth-2 
networks converging pointwise to the target function on all of $\R^d$ must have unbounded representation cost. Yet, this left open whether there is still a depth separation in the representation costs required to \textit{approximate} the target to a given accuracy on a bounded domain, and if so, its dependence on input dimension $d$.
Here, we settle the question. In particular, we show the same function families that show depth separations in terms of width to approximate also demonstrate depth separations in terms of representation cost to approximate. 

A key tool in moving from separation in terms of width to separation in terms of representation cost is the following lemma, which says that depth-2 neural networks of any width can be well approximated by narrow networks having essentially the same representation cost (i.e., up to a small constant factor). The proof follows essentially the same sampling argument as in Barron's universal approximation theorem for depth-2 networks \citep{barron1993universal}; the details are given in Appendix \ref{sec:can approximate with narrow network with same R2 cost}. 
\begin{lemma}
\label{lem:can approximate with narrow network with same R2 cost}
For any $f \in \setofnns{2}$, $\varepsilon > 0$, and width $\width > \frac{3R_2(f)^2}{\varepsilon^2}$, there exists $f_{\bm\phi} \in \setofnns{2}$ having width $\width$ and $\norm{\bm\phi}^2_\infty \leq \norm{\bm\phi}^2_2 \leq 4 R_2(f)$ such that $\|f - f_{\bm\phi}\|_{L^2} \leq \varepsilon$.
\end{lemma}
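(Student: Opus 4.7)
The strategy is a Barron-type probabilistic sampling argument, adapted to the ReLU parametrization. I would begin by writing $f$ in a balanced normalized form. Fix a parametrization of $f$ with $\|\bm\phi\|^2/2$ within $\delta$ of $R_2(f)$, and use the $1$-homogeneity of the ReLU to rescale each hidden neuron so that $|a_k| = \|(\vw_k, b_k)\|$. Setting $c_k := |a_k|\,\|(\vw_k,b_k)\|$ and letting $c$ denote the output bias, AM--GM gives $\tfrac12(a_k^2 + \|(\vw_k,b_k)\|^2) = c_k$, so $C := \sum_k c_k$ satisfies $2C + c^2 \le 2R_2(f) + 2\delta$, and in particular $C \le R_2(f) + \delta$.

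\textbf{Sampling and error estimate.} Writing $(\tilde\vw_k, \tilde b_k) = (\vw_k, b_k)/\|(\vw_k,b_k)\|$ and $\sigma_k = \sign(a_k)$, the balanced representation becomes
\[
f(\vx) - c \;=\; C\cdot \E_{k\sim p}\!\left[\sigma_k \,[\tilde\vw_k^\top \vx + \tilde b_k]_+\right], \qquad p_k = c_k/C.
\]
I would draw $k_1,\ldots,k_\width$ i.i.d.\ from $p$ and set
\[
\tilde f(\vx) \;=\; c \;+\; \frac{C}{\width}\sum_{i=1}^{\width}\sigma_{k_i}\,[\tilde\vw_{k_i}^\top \vx + \tilde b_{k_i}]_+,
\]
a depth-$2$ width-$\width$ ReLU network. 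For every $\vx \in \gX_d$, Cauchy--Schwarz gives $|\tilde\vw_k^\top \vx + \tilde b_k| \le \|(\tilde\vw_k,\tilde b_k)\| \cdot \|(\vx,1)\| = \sqrt{3}$, so each summand has variance at most $3C^2$. The i.i.d.\ average satisfies $\E_{\bm\phi}[(f(\vx) - \tilde f(\vx))^2] \le 3C^2/\width$ pointwise, and Fubini yields $\E_{\bm\phi}\|f - \tilde f\|_{L^2}^2 \le 3C^2/\width$. Choosing $\delta$ small enough ensures $3C^2/\width < \varepsilon^2$ under the hypothesis $\width > 3R_2(f)^2/\varepsilon^2$, so some realization must satisfy $\|f - \tilde f\|_{L^2} \le \varepsilon$.

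\textbf{Norm bound and main obstacle.} In $\tilde f$ each output weight has magnitude $C/\width$ and each hidden neuron has unit input-side norm --- crucially these magnitudes do not depend on the sampled indices. Rebalancing each neuron via ReLU homogeneity to equate $|a_i|$ and $\|(\vw_i,b_i)\|$ contributes exactly $2C/\width$ to $\|\bm\phi\|^2$, so \emph{every} realization satisfies $\|\bm\phi\|_2^2 \le 2C + c^2 \le 2R_2(f) + 2\delta < 4R_2(f)$ for any sufficiently small $\delta$; the $\ell_\infty$ bound is immediate from $\|\bm\phi\|_\infty \le \|\bm\phi\|_2$. The main point of the argument is the choice of sampling distribution: weighting by the balanced masses $c_k$ (rather than, say, $|a_k|$ or $\|(\vw_k,b_k)\|$) makes the variance scale precisely like $R_2(f)^2$ --- which is what is needed for a width of order $R_2(f)^2/\varepsilon^2$ --- while keeping every rebalanced realization within the norm budget. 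A minor technicality is that $R_2(f)$ is an infimum that may not be attained exactly, so one works with a $\delta$-optimal parametrization and absorbs the slack into the factor $4$.
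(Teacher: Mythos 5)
Your proof is correct and uses essentially the same Maurey/Barron sampling argument as the paper: you normalize the hidden units, sample proportionally to the scale-invariant masses $|a_k|\,\|(\vw_k,b_k)\|$, and use a second-moment bound on the i.i.d.\ average, which is precisely the probabilistic proof of the Maurey lemma the paper cites. The only cosmetic differences are that you inline the probabilistic argument rather than invoking Maurey as a black box, and you use the balanced normalization $|a_k|=\|(\vw_k,b_k)\|$ rather than the paper's $\|(\vw_k,b_k)\|=1$, but the sampling measure and the norm-bookkeeping are the same.
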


Consider function families that we know require large widths to approximate with depth-2 networks, but can be well approximated with small width depth-3 networks with bounded weights. Functions in this family \textit{must} have large $R_2$ cost; otherwise, \Cref{lem:can approximate with narrow network with same R2 cost} 
would imply they can be approximated with a small width.
On the other hand, small width depth-3 networks with bounded weights must have low $R_3$ cost. In particular, a family of depth-3 networks whose width is $\mathrm{poly}(d)$ and weight magnitudes are $\mathrm{poly}(d)$ must have $R_3$ cost at most $\mathrm{poly}(d)$. Therefore, a depth separation in width to approximate should also imply a depth separation in representation cost to approximate.

Applying the above argument to the family of functions identified \Cref{lem:Daniely}, we arrive at the following result, which is proved in Appendix \ref{sec:can approximate with narrow network with same R2 cost}:
\begin{corollary}
\label{cor:depth separation in norm to approximate}
There exists a family of functions $\{f_d\}_{d=1}^\infty \subset L^2(\gX_d)$ such that each $f_d$ can be $\varepsilon$-approximated in $L^\infty$-norm by a depth-three network $\tilde{f}_d \in \gN_3$ with $R_3(\tilde{f}_d) = O(\mathrm{poly}(d)/\varepsilon)$, yet to approximate $f_d$ by a depth-two network $\hat{f}_d \in \gN_2$ to constant accuracy in $L^2$-norm requires $R_2(\hat{f}_d) = 2^{\Omega(d\log(d))}$.
\end{corollary}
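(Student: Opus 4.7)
The corollary has two parts: a depth-3 upper bound on the representation cost of some approximant $\tilde{f}_d$, and a matching depth-2 lower bound on the representation cost of any approximant $\hat{f}_d$. The plan is to combine the two constructions already available, namely \Cref{lem:Daniely} (width separation) and \Cref{lem:can approximate with narrow network with same R2 cost} (narrow-network approximation of small-$R_2$ functions). For the depth-3 direction, I would simply take $\tilde{f}_d = \tilde{f}_{\bm\phi}$ to be the network guaranteed by \Cref{lem:Daniely}, which satisfies $\|f_d - \tilde{f}_{\bm\phi}\|_{L^\infty} < \varepsilon$, has $O(\mathrm{poly}(d)/\varepsilon)$ neurons per hidden layer, and $\|\bm\phi\|_\infty = O(\mathrm{poly}(d))$. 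Since the total parameter count of a depth-3 network is bounded by a polynomial in its layer widths, we get $\|\bm\phi\|_2^2 \leq (\text{number of parameters}) \cdot \|\bm\phi\|_\infty^2 = O(\mathrm{poly}(d)/\varepsilon)$, so $R_3(\tilde{f}_d) \leq \|\bm\phi\|_2^2/3 = O(\mathrm{poly}(d)/\varepsilon)$, as claimed.

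For the depth-2 direction I would argue by contrapositive: I would show that if $\|f_d - \hat{f}_d\|_{L^2} \leq \tfrac{1}{2}\cdot 10^{-4}$ then $R_2(\hat{f}_d) = 2^{\Omega(d\log d)}$. Applying \Cref{lem:can approximate with narrow network with same R2 cost} to $\hat{f}_d$ with target accuracy $\tfrac{1}{2}\cdot 10^{-4}$ produces a depth-2 network $f_{\bm\psi}$ of width $\omega = O(R_2(\hat{f}_d)^2)$ with $\|\bm\psi\|_\infty \leq 2\sqrt{R_2(\hat{f}_d)}$ and $\|\hat{f}_d - f_{\bm\psi}\|_{L^2} \leq \tfrac{1}{2}\cdot 10^{-4}$. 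By the triangle inequality, $\|f_d - f_{\bm\psi}\|_{L^2} < 10^{-4}$, so whenever the weight bound $\|\bm\psi\|_\infty \leq 2^d$ required by \Cref{lem:Daniely} holds, that lemma forces $\omega \geq 2^{\Omega(d\log d)}$, and hence $R_2(\hat{f}_d)^2 \geq 2^{\Omega(d\log d)}$, yielding $R_2(\hat{f}_d) \geq 2^{\Omega(d\log d)}$.

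The main obstacle I anticipate is reconciling the two weight bounds, since \Cref{lem:can approximate with narrow network with same R2 cost} only guarantees $\|\bm\psi\|_\infty \leq 2\sqrt{R_2(\hat{f}_d)}$, which can exceed the $2^d$ threshold in \Cref{lem:Daniely} once $R_2(\hat{f}_d) > 2^{2d-2}$. I would handle this with a case split: if $R_2(\hat{f}_d) \leq 2^{2d-2}$ the argument above applies as stated, while if $R_2(\hat{f}_d) > 2^{2d-2}$ we already have an exponential lower bound for free, and the remaining $d\log d$ gap can be closed either by invoking a slightly strengthened version of \Cref{lem:Daniely} (the underlying Eldan--Shamir/Daniely technique tolerates infinity-norm weight bounds that are any fixed polynomial in $2^d$) or by a simple rescaling between the inner and outer layers of $f_{\bm\psi}$ that reduces $\|\bm\psi\|_\infty$ at the cost of only a constant-factor increase in the width bound. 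The rest is routine bookkeeping of constants in the $2^{\Omega(d\log d)}$ conclusion.
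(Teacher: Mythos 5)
Your proposal takes the same route as the paper's proof: the depth-3 upper bound is read off from \Cref{lem:Daniely} by counting parameters, and the depth-2 lower bound runs \Cref{lem:can approximate with narrow network with same R2 cost} through the width lower bound of \Cref{lem:Daniely}, using the triangle inequality to transfer the approximation. The only difference is presentational: the paper argues by contradiction, assuming $R_2(\hat f_d)$ is subexponential so that the weight condition $\|\bm\psi\|_\infty\leq 2^d$ of \Cref{lem:Daniely} holds automatically for large $d$, and concludes $R_2(\hat f_d)$ is ``exponential''; you argue by contrapositive with an explicit case split on whether $R_2(\hat f_d)\leq 2^{2d-2}$.

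Your obstacle paragraph correctly isolates a real subtlety that the paper's proof glosses over: as written, the contradiction argument only yields ``$R_2(\hat f_d)$ is not subexponential,'' i.e.\ $2^{\Omega(d)}$, not the stated $2^{\Omega(d\log d)}$, precisely because once $R_2(\hat f_d)>2^{2d-2}$ the weight bound $2\sqrt{R_2(\hat f_d)}$ from \Cref{lem:can approximate with narrow network with same R2 cost} exceeds $2^d$ and \Cref{lem:Daniely} can no longer be invoked. Your proposed repair by rescaling is in the right direction but slightly misdirected: in the Maurey construction underlying \Cref{lem:can approximate with narrow network with same R2 cost}, the ReLU units are \emph{already} balanced with per-unit weight magnitude $\sqrt{A/\width}\leq\varepsilon\sqrt{2/(3R_2(\hat f_d))}\leq 1$; the only offending parameter is the outer bias $c$, which can be as large as $2\sqrt{R_2(\hat f_d)}$. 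The fix is not to rescale but to split $c$ across $O(\sqrt{R_2(\hat f_d)})$ additional constant ReLU units (zero inner weight, bias one, outer weight $c/m$ each), which is negligible next to the dominant width $\width=\Theta(R_2(\hat f_d)^2/\varepsilon^2)$. After this every parameter is $O(1)\leq 2^d$, \Cref{lem:Daniely} applies unconditionally to give $\width=2^{\Omega(d\log d)}$, and hence $R_2(\hat f_d)^2=\Omega(\width)=2^{\Omega(d\log d)}$, recovering the full exponent without any strengthening of \Cref{lem:Daniely}.
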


While mathematically interesting, this type of norm-based depth separation in approximation does not immediately imply anything about learning with norm-controlled networks, e.g., whether there is also a depth separation in the sample complexity needed for good generalization. In the remainder of this paper, we close this gap and show that a norm-based depth separation in approximation also implies a depth separation in sample complexity for norm-based learning rules.

\section{Infinite-Width Norm-Based Learning Rules}
\label{sec:learning rule def}

We consider learning using the representational cost $R_L(f)$ as an inductive bias (i.e.,~complexity measure).  Following the Structural Risk Minimization principle, we consider learning rules minimizing some combination of the empirical risk $\emloss{S}{f}$ and the representational cost $R_L(f)$:
\begin{equation}   
\label{eq:general bicriterion minimization problem}
    \min_{f \in \gNbar_L} 
    \left(\emloss{S}{f}, R_L(f)\right).
\end{equation}
More specifically, we consider any learning rule returning a Pareto optimal point for the bi-criteria problem \eqref{eq:general bicriterion minimization problem}.  This includes any minimizer of the regularized risk 
    \begin{equation}
    \label{eq:L-layer RERM}
        \min_{f\in \gNbar_L} \emloss{S}{f} + \lambda R_L(f)
    \end{equation}
for any $\lambda > 0$, where recall that \eqref{eq:L-layer RERM} is equivalent to seeking an unbounded width network and regularizing the norm of the weights, as in \eqref{eq:regphi}.  We denote the set of all Pareto optimal points of \eqref{eq:general bicriterion minimization problem} (i.e.~the ``Pareto frontier'' or ``regularization path'', and including all minimizers of \eqref{eq:L-layer RERM}---see Figure \ref{fig:learning rules} for a visualization of the Pareto frontier and the learning rules considered)
by $\pareto{L}$. Similarly, we use $\pareto{L,\width}$ to denote the Pareto frontier of the bounded-width version of this problem:
\begin{equation}
\label{eq:L-layer bicriterion minimization problem}
    \min_{f\in \setofnns{L,\width}} 
    \left(\emloss{S}{f}, \Repregbias{L}(f;\width)\right).
\end{equation}

Our goal is to separate between learning rules returning depth-2 Pareto optimal points in $\pareto{2}$ and those returning depth-3 Pareto optional points in $\pareto{3}$.  To make such a rule concrete, one still needs to choose {\em which} Pareto optimal point to return, e.g. choosing a value of $\lambda$ in \eqref{eq:L-layer RERM}.  In order to show separation, we compare the best possible depth-2 rule with a concrete depth-3 rule, showing that a concrete depth-3 rule ``succeeds'', but even the best possible depth-2 rule, and hence any rule returning a depth-2 Pareto optimal point, will ``fail''.

To obtain upper bounds (i.e.,~show learning is easy) we consider the following concrete rule, where the point on the frontier is specified by a threshold $\theta$, as well as its finite-precision relaxations:
\begin{definition}
 \label{def:practical learning rules}
     Given $\theta \ge 0$, define $\realrule{L}$ to be a learning rule which, given training samples $S$, selects an $L$-layer network such that $\emloss{S}{\realrule{L}(S)} \le \theta$ and 
    \begin{equation}
    \label{eq:rep cost is minimial}
        \Repregbias{L}(\realrule{L}(S)) = \inf_{\substack{f \in \gNbar_L \\ \emloss{S}{f} \le \theta}} \Repregbias{L}(f).
    \end{equation}
Given $\alpha \ge 1$, define $\alpharealrule{L}$ to be a learning rule which selects an $L$-layer network such that $\emloss{S}{\alpharealrule{L}(S)} \le \alpha \theta$ and 
    \begin{equation}
    \label{eq:rep cost is alpha-minimial}
        \Repregbias{L}(\alpharealrule{L}(S)) \le \alpha \inf_{\substack{f \in \gNbar_L \\ \emloss{S}{f} \le \theta}} \Repregbias{L}(f).
    \end{equation}
    Similarly, define a bounded width version $\alpharealrule{L,\width}$, to be a learning rule that selects an $L$-layer network of hidden width at most $\width$ such that $\emloss{S}{\alpharealrule{L,\width}(S)} \le \alpha \theta$  and
    \begin{equation}
    \label{eq:bounded width rep cost is alpha-minimial}
        \Repregbias{L}(\alpharealrule{L,\width}(S);\width) \le \alpha \inf_{\substack{f \in \setofnns{L,\width} \\ \emloss{S}{f} \le \theta}} \Repregbias{L}(f;\width).
    \end{equation}
 \end{definition}
 
    The output of $\alpharealrule{L}$ is $\alpha$-close to $\realrule{L}$, which lies on the Pareto frontier. However, we do not require exact Pareto optimality for $\alpharealrule{L}$. See \Cref{fig:learning rules} for a visualization of possible outputs of $\realrule{L}$ and $\alpharealrule{L}$ in relation to the Pareto frontier. 

On the other hand, to prove lower bounds (i.e., argue learning is hard) we consider the following ``ideal'' rule, which ``cheats'' and chooses the Pareto optimal point minimizing the population error, and is thus better than any other rule returning Pareto optimal points:

\begin{definition}
\label{def:idealized learning rule}
    We define $\idealrule{L}$ to be the learning rule which, given training samples $S$, selects the $L$-layer network that minimizes the population loss $\poploss_{\xydist_d}$ over the set $\pareto{L}$ of all Pareto optimal functions for the bicriterion 
    minimization problem in \Cref{eq:general bicriterion minimization problem}. That is, given training samples $S$,
        \begin{equation}
            \idealrule{L}(S) \in \argmin_{f \in \pareto{L}} \poploss_{\xydist_d}(f).
        \end{equation}
    Similarly, we define $\idealrule{L,\width}$ to be the bounded-width version of this idealized rule; 
        \begin{equation}
            \idealrule{L,\width}(S) \in \argmin_{f \in \pareto{L,\width}} \poploss_{\xydist_d}(f).
        \end{equation}
\end{definition}
Strictly speaking, $\idealrule{L}$ is not a learning rule because it depends on knowledge of the true target distribution instead of just samples from that distribution.  It instead can be thought of as an oracle learning rule, based on side knowledge, and thus a lower bound on any learning rule returning Pareto optimal points in $\pareto{L}$.

\begin{remark}
    For $L=2$, \cite{parhi2021banach, unser2023ridges} show the infimum in \Cref{eq:rep cost is minimial} is attained.
    For $L > 2$, it is an open question whether this infimum is attained. If it is not, one can choose a value of $\alpha$ arbitrarily close to 1 and consider $\alpharealrule{L}$ instead of $\realrule{L}$, for which our results still hold.
    For $\alpharealrule{L,\width}(S)$ to exist, we also need $\width$ to be sufficiently large. For example, it suffices that $\width$ is large enough for interpolation of the samples to be possible (see, e.g.,  \cite{yun2019small}).
    It is also possible that the argmins in \Cref{def:idealized learning rule} are not attained. While we state the definition in terms of minimizing the population loss, our results hold even if $\idealrule{L}$ is replaced by any rule that outputs a function on the Pareto frontier.
\end{remark}

    In our main results, we equip Alice with $\idealrule{2}$ to give her the best possible choice of learning with a depth-2 network. 
    However, we allow Bob to use the weaker learning rule $\realrule{3}$ or even $\alpharealrule{3}$.

\begin{figure}[ht!]
    \centering
      \tikzset{every picture/.style={line width=0.75pt}}

\begin{tikzpicture}[x=0.75pt,y=0.75pt,yscale=-1,xscale=1]

\draw  [draw opacity=0][fill={rgb, 255:red, 208; green, 2; blue, 27 }  ,fill opacity=0.33 ][dash pattern={on 0.75pt off 7500pt}] (192.02,162.69) .. controls (170.41,162.69) and (157.03,158.08) .. (142.63,123.53) .. controls (128.22,88.97) and (155.2,40.87) .. (167.2,34.87) .. controls (179.2,28.87) and (200.2,22.87) .. (218.2,22.87) .. controls (236.2,22.87) and (245.2,26.87) .. (266.2,35.87) .. controls (287.2,44.87) and (286.2,50.87) .. (295.2,67.87) .. controls (304.2,84.87) and (306.2,122.87) .. (306.2,132.87) .. controls (306.2,142.87) and (301.7,180.07) .. (294.37,180.07) .. controls (287.04,180.07) and (269.7,169.02) .. (261.98,162.69) .. controls (254.27,156.35) and (242.44,154.63) .. (231.12,156.93) .. controls (219.8,159.23) and (213.63,162.69) .. (192.02,162.69) -- cycle ;
\draw[draw opacity=0.2]    (155.13,147.11) -- (155.13,190) ;
\draw[draw opacity=0.2]   (154.23,147.11) -- (132,147.11) ;
\draw [color={rgb, 255:red, 0; green, 0; blue, 0 }  ,draw opacity=1 ][line width=1.5]  (119.57,180.82) -- (358.57,180.82)(139.57,35.62) -- (139.57,200.67) (351.57,175.82) -- (358.57,180.82) -- (351.57,185.82) (134.57,42.62) -- (139.57,35.62) -- (144.57,42.62)  ;
\draw [color={rgb, 255:red, 65; green, 117; blue, 5 }  ,draw opacity=0.33 ][line width=1.5]  [dash pattern={on 1.69pt off 2.76pt}]  (95.14,66.81) -- (213.46,226.11) ;
\draw [color={rgb, 255:red, 65; green, 117; blue, 5 }  ,draw opacity=1 ][line width=1.5]    (329.57,131.33) -- (345.32,120) ;
\draw [shift={(348.57,117.67)}, rotate = 144.27] [fill={rgb, 255:red, 65; green, 117; blue, 5 }  ,fill opacity=1 ][line width=0.08]  [draw opacity=0] (6.97,-3.35) -- (0,0) -- (6.97,3.35) -- cycle    ;
\draw  [color={rgb, 255:red, 65; green, 117; blue, 5 }  ,draw opacity=1 ][line width=1.5]  (334.24,128.62) -- (337.24,132.67) -- (333.57,135.38) -- (330.57,131.33) -- cycle ;
\draw [color={rgb, 255:red, 144; green, 19; blue, 254 }  ,draw opacity=1 ]   (165.35,124.47) -- (132,124.47) ;
\draw [color={rgb, 255:red, 208; green, 2; blue, 27 }  ,draw opacity=1 ][line width=1.5]    (294.37,180.07) .. controls (284.37,180.07) and (269.7,169.02) .. (261.98,162.69) ;
\draw [color={rgb, 255:red, 208; green, 2; blue, 27 }  ,draw opacity=1 ][line width=1.5]    (192.02,162.69) .. controls (170.41,162.69) and (157.03,158.08) .. (142.63,123.53) .. controls (140.9,119) and (140.9,114.33) .. (140.9,114.33) ;

\draw  [color={rgb, 255:red, 0; green, 0; blue, 0 }  ,draw opacity=1 ][fill={rgb, 255:red, 0; green, 0; blue, 0 }  ,fill opacity=1 ] (270.71,162.95) -- (271.91,167.77) -- (277.19,167.35) -- (272.65,169.9) -- (274.72,174.45) -- (270.71,171.21) -- (266.71,174.45) -- (268.77,169.9) -- (264.24,167.35) -- (269.52,167.77) -- cycle ;
\draw  [color={rgb, 255:red, 208; green, 2; blue, 27 }  ,draw opacity=1 ][fill={rgb, 255:red, 255; green, 255; blue, 255 }  ,fill opacity=1 ][line width=0.75]  (261.01,162.69) .. controls (261.01,163.23) and (261.45,163.66) .. (261.98,163.66) .. controls (262.52,163.66) and (262.96,163.23) .. (262.96,162.69) .. controls (262.96,162.15) and (262.52,161.72) .. (261.98,161.72) .. controls (261.45,161.72) and (261.01,162.15) .. (261.01,162.69) -- cycle ;
\draw [color={rgb, 255:red, 144; green, 19; blue, 254 }  ,draw opacity=1 ]   (165.35,124.47) -- (165.35,190) ;
\draw  [draw opacity=0][fill={rgb, 255:red, 144; green, 19; blue, 254 }  ,fill opacity=0.33 ] (165.35,124.47) -- (165.35,156.56) .. controls (157.35,151.29) and (150.31,141.68) .. (143.03,124.47) -- (165.35,124.47) -- cycle ;
\draw [color={rgb, 255:red, 65; green, 117; blue, 5 }  ,draw opacity=0.33 ][line width=1.5]  [dash pattern={on 1.69pt off 2.76pt}]  (136.14,39.03) -- (254.46,198.33) ;
\draw [color={rgb, 255:red, 65; green, 117; blue, 5 }  ,draw opacity=0.33 ][line width=1.5]  [dash pattern={on 1.69pt off 2.76pt}]  (184.14,15.03) -- (302.46,174.33) ;
\draw [color={rgb, 255:red, 65; green, 117; blue, 5 }  ,draw opacity=0.33 ][line width=1.5]  [dash pattern={on 1.69pt off 2.76pt}]  (231.14,-0.97) -- (349.46,158.33) ;
\draw  [color={rgb, 255:red, 0; green, 0; blue, 0 }  ,draw opacity=1 ][fill={rgb, 255:red, 0; green, 0; blue, 0 }  ,fill opacity=1 ][line width=0.75]  (152.24,147.11) .. controls (152.24,148.71) and (153.53,150) .. (155.13,150) .. controls (156.72,150) and (158.02,148.71) .. (158.02,147.11) .. controls (158.02,145.52) and (156.72,144.22) .. (155.13,144.22) .. controls (153.53,144.22) and (152.24,145.52) .. (152.24,147.11) -- cycle ;
\draw [color={rgb, 255:red, 144; green, 19; blue, 254 }  ,draw opacity=1 ]   (171.89,118.81) .. controls (178.88,123.3) and (181.88,128.8) .. (179.88,131.8) .. controls (178.15,134.39) and (167.44,136.05) .. (162.31,136.78) ;
\draw [shift={(160.38,137.05)}, rotate = 351.87] [color={rgb, 255:red, 144; green, 19; blue, 254 }  ,draw opacity=1 ][line width=0.75]    (10.93,-3.29) .. controls (6.95,-1.4) and (3.31,-0.3) .. (0,0) .. controls (3.31,0.3) and (6.95,1.4) .. (10.93,3.29)   ;

\draw (115,14.4) node [anchor=north west][inner sep=0.75pt]    {$\Repregbias{L}(f)$};
\draw (363.9,171.73) node [anchor=north west][inner sep=0.75pt]    {$\emloss{S}{f}$};
\draw (149,190) node [anchor=north west][inner sep=0.75pt]  [font=\normalsize]  {$\theta$};
\draw (158,190) node [anchor=north west][inner sep=0.75pt]  [font=\normalsize,color={rgb, 255:red, 144; green, 19; blue, 254 }  ,opacity=1]  {$\alpha \theta$};
\draw (133,141) node [anchor=north east][inner sep=0.75pt]    {$\inf\limits_{\emloss{S}{f} \le \theta} \Repregbias{L}(f)$};
\draw (133,114) node [anchor=north east][inner sep=0.75pt]  [color={rgb, 255:red, 144; green, 19; blue, 254 }  ,opacity=1 ]  {$\alpha  \inf\limits_{\emloss{S}{f} \le \theta} \Repregbias{L}(f)$};
\draw (265,145) node [anchor=north west][inner sep=0.75pt]  [color={rgb, 255:red, 0; green, 0; blue, 0 }  ,opacity=1 ]  {$\idealrule{L}$};
\draw (145,95) node [anchor=north west][inner sep=0.75pt]  [color={rgb, 255:red, 144; green, 19; blue, 254 }  ,opacity=1 ]  {$\alpharealrule{L}$};
\draw (140,151.5) node [anchor=north west][inner sep=0.75pt]  [color={rgb, 255:red, 0; green, 0; blue, 0 }  ,opacity=1 ]  {$\realrule{L}$};
\draw (350.47,112.45) node [anchor=north west][inner sep=0.75pt]  [font=\tiny,color={rgb, 255:red, 65; green, 117; blue, 5 }  ,opacity=1 ]  {$\begin{bmatrix}
1\\
\lambda 
\end{bmatrix}$};

\end{tikzpicture}
      \caption{ 
         \emph{Visualization of $\realrule{L}(S)$,  $\alpharealrule{L}(S)$, and $\idealrule{L}(S)$.}
          The red shaded area represents the set of possible values of $\left(\emloss{S}{f}, \Repregbias{L}(f)\right)$ where $f$ is represented by an $L$-layer network. 
          The red curves form the Pareto frontier $\pareto{L}$.
          Minimizing the population loss $\poploss_{\xydist_d}$ over the Pareto frontier yields $\idealrule{L}(S)$, represented by the star. 
          In green is the vector $[1,\lambda]^\top$ and lines normal to it. These normal lines form level sets of $\emloss{S}{f} + \lambda \Repregbias{L}(f)$. 
          Notice the black dot on the Pareto frontier, which represents $\realrule{L}(S)$. The output of $\realrule{L}(S)$ corresponds to $\min_{f\in \setofnns{L}} \emloss{S}{f} + \lambda \Repregbias{L}(f).$
          The purple shaded region shows the possible outputs of $\alpharealrule{L}(S)$, 
          which are all $\alpha$-close to
          $\realrule{L}(S)$.
          \label{fig:learning rules}}
\end{figure}

\section{Main Results: Norm-Based Depth Separation in Learning}
\label{sec:main results}
We now state our two main theorems. 
\Cref{thm:depth separation} says that there is a family of functions that $\realrule{3}$ (i.e., Bob) can learn with sample complexity that is polynomial in $d$ but $\idealrule{2}$ (i.e., Alice) needs the number of samples to grow exponentially with $d$ in order to learn.
\Cref{thm:no reverse depth separation} ensures that the reverse does not occur; families of distributions that $\idealrule{2}$ can learn with polynomial sample complexity can also be learned with polynomial sample complexity using $\realrule{3}$. Both results still hold even when we relax Bob's depth-3 learning rule from $\realrule{3}$ to $\alpharealrule{3}$. For ease of presentation, we consider $\alpha$ to be a small constant, e.g., $\alpha = 2$.
\begin{theorem}[Depth Separation in Learning]
\label{thm:depth separation}
    There is a family of distributions $(\xydist_d)_{d=2}^\infty$ on $\gX_d \times [-1,1]$ defined as
    $
        \vx \sim \Unif(\gX_d)$ and 
    $
        y|\vx = f_d(\vx)
    $
    for some function $f_d: \gX_d \rightarrow [-1,1]$
    such that the following holds.
    \begin{enumerate}
        \item There are real numbers $d_0 >0$ and $C_1 > 0$, such that 
        if $d > d_0$ and  
        $|S| < 2^{C_1d}$, then 
        $
             \E_{S}[\poploss_{\xydist_d}(\idealrule{2}(S))]
            \ge 0.0001
        $.
        
        \item 
        For all $\varepsilon,\delta > 0$, if
        $
            \theta = \frac{\varepsilon}{4}
        $ and 
        $
            |S| > O\left(\frac{d^{15} \log(1/\delta)}{\varepsilon^2}\right),
        $
        then
        $
            \poploss_{\xydist_d}(\realrule{3}(S)) \le \varepsilon
        $
        with probability at least $1-\delta$.
        Furthermore, with a fixed constant $\alpha \ge 1$,
        $
            \poploss_{\xydist_d}(\alpharealrule{3}(S)) \le \varepsilon
        $ 
        with probability at least $1-\delta$ where now the big-$O$ suppresses a constant that depends on $\alpha$.
    \end{enumerate}
\end{theorem}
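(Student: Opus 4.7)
The plan is to take the target family $f_d$ to be the Daniely family from \Cref{lem:Daniely} (rescaled if necessary so values lie in $[-1,1]$), for which \Cref{cor:depth separation in norm to approximate} already supplies the two ingredients we need: every depth-$2$ approximant of $f_d$ to a prescribed constant $L^2$ error requires $R_2 \ge 2^{\Omega(d\log d)}$, while for each $\varepsilon'>0$ there is a depth-$3$ approximant $\tilde f_d \in \gN_3$ with $\|\tilde f_d - f_d\|_{L^\infty}\le \varepsilon'$ and $R_3(\tilde f_d)=O(\mathrm{poly}(d)/\varepsilon')$. I would prove the two parts in opposite order, starting with the easier upper bound.

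\textbf{Part 2 (depth-$3$ upper bound).} Given $\varepsilon$, pick $\tilde f_d$ as above with $\varepsilon' = \sqrt{\varepsilon}/2$, so that $\emloss{S}{\tilde f_d}\le \varepsilon/4 = \theta$ for every sample $S$ and $B := R_3(\tilde f_d) = \mathrm{poly}(d)/\sqrt{\varepsilon}$. Since $\realrule{3}$ minimizes $R_3$ subject to $\emloss{S}{\cdot}\le\theta$, the output $\hat g := \realrule{3}(S)$ satisfies $R_3(\hat g)\le B$. Applying the norm-based Rademacher complexity bound of Neyshabur et al.\ to $\{f\in\gN_3 : R_3(f)\le B\}$, and clipping outputs to $[-1,1]$ to make the squared loss $O(1)$-Lipschitz, yields a uniform generalization gap of $O(\mathrm{poly}(d)/(\sqrt{\varepsilon}\,\sqrt{m}))$ with probability $1-\delta$. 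Choosing $m\gtrsim d^{15}\log(1/\delta)/\varepsilon^2$ makes this gap at most $3\varepsilon/4$, whence $\poploss_{\xydist_d}(\hat g) \le \emloss{S}{\hat g} + 3\varepsilon/4 \le \theta + 3\varepsilon/4 = \varepsilon$. The $\alpharealrule{3}$ statement follows by the same argument with $\theta$ and $B$ inflated by $\alpha$, which only affects the constant hidden in the big-$O$.

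\textbf{Part 1 (depth-$2$ lower bound).} The heart of this part is to bound $R_2$ uniformly over the Pareto frontier $\pareto{2}$. I would exhibit, for each sample $S=\{(\vx_i,y_i)\}_{i=1}^m$ in a ``good event'' of probability $\ge 1-e^{-\Omega(d)}$, an explicit depth-$2$ interpolant $g_S\in\gN_2$ with $g_S(\vx_i)=y_i$ and $R_2(g_S)\le p(m,d)$ for a fixed polynomial $p$. The construction is a sum of $m$ localized ridge ``bumps'', one per sample: attach for each $i$ a single hidden ReLU of the form $(y_i/\eta_i)\,[\vx_i^\top \vx-(2-\eta_i)]_+$ (noting $\vx_i^\top\vx_i = 2$ on $\gX_d$), with $\eta_i$ chosen just below $\min_{j\ne i}(2-\vx_i^\top\vx_j)$ so the bump vanishes at every other sample. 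A standard concentration argument shows that under $\vx_i\sim\Unif(\gX_d)$ with $m\le 2^{C_1 d}$ and $C_1$ a small constant, $\max_{i\ne j}|\vx_i^\top\vx_j| = O(\sqrt{C_1})$ with probability $\ge 1-e^{-\Omega(d)}$, so one may take $\eta_i = \Omega(1)$, giving $R_2(g_S) = O(m)$. Because $g_S$ achieves $\emloss{S}{g_S}=0$, every Pareto-optimal point must satisfy $R_2\le R_2(g_S)$; in particular $R_2(\idealrule{2}(S))\le O(m)\le O(2^{C_1 d})$. For fixed $C_1$ and $d>d_0$ large enough this is strictly below the threshold $2^{\Omega(d\log d)}$ of \Cref{cor:depth separation in norm to approximate} applied with target $L^2$ accuracy slightly above $10^{-2}$, which forces $\|f_d-\idealrule{2}(S)\|_{L^2}^2 > 10^{-4}$ on the good event. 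The bad event contributes at most $e^{-\Omega(d)}$ to the expectation, negligible relative to $10^{-4}$, so $\E_S[\poploss_{\xydist_d}(\idealrule{2}(S))]\ge 0.0001$.

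\textbf{Main obstacle.} The principal technical difficulty is the interpolant construction. The single-ridge-per-sample recipe controls $R_2$ well when the pairwise gaps are $\Omega(1)$, but if two samples are very close, the corresponding bumps must be extremely sharp, which drives $y_i/\eta_i$ and hence $R_2$ up like $1/\eta_i^2$. The clean way out is the concentration argument above, which shows that on $\gX_d$ the minimum gap stays $\Omega(1)$ throughout the regime $m\le 2^{C_1 d}$ (with $C_1$ small) except on a set of probability $e^{-\Omega(d)}$. Once the interpolant bound $R_2(g_S)\le \mathrm{poly}(m,d)$ is in hand, everything else---Pareto optimality of $\idealrule{2}(S)$, plugging into \Cref{cor:depth separation in norm to approximate}, and the depth-$3$ Rademacher bound---is essentially mechanical from the tools the paper has already developed.
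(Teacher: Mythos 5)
Your high-level architecture---explicit depth-2 interpolant $+$ Pareto optimality $+$ Rademacher complexity---matches the paper's proof exactly, and the concentration-based separation bound for Part~1 is a perfectly good alternative to the Beta-function bound the paper uses in \Cref{lem:whp samples are separated}. But there are two genuine gaps, both stemming from leaning too hard on \Cref{cor:depth separation in norm to approximate} rather than the sharper constructions the paper develops for this specific target.

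\textbf{Part 2.} You take the depth-3 approximant $\tilde f_d$ from \Cref{cor:depth separation in norm to approximate}, for which $R_3(\tilde f_d)=O(\mathrm{poly}(d)/\varepsilon')$. With $\varepsilon'=\sqrt{\varepsilon}/2$ this gives $B=R_3(\tilde f_d)=\mathrm{poly}(d)/\sqrt{\varepsilon}$, which \emph{grows} as $\varepsilon\to 0$. Plugging $B$ into the Rademacher bound (linear in $B^{3/2}$ after clipping, or $B^3$ in the paper's unclipped \Cref{lem:estimation error bound}) gives a generalization gap of $O(\mathrm{poly}(d)/(\varepsilon^{3/4}\sqrt{m}))$ at best, not the $O(\mathrm{poly}(d)/(\sqrt{\varepsilon}\sqrt{m}))$ you wrote. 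Forcing this below $\varepsilon$ requires $m\gtrsim \mathrm{poly}(d)/\varepsilon^{7/2}$ or worse, strictly weaker in $\varepsilon$ than the theorem's $O(d^{15}\log(1/\delta)/\varepsilon^2)$. The paper avoids this entirely: \Cref{lem:approximation of target with small error and cost} builds $f_{d,K}$ with $\|f_d-f_{d,K}\|_{L^\infty}=O(d^{5/2}/K)$ but $R_3(f_{d,K};\width_{d,K})=O(d^{5/2})$ \emph{independent of $K$}. The trick (visible in \Cref{lem:approximating the inner product}) is that the $K$ units discretizing the square function each carry an outer coefficient $\propto 1/K$, so the $\ell^1$ mass stays bounded as $K\to\infty$. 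This $\varepsilon$-independent $R_3$ bound is the missing ingredient; without it the claimed sample complexity does not follow.

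\textbf{Part 1.} You assert $\|f_d-\idealrule{2}(S)\|_{L^2}^2>10^{-4}$ by ``applying \Cref{cor:depth separation in norm to approximate} with target $L^2$ accuracy slightly above $10^{-2}$,'' but the corollary's proof only pins down the threshold at $\varepsilon/2=5\times 10^{-5}$ (inherited from $10^{-4}$ in \Cref{lem:Daniely}), which after squaring and halving via Markov gives $\E[\poploss]\gtrsim 10^{-9}$, far short of the stated $0.0001$. The paper sidesteps this by replacing the raw Daniely family with the explicit sawtooth composition $f_d(\vx)=\psi_{3d}(\sqrt{d}\langle\vx^{(1)},\vx^{(2)}\rangle)$ and proving \Cref{lem:need large R2 to approx in L2 norm}, which shows via the spherical-harmonics framework that any $f$ with $R_2(f)<2^{Cd}$ satisfies $\|f-f_d\|_{L^2}^2\ge \frac{1}{50e^2\pi^2}\approx 2.7\times 10^{-4}$, and then $\frac{1}{2}\cdot\frac{1}{50e^2\pi^2}>10^{-4}$ after Markov. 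Your argument proves a qualitative version of Part~1 (some positive constant), but not the numerical statement as written; the extra quantitative work in \Cref{lem:sawtooth far from polynomial} and \Cref{lem:spherical harmonics of order 2d in d dims are 2^d} is needed to pin the constant down.
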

\begin{theorem}[No Reverse Depth Separation in Learning]
\label{thm:no reverse depth separation}
    Consider a distribution $\xydist_d$ on $\gX_d \times [-1,1]$
    defined as
    $
        \vx \sim \Unif(\gX_d)
    $
    and 
    $
        y|\vx = f_d(\vx)
    $
    for some function $f_d: \gX_d \rightarrow [-1,1]$.
    Assume that there is 
    some sample complexity function $m_2(\varepsilon)$ such that $\E_{S}[\poploss_{\xydist_d}(\idealrule{2}(S))] \le \varepsilon$
    whenever $|S| \ge m_2(\varepsilon)$.
    
    For all $\varepsilon,\delta > 0$, if $\theta = \frac{\varepsilon}{4}$ and $|S| \ge m_3(\varepsilon,\delta)$,
    then $\poploss_{\xydist_d}(\realrule{3}(S)) \le \varepsilon$  with probability at least $1-\delta$,
    where the sample complexity $m_3$ is
    \begin{equation}
        m_3(\varepsilon,\delta) = 
        O\left(\varepsilon^{-2} \left(d + m_2\left(\frac{\varepsilon}{64}\right)^{\frac{d+3}{d-1}} \right)^6 \log{1/\delta}\right).
    \end{equation}
    Furthermore, with a fixed constant $\alpha \ge 1$, 
    $\poploss_{\xydist_d}(\alpharealrule{3}(S)) \le \varepsilon$  with probability at least $1-\delta$ where now the big-$O$ suppresses a constant that depends on $\alpha$.
    
    In particular, if we have a family of such distributions $(\xydist_d)_{d=2}^\infty$ and $m_2$ grows polynomially with $d$, then $m_3$ also grows polynomially with $d$.
\end{theorem}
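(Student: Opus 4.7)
The plan follows the template sketched after \Cref{thm:no reverse depth separation informal}: (i) use the depth-2 hypothesis to produce a single deterministic depth-2 network $g$ that approximates $f_d$ well in $L^2$, with a quantitative bound on $R_2(g)$; (ii) rewrite $g$ as a depth-3 network and bound $R_3(g)$ in terms of $R_2(g)$ and $d$; and (iii) invoke a Rademacher-based generalization bound for norm-bounded depth-3 ReLU networks to conclude that $\realrule{3}$ (or $\alpharealrule{3}$) succeeds with high probability on a fresh sample of the claimed size.

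For step (i), the hypothesis $\E_{S_0}[\poploss_{\xydist_d}(\idealrule{2}(S_0))]\le\varepsilon/64$ for $|S_0|\ge m_2(\varepsilon/64)$, combined with Markov's inequality, produces a sample $S_0$ for which $g:=\idealrule{2}(S_0)$ satisfies $\poploss_{\xydist_d}(g)\le\varepsilon/32$. Because $g$ lies on the depth-2 Pareto frontier, $R_2(g)\le R_2(\tilde f)$ for every $\tilde f\in\gN_2$ with $\emloss{S_0}{\tilde f}\le\emloss{S_0}{g}$; in particular, $R_2(g)$ is bounded above by the minimum $R_2$ cost of any depth-2 interpolant of $S_0$. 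An explicit interpolant construction tailored to the product-sphere domain $\gX_d$ (mirroring the one used in the proof of \Cref{thm:depth separation informal}) then yields $R_2(g)=O(m_2(\varepsilon/64)^{(d+3)/(d-1)})$.

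For step (ii), embed $g(\vx)=\vw^\top[\mW\vx+\vb]_+$ into a depth-3 network whose first hidden layer computes $([\vx]_+,[-\vx]_+)\in\R^{4d}$ (so that $\vx=[\vx]_+-[-\vx]_+$ is recovered as the pre-activation of the second layer), and whose remaining layers are inherited from $g$; summing the squared weights and rebalancing across the three layers via positive homogeneity of the ReLU gives $R_3(g)=O(d+R_2(g))$. For step (iii), combine the Frobenius-norm Rademacher bound $\prod_{\ell}\|\mW_\ell\|_F\le R_3(g)^{3/2}$ (AM-GM across the three layers) with the $O(R_3(g)^{3/2})$ Lipschitz constant of the squared loss on the sup-norm-$R_3(g)^{3/2}$ output range, giving a generalization gap of order $R_3(g)^3/\sqrt{|S|}$. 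Since $g$ is deterministic once $S_0$ is fixed and independent of the fresh sample $S$, the empirical loss $\emloss{S}{g}$ concentrates around $\poploss_{\xydist_d}(g)\le\varepsilon/32<\theta=\varepsilon/4$, so $g$ is an admissible competitor in the minimization defining $\realrule{3}(S)$, and hence $R_3(\realrule{3}(S))\le R_3(g)$. Forcing the generalization gap below $\varepsilon$ gives $|S|\ge O(R_3(g)^6\log(1/\delta)/\varepsilon^2)$, which upon substituting $R_3(g)=O(d+m_2(\varepsilon/64)^{(d+3)/(d-1)})$ recovers the stated $m_3$; the extension to $\alpharealrule{3}$ costs only a constant factor absorbed into the big-$O$.

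The main obstacle is step (i), specifically obtaining the sharp exponent $(d+3)/(d-1)$ in the $R_2$ interpolation bound on $\gX_d$. This is not a generic counting argument, but must exploit the geometry of $\bbS^{d-1}\times\bbS^{d-1}$ together with the Radon-transform-based characterization of $R_2$ from \citet{ongie2019function}; any looser exponent here would propagate through steps (ii) and (iii) and could break the ``polynomial-implies-polynomial'' conclusion at the end of the theorem.
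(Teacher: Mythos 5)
Your proposal follows essentially the same route as the paper's own proof. Step (i) corresponds to \Cref{lem:learning with two layers means approximate with small cost}, step (ii) to \Cref{lem:ub on R3b by R2b with identity layer}, and step (iii) to \Cref{lem:estimation error bound}. The paper inserts one additional step that you skip: a Maurey/Barron-type width reduction (\Cref{lem:can approximate with narrow network with same R2 cost}) applied to the depth-2 predictor before lifting it to depth-3. That step is not needed for the unbounded-width statement you are proving --- it exists only to control the hidden width required by the bounded-width rule $\alpharealrule{3,\width}$ discussed in \Cref{remark:bounded width results} --- so omitting it is fine here.

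One misdiagnosis is worth correcting. The $R_2$-interpolation bound with exponent $\frac{d+3}{d-1}$ does \emph{not} go through the Radon-transform characterization of $R_2$ from \citet{ongie2019function}, and does not require that machinery. The paper obtains it by an elementary construction (\Cref{lem:whp samples are separated,lem:tent interpolator exists,lem:tent interpolator is cheap}): with probability $1-\delta$ the $m$ training points on $\gX_d = \bbS^{d-1}\times\bbS^{d-1}$ are $\eta$-separated with $\eta \approx \delta^{1/(d-1)} m^{-2/(d-1)}$ (by integrating the known density of pairwise chordal distances on the sphere); each sample point $\vx_j$ is then assigned a slab $\{\vx : |\vv_j^\top\vx - \sqrt{2}| < \eta^2/(2\sqrt{2})\}$ with $\vv_j = \vx_j/\sqrt{2}$ that contains no other sample, and is fitted by a width-$3$ ``tent'' ridge function supported on that slab. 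Summing the $\ell^1$ outer-weight costs over the $m$ tents gives $R_2 = O(m \cdot \eta^{-2}) = O(m^{(d+3)/(d-1)} \delta^{-2/(d-1)})$, which is then combined with Markov over $\poploss$ by a union bound to produce the deterministic $g$. So the exponent comes from sphere geometry and a direct weight count, not from the Radon-domain $R_2$ characterization; your flag of this step as the main obstacle is the right instinct, but the difficulty is lower than you feared.
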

\begin{remark}
    \Cref{thm:depth separation,thm:no reverse depth separation} are based on loose bounds. We conjecture that smaller sample complexities for depth-3 learning are possible in both results. Additionally, larger lower bounds on generalization for depth-2 learning are possible in \Cref{thm:depth separation}. 
\end{remark}

\begin{remark}
\label{remark:bounded width results}
    We can generalize these results to networks of bounded widths. In
    \Cref{thm:depth separation}, Part 1 holds for $\idealrule{2,\width}$ as long as the width is at least three times the sample size, i.e., $\width > 3|S|$. Thus, if the sample size is polynomial in $d$, then in sufficiently high dimensions, $\idealrule{2,\width}$ cannot generalize without width that is super-polynomial in $d$.
    Part 2 holds for $\alpharealrule{3,\width}$ as long as $\width \ge O\left(\varepsilon^{-1/2} d^{7/2}\right)$. That is, for depth-3 learning, we only require a width that is polynomial in dimension.
    To generalize
    \Cref{thm:no reverse depth separation} to bounded-width networks, we can modify the premise to the assumption that there is some minimal width function $\minwidth(\varepsilon)$ such that $\E_{S}[\poploss_{\xydist_d}(\idealrule{2,\width}(S))] \le \varepsilon$
    whenever $|S| \ge m_2(\varepsilon)$ and $\width \ge \minwidth(\varepsilon)$. The width $\width$ required for $\alpharealrule{3,\width}$ to learn is then
    $
        \width \ge O\left(\varepsilon^{-1} m_2\left(\frac{\varepsilon}{64}\right)^{\frac{2(d+3)}{d-1}} + d\right).
    $
    If $m_2$ grows polynomially with $d$, then the width required for depth-3 learning is only polynomial in $d$.
\end{remark}

\begin{remark}
    The relatively restrictive assumptions on the distribution of $\vx$ in \Cref{thm:no reverse depth separation} can be relaxed. We use these assumptions to bound the $\Repregbias{2}$ cost of interpolating samples. Our particular construction would be straightforward to generalize to other smooth distributions on $\gX_d$ or $\bbS^{d-1}$. Other constructions could yield bounds on the $\Repregbias{2}$ cost of interpolating samples from other smooth distributions, which would allow for generalizations of this result.
\end{remark}

\section{Proof of Depth Separation in Learning}
\label{sec:dep sep}

For the proof of \Cref{thm:depth separation}, we use a slight modification of the construction from \cite{daniely2017depth}. 
We choose $f_d(\vx) := \psi_{3d}\left(\sqrt{d} \langle \vx^{(1)}, \vx^{(2)} \rangle\right)$
where $\psi_{n}: \R \rightarrow [-1,1]$ denotes the sawtooth function that has $n$ cycles in $[-1,1]$ and is equal to zero outside $[-1,1]$.  
See \Cref{fig:sawtooth} for a depiction of $\psi_{n}$.
This target function is convenient for studying depth separation in norm because the sawtooth function can be represented exactly with one hidden ReLU layer, while the inner product can be approximated with another hidden ReLU layer. Thus, $f_d$ lends itself well to explicit bounds on the $R_3$ representation cost needed to approximate it. Since $f_d$ is a composition with an inner product, the framework in \cite{daniely2017depth} allows us to get a bound on the $R_2$ representation cost needed to approximate it as well. See \Cref{lem:need large R2 to approx in L2 norm,lem:approximation of target with small error and cost}. 

As with other depth-separation constructions, the Lipschitz constant of $f_d$ is unbounded as $d$ goes to infinity.  Obtaining depth separation as $d$ goes to infinity but with a bounded Lipschitz constant is a yet unsolved challenge; see \cite{safran2019depth} for a discussion and evidence that current techniques cannot be used to show depth separation with a bounded Lipschitz constant.

In the following two subsections, we sketch the proofs of Parts 1 and 2 of \Cref{thm:depth separation}.
    
\subsection{Proof of \Cref{thm:depth separation}, Part 1}
\label{sec:cant learn w two layers}
\begin{proof}
    Using \Cref{cor:depth separation in norm to approximate}, the construction of \cite{daniely2017depth} requires the $R_2$  cost to grow exponentially in $d$ to approximate the target in the $L_2$ norm. We adapt this construction to $f_d$ in \Cref{lem:need large R2 to approx in L2 norm} to get more explicit bounds, from which
    we conclude that there exist real numbers $d_0,C > 0$ such that $d > d_0$ and $\Repregbias{2}(f) < 2^{Cd}$ implies that $\poploss_{\xydist_d}(f) \ge \frac{1}{50e^2 \pi^2}$.
    
    If $d \ge 3$ then by \Cref{lem:tent interpolator is cheap}, 
    with probability at least $\frac{1}{2}$ there exists an interpolant $\hat f \in \setofnns{2}$ of the samples $S$ with representation cost bounded as $\Repregbias{2}(\hat f) \le 32 \sqrt{2}|S|^{\frac{d+3}{d-1}}$. The proof of \Cref{lem:tent interpolator is cheap} relies on the fact that with high probability the samples are sufficiently separated, and separated samples on $\gX_d$ can be interpolated by a depth-2 neural network with small norm parameters.
    Similar ways to construct interpolants exist in other settings; see for example Section 5.2 in \cite{ongie2019function}.
    Since $\idealrule{2}(S) \in \pareto{2}$ is Pareto optimal, we must have that
    $
        \Repregbias{2}(\idealrule{2}(S)) \le  \Repregbias{2}(\hat f).
    $
    Otherwise, $\idealrule{2}(S)$ would fail to be Pareto optimal because $\hat f$ would have a smaller sample loss and a smaller representation cost.
    It follows that $\Repregbias{2}(\idealrule{2}(S)) \le 32 \sqrt{2}|S|^{\frac{d+3}{d-1}}$ with probability at least $\frac{1}{2}$. 

    Choose $C_1 = C/4$. Assume $d > \max(d_0,3,\frac{11}{2C_1})$ and $|S| < 2^{C_1d}$. 
    With probability at least $\frac{1}{2}$, we must have 
    \begin{equation}
        \Repregbias{2}(\idealrule{2}(S))
        \le 32 \sqrt{2}|S|^{\frac{d+3}{d-1}}
        < 2^{C_1 d} 2^{\frac{C_1 d(d+3)}{d-1}}
        \le 2^{C d}.
    \end{equation}
    Thus, $\poploss_{\xydist_d}(\idealrule{2}(S)) \ge \frac{1}{50e^2 \pi^2}$
    with probability at least $\frac{1}{2}$. Therefore, by Markov's inequality, 
    $
         \E_{S}[\poploss_{\xydist_d}(\idealrule{2}(S))]
         \ge \frac{1}{50e^2 \pi^2} \cdot \frac{1}{2}
        \ge 10^{-4}
    $
\end{proof}
\subsection{Proof of \Cref{thm:depth separation}, Part 2}
\label{sec:real rule can learn}
We prove a slightly more general version of Part 2 in \Cref{thm:depth separation}. Instead of just proving the result for $\realrule{3}$ or $\alpharealrule{3}$, we prove the result for the relaxed, bounded width learning rule $\alpharealrule{3,\width}$ for any $\alpha \ge 1$. This illuminates how the sample complexity and width we require to guarantee learning depends on $\alpha$ and $\width$.
\begin{proof}
    Fix $\varepsilon,\delta > 0$ and $\alpha \ge 1$,  and let $\theta = \frac{\varepsilon}{2\alpha }$.    
    In \Cref{lem:approximation of target with small error and cost} we show that for all $K \in \N$
    there is a depth-3 neural network $f_{d,K}$ of width $\width_{d,K} := \max(6d+2,2Kd)$ such that 
    $
        \|f_d - f_{d,K}\|_{L^\infty}
        = O\left( \frac{d^{5/2}}{K} \right)
    $
    and 
    $
        \Repregbias{3}(f_{d,K};\width_{d,K}) = O(d^{5/2}).
    $
    Hence 
    $
        \emloss{S}{f_{d,K}} 
        = O\left(\frac{d^{5}}{K^2}\right).
    $
    We choose $K \ge O\left(\frac{d^{5/2}}{\sqrt{\theta}}\right)$
    so that $\emloss{S}{f_{d,K}} \le \theta$. Now suppose that $\width \ge \width_{d,K}$.
    Then 
    \begin{align}
        \Repregbias{3}(\alpharealrule{3,\width}(S)) 
        &\le \Repregbias{3}(\alpharealrule{3,\width}(S);\width) 
        \le \alpha  \inf_{\substack{g \in \setofnns{3,\width} \\ \emloss{S}{g} \le \theta}} \Repregbias{3}(g;\width) \\
        &\le \alpha  \Repregbias{3}(f_{d,K};\width) 
        = O(\alpha d^{5/2}).
    \end{align}

    In \Cref{lem:estimation error bound} we use the Rademacher complexity bounds from \cite{neyshabur2015norm} to get the following estimation error bound on $f \in \setofnns{3}$ with respect to the target distribution $\xydist_d$; if $\Repregbias{3}(f) \le M$, then 
    $
        |\poploss_{\xydist_d}(f) - \emloss{S}{f}| \le O\left(M^3\sqrt{\frac{ \log{1/\delta}}{|S|}}\right)
    $
    with probability at least $1-\delta$.
    Applying this, we get
    \begin{align}
        \poploss_{\xydist_d}(\alpharealrule{3,\width}(S)) 
        &\le \emloss{S}{\alpharealrule{3,\width}(S)} + |\poploss_{\xydist_d}(\alpharealrule{3,\width}(S)) - \emloss{S}{\alpharealrule{3,\width}(S)}| \\
        &\le \alpha \theta + O\left(\sqrt{\frac{\alpha ^{6}d^{15}\log{1/\delta}}{|S|}}\right)
    \end{align}
    with probability at least $1-\delta$.
    Therefore, with 
    \begin{equation}
        |S| > O\left(\frac{\alpha ^6d^{15}\log{1/\delta}}{\varepsilon^2}\right)
        \text{~and~}
        \width \ge \width_{d,K} 
        = O\left(
            \sqrt{\frac{\alpha d^7}{\varepsilon}}
        \right),
    \end{equation}
    we get 
    $
        \poploss_{\xydist_d}(\alpharealrule{3,\width}(S)) 
        \le \alpha \theta + \frac{\varepsilon}{2} = \varepsilon
    $    
    with probability at least $1-\delta$.
\end{proof}
\section{Proof of No Reverse Depth Separation in Learning}
\label{sec:no rev dep sep proof}

To prove \Cref{thm:no reverse depth separation}, we need the following lemma. Roughly speaking, this lemma says that if $\idealrule{2}(S)$ can learn with $m_2$ samples, then there is a good approximation of the target distribution that can be expressed as a depth-2 network with parameters whose norm is at most polynomial in $m_2$.
The proof is a straightforward probabilistic argument, shown in \Cref{sec:proof learning with two layers means approximate with small cost}. 
\begin{lemma}
\label{lem:learning with two layers means approximate with small cost}
    Consider a distribution $\xydist_d$ on $\gX_d \times [-1,1]$
    defined as
    $
        \vx \sim \Unif(\gX_d)
    $
    and
    $
        y|\vx = f_d(\vx)
    $
    for some function $f_d: \gX_d \rightarrow [-1,1]$.
    Assume that there is 
    some sample complexity function $m_2(\varepsilon)$ such that $\E_{S}[\poploss_{\xydist_d}(\idealrule{2}(S))] \le \varepsilon$
    whenever $|S| \ge m_2(\varepsilon)$.
    Then for any $\varepsilon > 0$, 
    there is a function $f_{\varepsilon} \in \setofnns{2}$ such that 
    $
        \Repregbias{2}(f_{\varepsilon}) 
        \le 100 \sqrt{2}m_2\left(\frac{\varepsilon}{2}\right)^{\frac{d+3}{d-1}}
    $
    and $\poploss_{\xydist_d}(f_{\varepsilon}) \le \varepsilon$. 
\end{lemma}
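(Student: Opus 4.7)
The plan is to exhibit the desired $f_\varepsilon$ as $\idealrule{2}(S)$ for a judiciously chosen realization of a training sample $S$ of size $m := m_2(\varepsilon/2)$, using a probabilistic argument that combines Markov's inequality (for the population loss) with a strengthening of \Cref{lem:tent interpolator is cheap} (for the representation cost), exactly mirroring the strategy used in Part~1 of \Cref{thm:depth separation} but in the opposite direction.

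First, draw $S \sim \xydist_d^{\otimes m}$ with $m := m_2(\varepsilon/2)$. By hypothesis, $\E_S[\poploss_{\xydist_d}(\idealrule{2}(S))] \le \varepsilon/2$, so Markov's inequality immediately gives
\[
\Pr_S\!\bigl[\poploss_{\xydist_d}(\idealrule{2}(S)) \le \varepsilon\bigr] \ge \tfrac{1}{2}.
\]

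Second, I would invoke (a quantitative strengthening of) \Cref{lem:tent interpolator is cheap} to produce a depth-$2$ interpolant $\hat f$ of $S$ with controlled representation cost. The stated lemma already guarantees $\Repregbias{2}(\hat f) \le 32\sqrt{2}\,m^{(d+3)/(d-1)}$ with probability at least $1/2$; since the tent construction's cost scales inversely with a power of the minimum pairwise separation of the sample points on $\gX_d$, inflating the leading constant from $32\sqrt{2}$ to $100\sqrt{2}$ relaxes the underlying separation event and pushes its probability strictly above $1/2$. On this event, because $\idealrule{2}(S) \in \pareto{2}$ and $\emloss{S}{\hat f} = 0 \le \emloss{S}{\idealrule{2}(S)}$, Pareto optimality forces
\[
\Repregbias{2}(\idealrule{2}(S)) \le \Repregbias{2}(\hat f) \le 100\sqrt{2}\,m^{(d+3)/(d-1)},
\]
since otherwise $\hat f$ would strictly dominate $\idealrule{2}(S)$ in the bicriteria problem \eqref{eq:general bicriterion minimization problem}.

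Combining the two events via the elementary bound $\Pr(A \cap B) \ge \Pr(A) + \Pr(B) - 1$ yields
\[
\Pr_S\!\bigl[\poploss_{\xydist_d}(\idealrule{2}(S)) \le \varepsilon \text{ and } \Repregbias{2}(\idealrule{2}(S)) \le 100\sqrt{2}\,m^{(d+3)/(d-1)}\bigr] > 0,
\]
so some deterministic realization $S^\star$ achieves both bounds, and I would set $f_\varepsilon := \idealrule{2}(S^\star)$. The main obstacle is producing the slack needed to make this intersection bound nontrivial: Markov alone gives only $\tfrac{1}{2}$-probability control on the population loss, so the representation-cost bound must be pushed to a probability \emph{strictly} above $1/2$. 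This is precisely the role of the enlarged constant $100\sqrt{2}$ in place of $32\sqrt{2}$ -- it trades a looser cost estimate for a higher-probability tail bound on the nearest-neighbor separation of the samples on $\gX_d$, leaving positive mass on the joint event and yielding the deterministic witness $S^\star$.
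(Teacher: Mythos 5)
Your proposal is essentially identical to the paper's proof: draw $S$ of size $m_2(\varepsilon/2)$, apply Markov to get the $\geq 1/2$ probability bound on population loss, invoke \Cref{lem:tent interpolator is cheap} with a smaller failure probability $\delta$ (the paper uses $\delta=0.4$, for which $16\sqrt{2}\,\delta^{-2/(d-1)}\le 100\sqrt{2}$ when $d\ge 2$), use Pareto optimality of $\idealrule{2}(S)$ against the interpolant, and combine via a union bound to find a deterministic witness $S^\star$. Note that no genuine ``strengthening'' of \Cref{lem:tent interpolator is cheap} is needed: the lemma is already stated with a free parameter $\delta$, and the $100\sqrt{2}$ is exactly what one gets by instantiating it at $\delta=0.4$.
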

Using the previous lemma and \Cref{lem:can approximate with narrow network with same R2 cost}, the rest of the proof of \Cref{thm:no reverse depth separation} follows from the estimation error bound in \Cref{lem:estimation error bound} derived from Rademacher complexity bounds and the fact that any function with small $\Repregbias{2}$-cost also has small $\Repregbias{3}$-cost. This fact is shown in \Cref{lem:ub on R3b by R2b with identity layer} by adding an identity layer.
As in \Cref{sec:real rule can learn}, we prove \Cref{thm:no reverse depth separation} for the relaxed, bounded width learning rule $\alpharealrule{3,\width}$ for any $\alpha \ge 1$ to showcase the role of $\alpha$ and $\width$, but this proof also applies to $\alpharealrule{3}$ and $\realrule{3}$. Full details are in \Cref{sec:proof details no rev dep sep}.

\section{Conclusion}
\label{sec:conclusion}
This paper demonstrates that there are functions that can be learned with depth-3 networks when the number of samples is polynomial in the input dimension $d$, but which cannot be learned with depth-2 networks unless the number of samples is exponential in $d$. Furthermore, we establish that in our setting, there are \textit{no} functions that can easily be learned with depth-2 networks but which are difficult to learn with depth-3 networks. These results constitute the first depth separation result in terms of \textit{learnability}, as opposed to network width. 

In addition, the analysis framework we develop in this paper establishes a connection between width-based depth separation and learnability-based depth separation. As a result, our approach may be applied to other works on width-based depth separation to establish new learnability-based depth separation results. 

We note that while the bounds developed in this paper are sufficient to establish our main results on depth separation, they may not be tight. For instance, the sample complexity bounds for depth-3 networks grow polynomially in $d$, but the polynomial order is quite large. Alternative constructions might lead to tighter bounds. Furthermore, the family of functions we use to establish our depth separation results does not have bounded Lipschitz constants; as $d$ grows, our functions become highly oscillatory. Since highly oscillatory functions may not be representative of many practical predictors, it would be interesting to see whether there are families of functions \textit{with bounded Lipschitz constants} leading to depth separation in terms of sample complexity (we note that \citep{safran2017depth} studied this question but in the different context of width). A final potential limitation of our work is that it focuses on the output of learning rules seeking (approximately) Pareto optimal solutions, but neglects optimization dynamics. A major open question is how optimization dynamics affect depth separation. 

\section{Acknowledgements}

S.P. was supported by the NSF Graduate Research Fellowship Program under Grant No. 2140001. Any opinions, findings, and conclusions or recommendations expressed in this material are those of the authors and do not necessarily reflect the views of the NSF.
G.O. was supported by NSF CRII award CCF-2153371.
R.W. and S.P. were supported by NSF DMS-2023109; R.W. was also supported by AFOSR FA9550-18-1-0166. 
O.S. was supported in part by European Research Council (ERC) grant 754705.
N.S. was supported by the NSF/TRIPOD funded Institute of Data, Econometrics, Algorithms, and Learning (IDEAL), by the NSF/Simons funded Collaboration on the Theoretical Foundations of Deep Learning, and by NSF/IIS award 1764032.

\newpage
\bibliography{refs}
\newpage
\appendix
\onecolumn

\section{Technical Lemmas \& Proofs}
\label{sec:technical proofs}

Here, we present the technical details of the results in the main text.

\subsection{Characterizing and bounding the representation cost}
\label{sec:rep cost bounds}

In this section, characterizations of and bounds on the representation cost that we use elsewhere in the appendix.
To ease notation in this section, we re-label parameters defining a depth-2 network $f_{\bm\phi}$ as $\bm\phi = (\mW,\vb,\va,c)$ so that $f_{\bm\phi}(\vx) = \sum_{k=1}^{\width_1} a_k[\vw_k^\top\vx + b_k]_+ + c$, where $\vw_k^\top$ is the $k$th row of $\mW$ and $\width_1$ is the width of the hidden layer in the parameterization.

The first result shows that the depth-2 representation cost reduces to the $\ell^1$-norm of the outer-layer weights (plus half the squared outer-layer bias) assuming the first-layer weights/biases are normalized:
\begin{lemma}
\label{lem:reduced depth-2 rep cost} 
Let $f \in \setofnns{2}$. Then
\begin{align}
R_2(f) 
&= \inf_{\bm\phi \in \Phi_2 : f = f_{\bm\phi}} \sum_{k=1}^{\width_1} |a_k| + \frac{c^2}{2}~~s.t.~~\|\vw_k\|^2 + |b_k|^2 = 1~\forall k \in [\width_1] \\
 &= \inf_{\bm\phi \in \Phi_2 : f = f_{\bm\phi}} \sum_{k=1}^{\width_1} |a_k|\sqrt{\|\vw_k\|^2 + |b_k|^2} + \frac{c^2}{2}.
\end{align}
Similarly, given $f \in \setofnns{2,\width}$, we have a bounded width version of this:
\begin{align}
R_2(f;\width) 
&= \inf_{\substack{\bm\phi \in \Phi_2 : f = f_{\bm\phi} \\ \width_1 \le \width }} \sum_{k=1}^{\width_1} |a_k| + \frac{c^2}{2}~~s.t.~~\|\vw_k\|^2 + |b_k|^2 = 1~\forall k \in [\width_1] \\
 &= \inf_{\substack{\bm\phi \in \Phi_2 : f = f_{\bm\phi} \\ \width_1 \le \width }} \sum_{k=1}^{\width_1} |a_k|\sqrt{\|\vw_k\|^2 + |b_k|^2} + \frac{c^2}{2}.
\end{align}
\end{lemma}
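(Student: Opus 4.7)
The plan is to exploit the positive $1$-homogeneity of ReLU: for any scalars $t_k > 0$, the neuron-wise rescaling $(a_k, \vw_k, b_k) \mapsto (a_k/t_k,\; t_k\vw_k,\; t_k b_k)$ leaves each summand $a_k[\vw_k^\top\vx + b_k]_+$ unchanged and therefore preserves $f_{\bm\phi}$. So in computing $R_2(f)$ I may freely minimize jointly over all parameterizations of $f$ and over all per-neuron rescalings. The outer bias $c$ and the output constant are not affected by this rescaling.

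First I would expand the squared parameter norm additively as $\|\bm\phi\|^2 = \sum_k\bigl(a_k^2 + \|\vw_k\|^2 + b_k^2\bigr) + c^2$. Under the rescaling the $k$-th summand becomes $a_k^2/t_k^2 + t_k^2(\|\vw_k\|^2 + b_k^2)$, and AM--GM gives the sharp lower bound $2|a_k|\sqrt{\|\vw_k\|^2 + b_k^2}$, attained at $t_k^2 = |a_k|/\sqrt{\|\vw_k\|^2 + b_k^2}$ whenever the denominator is nonzero. Dividing by $L=2$ and summing over $k$ then yields the second displayed formula, $R_2(f) = \inf \sum_k |a_k|\sqrt{\|\vw_k\|^2+b_k^2} + c^2/2$.

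Next I would derive the normalized first form from the second. Given any parameterization feasible for the second infimum with $(\vw_k, b_k)\neq 0$, choose $t_k = 1/\sqrt{\|\vw_k\|^2 + b_k^2}$: the rescaled first-layer vector $(t_k\vw_k,\, t_k b_k)$ has unit norm and the rescaled outer weight is $a_k\sqrt{\|\vw_k\|^2 + b_k^2}$, so the second-form objective at the original parameterization equals the first-form objective $\sum_k |\tilde a_k| + c^2/2$ at the normalized one. Conversely, any normalized parameterization feasible for the first form attains the same value in the second form (since each $\sqrt{\|\vw_k\|^2+b_k^2}=1$), so the two infima coincide. The bounded-width version is immediate from the same argument, since neuron-wise rescaling never changes the hidden width and so preserves the constraint $\width_1 \le \width$.

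The main (essentially only) obstacle is bookkeeping around degenerate neurons. A neuron with $\vw_k = 0$ and $b_k = 0$ contributes $[0]_+ = 0$ to $f_{\bm\phi}$, and a neuron with $a_k = 0$ contributes $0$; in either case the neuron contributes $0$ to the first- and second-form objectives as well, and removing it from the parameterization is legitimate inside the infimum. Alternatively one can drive its contribution to the parameter-norm objective to $0$ by sending $t_k \to 0$ or $t_k \to \infty$ as appropriate. Handling this carefully ensures the infima on both sides match term by term without any spurious contributions.
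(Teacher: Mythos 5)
Your proof is correct and follows the standard rescaling-plus-AM--GM argument, which is precisely the approach of Lemma~1 in Appendix~A of \citet{savarese2019infinite} that the paper cites (extended here to include the regularized outer bias $c$, which you handle correctly since it is invariant under the per-neuron rescalings). The paper itself omits the proof and defers to that reference, so your write-up simply fills in the details in the expected way, including the edge cases for degenerate neurons.
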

We omit a full proof for brevity, but the result is a trivial modification of Lemma 1 in Appendix A of \cite{savarese2019infinite}, extended to the case of regularized bias terms considered in this work.
See also \cite{boursier2023penalising}. 

The next result says that functions that have small representation costs with depth-2 networks also have small representation costs with depth-3 networks. The proof adds an identity layer to a depth-2 network to turn it into a depth-3 network.
\begin{lemma}
\label{lem:ub on R3b by R2b with identity layer}
    Given $f \in\setofnns{2,\width}$, we have $f \in\setofnns{3,\max(\width,4d)}$ and
    \begin{equation}
        \Repregbias{3}(f;\max(\width,4d)) \le \frac{4d}{3} + \frac{4}{3} \Repregbias{2}(f;\width).
    \end{equation}
\end{lemma}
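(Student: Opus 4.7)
The plan is to realize any depth-2 network as a depth-3 network by prepending a cheap ``linear identity'' first hidden layer built from ReLU units, exploiting the decomposition $t = [t]_+ - [-t]_+$. Given a parameterization $\bm\phi = (W,b,a,c)$ of $f$ with hidden width $\width_1 \le \width$ and $\|\bm\phi\|^2/2 \le R_2(f;\width) + \epsilon$, I would introduce a first hidden layer of width $4d$ whose weight matrix $V \in \R^{4d\times 2d}$ has rows $\pm e_1,\pm e_2,\ldots,\pm e_{2d}$ and zero biases. Let $U \in \R^{2d\times 4d}$ be the matrix with $U_{i,2i-1}=1$, $U_{i,2i}=-1$, and zeros elsewhere, so that $U\,[Vx]_+ = x$ for every $x \in \R^{2d}$.

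I would then build the second hidden layer with weights $\tilde W := WU \in \R^{\width_1 \times 4d}$ and biases $b$, keeping the output layer $(a,c)$ unchanged. Calling this parameterization $\bm\psi$, a direct check gives
\begin{equation*}
f_{\bm\psi}(x) \;=\; a^\top\bigl[WU\,[Vx]_+ + b\bigr]_+ + c \;=\; a^\top[Wx + b]_+ + c \;=\; f_{\bm\phi}(x),
\end{equation*}
so the depth-3 network realizes $f$. The hidden widths are $4d$ and $\width_1 \le \width$, both bounded by $\max(\width,4d)$, confirming $f \in \setofnns{3,\max(\width,4d)}$.

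Finally I would bound $\|\bm\psi\|^2$. Clearly $\|V\|_F^2 = 4d$ since each row of $V$ is a signed standard basis vector, and the first-layer biases vanish. The structure of $U$ sends each entry $W_{k,i}$ of $W$ to a pair $(W_{k,i},-W_{k,i})$ inside $\tilde W$, so $\|\tilde W\|_F^2 = 2\|W\|_F^2$. Since the rest of the parameters are inherited from $\bm\phi$,
\begin{equation*}
\|\bm\psi\|^2 \;=\; 4d + 2\|W\|_F^2 + \|b\|^2 + \|a\|^2 + c^2 \;\le\; 4d + 2\|\bm\phi\|^2.
\end{equation*}
Dividing by $3$, using $\|\bm\phi\|^2 \le 2R_2(f;\width) + 2\epsilon$, and sending $\epsilon \to 0$ gives the claimed bound $R_3(f;\max(\width,4d)) \le \tfrac{4d}{3} + \tfrac{4}{3}R_2(f;\width)$.

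I do not expect any real obstacle here; the construction is essentially forced, and the only subtlety is tracking that ``unfolding'' $W$ through $U$ costs a factor of $2$ in Frobenius norm. That factor, together with the $L=3$ normalization, is exactly what produces the coefficient $\tfrac{4}{3}$ (rather than $\tfrac{2}{3}$) in front of $R_2(f;\width)$ and the additive $\tfrac{4d}{3}$ term from the identity layer. A natural follow-up, not required for the lemma, would be to balance $V$ against $\tilde W$ by the homogeneity rescaling $V \mapsto \alpha V$, $\tilde W \mapsto \alpha^{-1}\tilde W$, which by AM-GM would replace $4d + 2\|W\|_F^2$ with $4\sqrt{2d}\,\|W\|_F$ and tighten the bound; but the stated inequality is already what the paper needs.
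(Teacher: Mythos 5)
Your proof is correct and is essentially the same construction the paper uses: both prepend a $4d$-wide ReLU ``identity'' layer via $t = [t]_+ - [-t]_+$, unfold $\mW$ so the new second layer has squared Frobenius norm $2\|\mW\|_F^2$, and divide the resulting $4d + 2\|\bm\phi\|^2$ bound by $3$. Your matrices $V$ and $\tilde W = WU$ are just row/column permutations of the paper's $\begin{bmatrix}\mI_{2d}\\-\mI_{2d}\end{bmatrix}$ and $\begin{bmatrix}\mW & -\mW\end{bmatrix}$, so the argument is identical.
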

\begin{proof}
    Assume that $f \in\setofnns{2,\width}$. 
    Fix a particular parameterization $\bm\phi = (\mW,\vb,\va,c)$ of $f$ of width $\width$.
    Since $[\vx]_+ - [-\vx]_+ = \vx$, we can rewrite $f$ as a depth-3 neural network with an identity layer:
    \begin{align}
        f(\vx)
        = f_{\bm\phi}(\vx)
        &= \va^\top \left[\mW\vx  + \vb \right]_+ + c \\
        &= \va^\top \left[ \begin{bmatrix}\mW & -\mW\end{bmatrix} \left[\begin{bmatrix}\mI_{2d} \\ -\mI_{2d}\end{bmatrix} \vx\right]_+ + \vb \right]_+ + c \\
        &= f_{\bm\phi'}(\vx),
    \end{align}
    where $\bm\phi'$ is this new parameterization:
    \begin{equation}
        \bm\phi' = \left( 
        \begin{bmatrix}\mI_{2d} \\ -\mI_{2d}\end{bmatrix},\vzero,
        \begin{bmatrix}\mW & -\mW\end{bmatrix},\vb,
        \va,c\right).
    \end{equation}
    Notice that $\bm\phi'$ has one hidden layer of width $4d$ and one hidden layer of width $\width$, so 
    $f \in \setofnns{3,\max(\width,4d)}$.
    Further, 
    \begin{align}
            \|\bm\phi'\|^2
            &= 
                \left\|\begin{bmatrix}\mI_{2d} \\ -\mI_{2d}\end{bmatrix}\right\|_F^2
                + \left\|\begin{bmatrix}\mW & -\mW\end{bmatrix}\right\|_F^2 + \left\|\vb\right\|_2^2 
                + \left\|\va\right\|_2^2 + |c|^2
            \\
            &=
                4d
                + 2\left\|\mW\right\|_F^2 + \left\|\vb\right\|_2^2 
                + \left\|\va\right\|_2^2 + |c|^2
            \\
            &\le 4d + 2 \|\bm\phi\|^2.
    \end{align}
    Therefore,
    \begin{align}
        \Repregbias{3}(f;\max(\width,4d))
        &= \inf_{\bm\phi \in \Phi_3 :  f = f_{\bm \phi}} \frac{\|\bm\phi\|^2}{3} \\
        &\le \inf_{\bm\phi \in \Phi_2 :  f = f_{\bm \phi}} \frac{4d + 2 \|\bm\phi\|^2}{3} \\
        &= \frac{4d}{3} + \frac{4}{3} \Repregbias{2}(f;\width).
    \end{align}
\end{proof}

\subsection{Approximating wide depth-2 networks by narrow networks with the same representation cost}

\label{sec:can approximate with narrow network with same R2 cost}
\subsubsection{Proof of \Cref{lem:can approximate with narrow network with same R2 cost}}
Before proving \Cref{lem:can approximate with narrow network with same R2 cost} we give an auxiliary result needed for the proof.  
The following is a simplified version of Lemma 1 from \cite{barron1993universal}, originally credited to Maurey:
\begin{lemma}[Maurey's Lemma]\label{lem:Maurey}
Let $H$ be a Hilbert space with norm $\|\cdot\|_H$. Assume $\gG \subset H$ is such that $\|g\|_H \leq B$ for all $g \in \gG$. Suppose $f$ is a non-zero function belonging to the closed convex hull of $\gG$. Then for any $m \in \mathbb{N}$ and there exists elements $g_1,...,g_m \in \gG$ such that
\[
\left\|f-\frac{1}{m}\sum_{k=1}^m g_k\right\|_H \leq \frac{B}{\sqrt{m}}.
\]
\end{lemma}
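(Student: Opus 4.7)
The plan is to follow the classical Maurey probabilistic argument. First I would reduce to the case where $f$ lies in the (ordinary, not merely closed) convex hull of $\gG$: given any slack $\varepsilon > 0$, I can pick $f_\varepsilon = \sum_{i=1}^N \lambda_i h_i$ with $h_i \in \gG$, $\lambda_i \ge 0$, and $\sum_i \lambda_i = 1$ satisfying $\|f - f_\varepsilon\|_H < \varepsilon$. If I prove the sampling bound $B/\sqrt{m}$ for every such $f_\varepsilon$, the triangle inequality plus a limiting argument will deliver the claim for $f$.

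For $f_\varepsilon$ itself, I would introduce a random element $G$ of $\gG$ taking value $h_i$ with probability $\lambda_i$, so that $\E[G] = f_\varepsilon$ and $\E\|G\|_H^2 \le B^2$. Drawing $G_1,\dots,G_m$ i.i.d.\ from this distribution, the Hilbert-space structure combined with pairwise independence and centering gives
\begin{equation}
\E\left\|f_\varepsilon - \frac{1}{m}\sum_{k=1}^m G_k\right\|_H^2 \;=\; \frac{1}{m^2}\sum_{k=1}^m \E\|G_k - f_\varepsilon\|_H^2 \;=\; \frac{1}{m}\,\E\|G - f_\varepsilon\|_H^2 \;\le\; \frac{B^2}{m},
\end{equation}
where the cross terms $\E\langle G_k - f_\varepsilon, G_j - f_\varepsilon\rangle_H$ vanish for $k\ne j$ by independence, and the final step uses $\E\|G - f_\varepsilon\|_H^2 = \E\|G\|_H^2 - \|f_\varepsilon\|_H^2 \le B^2$. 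Since the expectation of a non-negative random variable is at least its essential infimum, there must exist a realisation $g_1,\dots,g_m \in \gG$ of $G_1,\dots,G_m$ achieving $\|f_\varepsilon - \tfrac1m \sum_{k=1}^m g_k\|_H \le B/\sqrt{m}$.

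Combining with the triangle inequality gives $\|f - \tfrac1m\sum_{k=1}^m g_k\|_H \le B/\sqrt{m} + \varepsilon$, and since $\varepsilon$ is arbitrary this already suffices for the downstream application in \Cref{lem:can approximate with narrow network with same R2 cost}, where an arbitrarily small additive slack can be absorbed into the $\varepsilon$ appearing there. If one insists on the exact bound $B/\sqrt{m}$, one can take a sequence $\varepsilon_n \to 0$ with corresponding $m$-tuples $(g_1^{(n)},\dots,g_m^{(n)})$, note that the empirical averages lie in a bounded subset of $H$, extract a weakly convergent subsequence, and use weak lower semi-continuity of $\|\cdot\|_H$ together with Mazur's theorem to replace the weak limit by a genuine element of $\gG^m$ with the sharp bound. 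The main (mild) obstacle is precisely this passage from the open convex hull to its closure; the probabilistic core of the argument is completely standard and free of obstacles.
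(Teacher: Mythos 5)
The paper itself gives no proof of this lemma; it is stated as ``a simplified version of Lemma 1 from \cite{barron1993universal}, originally credited to Maurey,'' so there is no in-paper argument to compare against. Your probabilistic core is the standard one and is correct: with $G$ taking value $h_i$ with probability $\lambda_i$, independence kills the cross terms, and $\E\|G - f_\varepsilon\|_H^2 = \E\|G\|_H^2 - \|f_\varepsilon\|_H^2 \le B^2$ gives $\E\|f_\varepsilon - \tfrac1m\sum_k G_k\|_H^2 \le B^2/m$, hence a realization achieving the bound for $f_\varepsilon$.

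The one step that does not work as written is the final passage from the $B/\sqrt m + \varepsilon$ bound to the sharp $B/\sqrt m$ bound via weak compactness and Mazur. A weakly convergent subsequence of the empirical averages $\tfrac1m\sum_k g_k^{(n)}$ does give a weak limit $\bar g$ with $\|f - \bar g\|_H \le B/\sqrt m$ by weak lower semicontinuity, and Mazur's theorem would place $\bar g$ in the closed convex hull of $\gG$; but neither tool lets you conclude that $\bar g$ is itself an average of exactly $m$ elements of $\gG$. That is exactly the structure the lemma promises, so this route stalls. The clean fix uses the hypothesis you never invoked, namely $f \neq 0$: keep the sharper intermediate estimate $\E\|G - f_\varepsilon\|_H^2 = \E\|G\|_H^2 - \|f_\varepsilon\|_H^2 \le B^2 - \|f_\varepsilon\|_H^2$, so the probabilistic bound actually yields $\|f_\varepsilon - \tfrac1m\sum_k g_k\|_H \le \sqrt{(B^2 - \|f_\varepsilon\|_H^2)/m}$. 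Since $\|f_\varepsilon\|_H \to \|f\|_H > 0$ as $\varepsilon \to 0$, there is strictly positive room $\tfrac{1}{\sqrt m}\bigl(B - \sqrt{B^2 - \|f_\varepsilon\|_H^2}\,\bigr)$, and choosing $\varepsilon$ smaller than this absorbs the triangle-inequality error and gives $\|f - \tfrac1m\sum_k g_k\|_H \le B/\sqrt m$ exactly. (This is also precisely how the non-zero hypothesis enters Barron's original formulation, which requires $c' > b^2 - \|f\|^2$.) Finally, for the downstream application in Lemma~\ref{lem:can approximate with narrow network with same R2 cost} the whole closure question is moot, since the function $f_0$ there is an explicit \emph{finite} convex combination, so your Step~2 already delivers the sharp bound directly.
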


We specialize this result to the Hilbert space $H = L^2(\gX_d)$, and the subset $\gG \subset L^2(\gX_d)$ of all functions consisting of a single normalized ReLU unit. In particular, for any $\vw \in \R^{2d}$ and $b \in \R$, define $u_{\vw,b}(\vx) = [\vw^\top\vx + b]_+$ and let $\gG \subset L^2(\gX_d)$ be the set of functions
\[
\gG = \{\pm u_{\vw,b} : \vw \in \R^{2d}, b\in \R, \|\vw\|^2 + |b|^2 = 1\}.
\]
Let $\mu_d$ denote the uniform probability measure on $\gX_d = \mathbb{S}^{d-1}\times \mathbb{S}^{d-1}$. Note that for any $g = \pm u_{\vw,b} \in \gG$ we have
\begin{align}
\|g\|_{L^2}^2 & = \int_{\gX_d} [\vw^\top\vx+b]_+^2 d\mu_d(\vx)\\
& \leq  \int_{\gX_d} |\vw^\top\vx+b|^2 d\mu_d(\vx)\\
& = \int_{\gX_d} \left|\begin{bmatrix} \vw \\ b\end{bmatrix}^\top \begin{bmatrix} \vx \\ 1\end{bmatrix}\right|^2 d\mu_d(\vx)\\
& \leq \int_{\gX_d} (\|\vw\|^2 + |b|^2)(1 + \|\vx\|^2)d\mu_d(\vx)\\
& = 3\int_{\gX_d}d\mu_d(\vx) = 3,
\end{align} 
where we used the fact that $\|\vx\|^2 = 2$ for all $\vx \in \gX_d$.
Therefore, for $B = \sqrt{3}$ we have $\|g\|_{L^2} \leq B$ for all $g \in \gG$.

Now we give the proof of \Cref{lem:can approximate with narrow network with same R2 cost}:
\begin{proof}
Let $f\in \setofnns{2}$ and $\epsilon > 0$ be given, and suppose $\width \in \mathbb{N}$ is such that $\width > 3 R_2(f)^2/\epsilon^2$. Choose $\delta$ with $0 < \delta \leq 1$ to be any constant satisfying $\width \geq (1+\delta)^2 3 R_2(f)^2/\epsilon^2$, and let $f(\vx) = \sum_{k=1}^K a_k[\vw_k^\top\vx +b_k]_+ + c$ with $\|\vw_k\|^2 + |b_k|^2 =1$ for all $k \in [K]$ be any realization of $f$ whose parameter cost is within a factor of $(1+\delta)$ of the infimum in \Cref{lem:reduced depth-2 rep cost}, i.e., $(1+\delta)R_2(f) \geq \sum_{k=1}^K |a_k| + \frac{c^2}{2}$. Let $A = \sum_{k=1}^K |a_k|$, and define $f_0 = (f-c)/A$. Then we can write $f_0(\vx) = \sum_k \gamma_k s_k [\vw_k^\top\vx +b_k]_+$ where $s_k = \text{sign}(a_k)$ and $\gamma_k = |a_k|/A$ for all $k$. This shows $f_0$ is in the convex hull of $\gG$, since $f_0 = \sum_k \gamma_k g_k$ with $g_k = s_k \, u_{\vw_k,b_k} \in \gG$ and $\gamma_k \geq 0$,  $\sum_k \gamma_k = 1$. 

Therefore, by \Cref{lem:Maurey}, there exists a function $\tilde{f}_0$ of the form $\tilde{f}_0(\vx)= \frac{1}{\width} \sum_{k=1}^\width \tilde{s}_k [\tilde{\vw}_k^\top\vx + \tilde{b}_k]_+$ where $\|\tilde{\vw}_k\|^2 + |\tilde{b}_k|^2=1$ and $\tilde{s}_k \in \{-1,1\}$, such that
\[
\|f_0-\tilde{f}_0\|_{L^2} \leq \frac{\sqrt{3}}{\sqrt{\width}} \leq \frac{\epsilon}{(1+\delta)R_2(f)}.
\]
Multiplying both sides above by $A$ gives
\[
\|(f-c)-A\tilde{f}_0\|_{L^2} \leq \frac{A\epsilon}{(1+\delta)R_2(f)} \leq \frac{A\epsilon}{A+\frac{c^2}{2}} \leq \epsilon.
\]
Defining $\tilde{f} = A\tilde{f}_0 + c$, we have 
\[
\|f -\tilde{f}\|_{L^2} \leq \epsilon,
\]
where $\tilde{f}(\vx) = \frac{1}{\width}\sum_{k=1}^\width s_k A [\tilde{\vw}_k^\top\vx + \tilde{b}_k]_+ + c$ is realizable as a depth-two ReLU network with width at most $\width$.
In particular, with the choice of weights $\bm\phi = (\hat{\mW},\hat{\vb},\hat{\va},c)$ with $\hat{\vw}_k := \sqrt{A/\width}\, \tilde{\vw}_k$, $\hat{b}_k := b_k\sqrt{A/\width}$, $\hat{a}_k := s_k\,\sqrt{A/\omega}$, for all $k\in[\width]$, we have $\tilde{f} = f_{\bm \phi}$ with $\frac{\|\bm\phi\|_2^2}{2} = A + \frac{c^2}{2}\leq (1+\delta)R_2(f) \leq 2 R_2(f)$, and so $\|\bm\phi\|_2^2 \leq 4R_2(f)$. Finally, the inequality $\|\bm\phi\|_\infty^2\leq \|\bm\phi\|_2^2$ holds for any vector $\bm\phi$, which proves the claim.
\end{proof}

\subsubsection{Proof of \Cref{cor:depth separation in norm to approximate}}

\begin{proof} Let $f_d$ be the family of functions described in \Cref{lem:Daniely}. First, to prove the depth-three result, set $\tilde{f}_d$ to be equal to the approximating function $\tilde{f}_{\bm\phi} \in \gN_3$ described in \Cref{lem:Daniely}. By a simple parameter count and the bounds on the magnitudes of weights, we are guaranteed that $R_3(\tilde{f}_{\bm\phi}) \leq \frac{\|\bm\phi\|^2}{3} = O(\text{poly}(d)/\varepsilon)$.

Now, we prove the depth-two result. Set $\varepsilon = 10^{-4}$. By way of contradiction, assume $f_d$ can be $\varepsilon/2$-approximated in $L^2$-norm by a depth-two network $\hat{f}_d$ such that $R_2(\hat{f}_d)$ is subexponential in $d$. Then \Cref{lem:can approximate with narrow network with same R2 cost} implies $\hat{f}_d$ can be $\varepsilon/2$-approximated by a depth-two network $\tilde{f}_d \in \gN_2$ with $R_2(\tilde{f}_d)$ subexponential in $d$, width $\width$ subexponential in $d$, and weights uniformly bounded by $2^d$ for sufficiently large $d$. Hence, by the triangle inequality, $f_d$ can be $\varepsilon$-approximated in $L^2$-norm by the depth-two network $\tilde{f}_d$ for all $d$. 
But by the width-based depth separation result \Cref{lem:Daniely}, we know this is impossible since $\tilde{f}_d$ has width subexponential in $d$. Therefore, contrary to our assumption, it must be the case that $R_2(\hat{f}_d)$ is exponential in $d$.
\end{proof}

\subsection{Approximating $f_d$ in the $L^2$-norm requires exponential $R_2$ cost}
\label{sec:lower bound on generalization}
In this section we adapt the construction of \cite{daniely2017depth} to the target function \begin{equation}
    f_d(\vx) = \psi_{3d}\left(\sqrt{d} \langle \vx^{(1)}, \vx^{(2)} \rangle\right)
\end{equation}
to prove that approximating $f_d$ in the $L^2$-norm to even constant error requires $R_2$ cost that is exponential in dimension:
\begin{lemma}
    \label{lem:need large R2 to approx in L2 norm}
    There exist real numbers $d_0,C > 0$ such that $d > d_0$ and $\Repregbias{2}(f) < 2^{Cd}$ implies that $\|f - f_d\|_{L^2}^2 \ge \frac{1}{50e^2 \pi^2}$.
\end{lemma}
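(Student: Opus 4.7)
The plan is to lift the width-based lower bound of \Cref{lem:Daniely} to a norm-based one by composing it with the Barron-style narrowing of \Cref{lem:can approximate with narrow network with same R2 cost}, and then to replace the black-box use of Daniely's bound with an explicit harmonic computation tailored to the specific target $f_d = \psi_{3d}(\sqrt{d}\langle \vx^{(1)},\vx^{(2)}\rangle)$ so as to pin down the sharpened constant $\tfrac{1}{50 e^2\pi^2}$.

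\textbf{Step 1: reduction to bounded width and bounded weights.} Suppose for contradiction that $f \in \setofnns{2}$ satisfies both $\Repregbias{2}(f) < 2^{Cd}$ and $\|f - f_d\|_{L^2}^2 < \tfrac{1}{50 e^2\pi^2}$, where $C > 0$ is an absolute constant to be fixed later. Apply \Cref{lem:can approximate with narrow network with same R2 cost} with a fixed small $\varepsilon_0 \in (0, \tfrac{1}{5e\pi}(1 - 1/\sqrt{2}))$, obtaining a depth-2 network $\tilde f = f_{\bm\phi}$ of width $\width \le \lceil 3 \Repregbias{2}(f)^2/\varepsilon_0^2 \rceil \le 2^{2Cd + O(1)}$ and weight bound $\|\bm\phi\|_\infty^2 \le 4 \Repregbias{2}(f) \le 2^{Cd + 2}$, with $\|f - \tilde f\|_{L^2} \le \varepsilon_0$. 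Any $C < \tfrac{1}{4}$ then ensures $\|\bm\phi\|_\infty \le 2^d$ and $\width \le 2^{o(d\log d)}$ for all sufficiently large $d$.

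\textbf{Step 2: sharpened $L^2$ lower bound for narrow bounded-weight depth-2 networks approximating $f_d$.} This is the heart of the argument and adapts the construction of \cite{daniely2017depth} specifically to $f_d$. Decompose $L^2(\gX_d)$ into the orthogonal subspaces spanned by products of spherical harmonics on $\mathbb{S}^{d-1} \times \mathbb{S}^{d-1}$, and let $P_k$ denote the orthogonal projection onto the span of joint harmonics of total degree at least $k$, for a threshold $k = \Theta(d\log d)$ to be chosen. Two computations drive the bound:
\begin{enumerate}
\item \emph{Concentration of $f_d$ on high frequencies.} Expand $\psi_{3d}$ as a Fourier series on $[-1,1]$; by the classical formula for the sawtooth, the $\ell^2$-mass of its nonzero frequencies is an explicit constant involving $\sum_{j \ge 1} j^{-2} = \pi^2/6$. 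Feed this expansion through the Gegenbauer expansion of $t \mapsto \cos(j\pi \sqrt{d}\,t)$ in the variable $\langle \vx^{(1)},\vx^{(2)}\rangle$ and use the near-Gaussian concentration of $\sqrt{d}\langle \vx^{(1)},\vx^{(2)}\rangle$ on $\mathbb{S}^{d-1}\times\mathbb{S}^{d-1}$ (which contributes the $e^{-2}$ factor) to conclude $\|P_k f_d\|_{L^2}^2 \ge \tfrac{1}{25 e^2\pi^2}$.
\item \emph{Low-frequency dominance of narrow bounded-weight ReLU networks.} For each unit $[\vw^\top\vx + b]_+$ with $\|\vw\|^2 + b^2 \le 2^{2d}$, use the Hermite/spherical-harmonic expansion of ReLU ridge functions on the sphere (as in \cite{daniely2017depth}) to bound $\|P_k[\vw^\top\vx+b]_+\|_{L^2}^2 \le 2^{-\Omega(d\log d)}$. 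Summing over the at most $\width$ units, using the outer-weight bound $|a_i| \le 2^d$, gives $\|P_k \tilde f\|_{L^2}^2 \le \width \cdot 2^{O(d)} \cdot 2^{-\Omega(d\log d)} = o(1)$ as $d\to\infty$.
\end{enumerate}
Combining these, since $P_k$ is a contraction,
\[
\|f_d - \tilde f\|_{L^2} \ge \|P_k (f_d - \tilde f)\|_{L^2} \ge \|P_k f_d\|_{L^2} - \|P_k \tilde f\|_{L^2} \ge \tfrac{1}{5 e\pi} - o(1).
\]

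\textbf{Step 3: triangle inequality and contradiction.} Combining Steps 1 and 2 with the standing assumption $\|f - f_d\|_{L^2} < \tfrac{1}{5\sqrt{2}\,e\pi}$,
\[
\tfrac{1}{5 e\pi} - o(1) \;\le\; \|f_d - \tilde f\|_{L^2} \;\le\; \|f_d - f\|_{L^2} + \|f - \tilde f\|_{L^2} \;<\; \tfrac{1}{5\sqrt{2}\,e\pi} + \varepsilon_0.
\]
The choice of $\varepsilon_0$ makes the right-hand side strictly smaller than $\tfrac{1}{5 e\pi}$, a contradiction for all $d$ above some $d_0$ chosen large enough that the $o(1)$ term in Step 2 is negligible and that $C < \tfrac{1}{4}$ is admissible. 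This establishes the lemma.

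\textbf{Main obstacle.} The bookkeeping in Steps 1 and 3 is routine once Step 2 is in hand; the difficulty is Step 2, specifically the production of the explicit constant $\tfrac{1}{50 e^2\pi^2}$. This requires simultaneously (a) tracking the Fourier/Gegenbauer expansion of the sawtooth composed with the inner product on $\mathbb{S}^{d-1}\times\mathbb{S}^{d-1}$ and quantifying the contribution of the near-Gaussian concentration of $\sqrt{d}\langle \vx^{(1)},\vx^{(2)}\rangle$ (the source of the $e^{-2}$); and (b) obtaining a sufficiently tight exponential decay estimate on the high-degree spherical-harmonic mass of a single bounded-weight ReLU ridge function, so that the union bound over $\width = 2^{O(d)}$ units with $\|\bm\phi\|_\infty \le 2^d$ still leaves $\|P_k \tilde f\|_{L^2}^2 = o(1)$.
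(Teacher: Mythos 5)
Your route is genuinely different from the paper's, and it contains at least one real gap.

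\paragraph{Different route.} The paper does \emph{not} use the narrowing reduction of \Cref{lem:can approximate with narrow network with same R2 cost} to prove this lemma. It instead adapts Daniely's Theorem 4 directly into norm-based form: after normalizing the first-layer weights so that $\|\vw_k\|^2 + b_k^2 = 1$, each ReLU unit has $L^2$ norm at most $\sqrt{3}|a_k|$ (since $|\vw_k^\top\vx + b_k| \le \sqrt{3}$ on $\gX_d$), so the sum of unit $L^2$ norms appearing in Daniely's bound is controlled by $\sqrt{3}\sum_k|a_k|$, which is $\Repregbias{2}(f)$ up to the bias term (\Cref{lem:reduced depth-2 rep cost}, \Cref{lem:lower bound on generalization by R2 cost}). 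This yields
\[
\|f - f_d\|_{L^2}^2 \ge A_{d,n}(g_d)\left(A_{d,n}(g_d) - \frac{4\sqrt{3}\,\Repregbias{2}(f) + 2\|f\|_{L^2}}{\sqrt{N_{d,n}}}\right)
\]
with no intermediate narrowing, no $\varepsilon_0$ loss, and no width factor. The explicit constant then comes from \Cref{lem:sawtooth far from polynomial} (which proves $A_{d,2d}(\psi_{3d}(\sqrt{d}\,\cdot)) \ge \tfrac{1}{5e\pi}$ by a direct sign-counting argument: the sawtooth has $3d$ cycles, a degree-$(2d-1)$ polynomial has at most $2d-1$ sign changes, so it misses on at least $d+1$ cycles) combined with \Cref{lem:spherical harmonics of order 2d in d dims are 2^d} ($N_{d,2d} > 2^d$). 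The $e^{-2}$ does originate in the near-Gaussian concentration of $\mu_d$, but only through a uniform density lower bound on $|t|\le 1/\sqrt{d}$, not a Fourier/Gegenbauer expansion of the sawtooth. Your narrowing-first strategy is workable in principle but strictly looser: narrowing introduces a width factor of order $2^{2Cd}$ into the sum of unit norms that the direct bound avoids.

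\paragraph{Gap in Step 2(b).} You assert $\|P_k[\vw^\top\vx + b]_+\|_{L^2}^2 \le 2^{-\Omega(d\log d)}$ for a single ReLU unit with $\|\vw\|^2 + b^2 \le 2^{2d}$. This is false. A ReLU ridge is a Lipschitz but non-smooth function of the one-dimensional projection $\vw^\top\vx$; its Gegenbauer coefficients with respect to $\mu_d$ decay only \emph{polynomially} in the degree, and the overall scale can be as large as $\|\vw\| \approx 2^d$. So $\|P_k[\vw^\top\vx+b]_+\|_{L^2}^2$ is at best $\mathrm{poly}(2^d)/\mathrm{poly}(k)$, not $2^{-\Omega(d\log d)}$. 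The exponential gain in Daniely's argument comes from dimension counting, not pointwise high-degree decay: the degree-$n$ spherical harmonics form a space of dimension $N_{d,n}$, and a single ridge's correlation with the high-degree part of $f_d$ is bounded by $\|h_i\|_{L^2}/\sqrt{N_{d,n}}$ via Cauchy-Schwarz in that high-dimensional subspace. Separately, your threshold $k = \Theta(d\log d)$ in Step 2(a) is questionable: $\psi_{3d}$ has only $3d$ cycles, and polynomials of degree $\gg 3d$ can track it well, so the high-degree mass at that threshold need not stay bounded below — the paper deliberately uses degree $2d$, just below the cycle count. Repairing Step 2 would essentially re-derive the paper's \Cref{lem:lower bound on generalization by R2 cost}, at which point the narrowing step becomes superfluous.
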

After outlining the proof of this result, the remainder of this section establishes several auxiliary lemmas used in the proof.
\begin{proof}
    Similar to \cite{daniely2017depth}, we let $\mu_d$ denote the probability distribution obtained by pushing forward the uniform measure on $\bbS^{d-1}$ via the mapping $\vx \mapsto x_1$, and we use $N_{d,n}$ for the dimension of the set of spherical harmonics of order $n$ in $d$ dimensions.
    \Cref{lem:lower bound on generalization by R2 cost} adapts Theorem 4 in \cite{daniely2017depth} to show that
    for any $n \in \N$ and any $f \in \setofnns{2}$, 
    \begin{equation}
    \label{eq:lower bound on R2 in terms of generalization}
        4\sqrt{3}\Repregbias{2}(f) + 2\|f\|_{L^2}
        \ge \sqrt{N_{d,n}}\left(
            A_{d,n}(\psi_{3d}(\sqrt{d} \cdot)) - 
            \frac{\|f - f_d\|_{L^2}^2}{A_{d,n}(\psi_{3d}(\sqrt{d} \cdot))}\right)
    \end{equation}
    where $A_{d,n}(\psi_{3d}(\sqrt{d} \cdot))$ is the distance in the $L^2(\mu_d)$-norm of the function $t \mapsto \psi_{3d}(\sqrt{d} t)$ to the closest polynomial of degree less than $n$. 
    
    We choose $n=2d$. 
    In \Cref{lem:sawtooth far from polynomial}, we show that if $d$ is sufficiently large, then
    $
        A_{d,2d}(\psi_{3d}(\sqrt{d} \cdot) \ge \frac{1}{5e\pi};
    $
    that is, the sawtooth function is bounded away from being a polynomial of degree $2d-1$.
    If    
    $\|f - f_d\|_{L^2}^2 < \frac{1}{50e^2 \pi^2}$, then by the reverse triangle inequality
    \begin{equation}
        \|f\|_{L^2} 
        <  \|f_d\|_{L^2} + \|f - f_d\|_{L^2} 
        \le 1 + \frac{1}{5\sqrt{2}e \pi}.
    \end{equation}
    Plugging this all into \Cref{eq:lower bound on R2 in terms of generalization}, we get
    \begin{equation}
        4\sqrt{3}\Repregbias{2}(f) + 2 + \frac{2}{5\sqrt{2}e \pi}
        \ge \frac{\sqrt{N_{d,2d}}}{10e\pi}    
    \end{equation}
    whenever $\|f - f_d\|_{L^2}^2 < \frac{1}{50e^2 \pi^2}$.
    As shown in \Cref{lem:spherical harmonics of order 2d in d dims are 2^d}, $N_{d,2d} > 2^{d}$ for sufficiently large $d$. We conclude that there exist real numbers $d_0,C > 0$ such that $d > d_0$ and $\Repregbias{2}(f) < 2^{Cd}$ implies that $\|f - f_d\|_{L^2}^2 \ge \frac{1}{50e^2 \pi^2}$.
\end{proof}
\begin{lemma}
    \label{lem:lower bound on generalization by R2 cost}
    Consider a distribution $\xydist_d$ on $\gX_d \times [-1,1]$
    defined as
    \begin{align}
        \vx &\sim \Unif(\gX_d) \\
        y|\vx &= f_d(\vx)
    \end{align}
    for some function inner-product $f_d: \gX_d \rightarrow [-1,1]$ defined as $f_d(\vx) = g_d\left(\langle \vx^{(1)}, \vx^{(2)} \rangle\right)$.
    Then for all $f \in \setofnns{2}$ and $n \in \N$, 
    \begin{equation}
        \|f - f_d\|_{L^2}^2
        \ge A_{d,n}(g_d) \left( 
            A_{d,n}(g_d) - 
            \frac{4\sqrt{3}\Repregbias{2}(f) + 2\|f\|_{L^2}}
                 {\sqrt{N_{d,n}}}
        \right).
    \end{equation}
    where $A_{d,n}(g_d)$ is the distance in the $L^2(\mu_d)$-norm of the function $t \mapsto g_d(t)$ to the closest polynomial of degree less than $n$. 
\end{lemma}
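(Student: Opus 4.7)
The plan is to adapt the duality argument of Theorem~4 in \cite{daniely2017depth} to the product-sphere $\gX_d=\bbS^{d-1}\times\bbS^{d-1}$, using $\Repregbias{2}(f)$ in place of a width/weight bound. Specifically, I will exhibit a unit-norm test function $v\in L^2(\gX_d)$ with $\langle f_d,v\rangle_{L^2}=A:=A_{d,n}(g_d)$ and $|\langle u_{\vw,b},v\rangle|\le\sqrt{3}/\sqrt{N_{d,n}}$ for every normalized ReLU unit $u_{\vw,b}(\vx)=[\vw^\top\vx+b]_+$ with $\|\vw\|^2+b^2=1$. Combined with \Cref{lem:reduced depth-2 rep cost}, which expresses any $f\in\setofnns{2}$ as $f=c+\sum_k a_k u_{\vw_k,b_k}$ with $\sum_k|a_k|+c^2/2$ arbitrarily close to $\Repregbias{2}(f)$, this will give the claim.

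\textbf{Construction of the witness.} Let $p$ be the $L^2(\mu_d)$-best polynomial approximation of degree $<n$ to $g_d$, and set $h(\vx):=(g_d-p)(\langle\vx^{(1)},\vx^{(2)}\rangle)$ and $v:=h/A$. Since $g\mapsto g(\langle\vx^{(1)},\vx^{(2)}\rangle)$ is an isometry from $L^2(\mu_d)$ into $L^2(\gX_d)$, we have $\|h\|_{L^2(\gX_d)}=A$, $\|v\|_{L^2}=1$, and using $g_d-p\perp p$ in $L^2(\mu_d)$, $\langle f_d,v\rangle_{L^2(\gX_d)}=\langle g_d,g_d-p\rangle_{L^2(\mu_d)}/A=A$. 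Moreover $v$ is orthogonal to the constant function, since $g_d-p$ has vanishing degree-$0$ Legendre coefficient.

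\textbf{Per-unit bound.} Let $\{L_k\}_{k\ge 0}$ be the Legendre-type polynomials orthonormal under $\mu_d$, so $h=\sum_{k\ge n}c_k L_k(\langle\vx^{(1)},\vx^{(2)}\rangle)$ with $\sum c_k^2=A^2$. By the addition theorem, $L_k(\langle\vx^{(1)},\vx^{(2)}\rangle)=(1/\sqrt{N_{d,k}})\sum_i Y_i^{(k)}(\vx^{(1)})Y_i^{(k)}(\vx^{(2)})$ for any ONB $\{Y_i^{(k)}\}$ of degree-$k$ harmonics. After rotating each factor independently so that $\vw_1,\vw_2$ point along a fixed axis $e_1$, $u_{\vw,b}$ depends only on $x_1^{(1)},x_1^{(2)}$ and is $SO(d-1)\times SO(d-1)$-invariant. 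Choosing each ONB so that its unique $SO(d-1)$-invariant element $\tilde Z_k$ is one basis vector forces $\int u_{\vw,b}\,Y_i^{(k)}(\vx^{(1)})Y_j^{(k)}(\vx^{(2)})\,d\vx=0$ unless $Y_i^{(k)}=Y_j^{(k)}=\tilde Z_k$, giving $\langle u_{\vw,b},L_k(\langle\vx^{(1)},\vx^{(2)}\rangle)\rangle=M_k^*/\sqrt{N_{d,k}}$ for the single scalar $M_k^*:=\int u_{\vw,b}\,\tilde Z_k(\vx^{(1)})\tilde Z_k(\vx^{(2)})\,d\vx$. The family $\{\tilde Z_k\otimes \tilde Z_k\}_{k\ge 0}$ is orthonormal in $L^2(\gX_d)$, so Bessel's inequality gives $\sum_k (M_k^*)^2\le\|u_{\vw,b}\|_{L^2(\gX_d)}^2\le 3$, where $\|u_{\vw,b}\|_{L^2(\gX_d)}^2\le(\|\vw\|^2+b^2)(\|\vx\|^2+1)=3$ uses $\|\vx\|^2=2$ on $\gX_d$. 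Applying Cauchy-Schwarz to $\langle u_{\vw,b},h\rangle=\sum_{k\ge n}(c_k/\sqrt{N_{d,k}})M_k^*$ with $N_{d,k}\ge N_{d,n}$ for $k\ge n$ yields
\[
|\langle u_{\vw,b},h\rangle|^2\le\Bigl(\sum_{k\ge n}\frac{c_k^2}{N_{d,k}}\Bigr)\Bigl(\sum_{k\ge n}(M_k^*)^2\Bigr)\le\frac{A^2}{N_{d,n}}\cdot 3,
\]
so $|\langle u_{\vw,b},v\rangle|\le\sqrt{3}/\sqrt{N_{d,n}}$.

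\textbf{Assembly.} For any $\epsilon'>0$, by \Cref{lem:reduced depth-2 rep cost} pick $f=c+\sum_k a_k u_{\vw_k,b_k}$ with $\sum_k|a_k|+c^2/2\le\Repregbias{2}(f)+\epsilon'$. Since $\langle 1,v\rangle=0$, letting $\epsilon'\to 0$ gives $|\langle f,v\rangle|\le\sqrt{3}\,\Repregbias{2}(f)/\sqrt{N_{d,n}}$. Using the orthogonal decomposition $f_d=p(\langle\vx^{(1)},\vx^{(2)}\rangle)+h$,
\[
\|f-f_d\|_{L^2}^2=\|f-p(\langle\vx^{(1)},\vx^{(2)}\rangle)\|^2-2\langle f,h\rangle+\|h\|^2\ge A^2-2A|\langle f,v\rangle|\ge A\Bigl(A-\frac{2\sqrt{3}\Repregbias{2}(f)}{\sqrt{N_{d,n}}}\Bigr),
\]
which implies the stated bound since $(4\sqrt{3}\Repregbias{2}(f)+2\|f\|_{L^2})/\sqrt{N_{d,n}}\ge 2\sqrt{3}\Repregbias{2}(f)/\sqrt{N_{d,n}}$ and $A\ge 0$. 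The main obstacle is the per-unit bound: the $SO(d-1)\times SO(d-1)$-invariance reduction combined with Bessel's inequality for the zonal coefficients $M_k^*$ is what produces the decisive $1/\sqrt{N_{d,n}}$ factor; without exploiting the structure of a single ReLU unit, only the trivial $\sqrt{3}$-bound from Cauchy-Schwarz would be available.
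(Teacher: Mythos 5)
Your proposal re-derives the duality argument from scratch, whereas the paper treats Daniely's Theorem~4 as a black box: it takes a normalized parameterization of $f$, bounds the $L^2$-norm of each unit by $\sqrt{3}|a_k|$ and the constant by $\|f\|_{L^2}+\sqrt{3}\sum_k|a_k|$, plugs these into Daniely's inequality, and passes to the infimum via \Cref{lem:reduced depth-2 rep cost}. Your route is more self-contained, and — because you observe $\langle 1,v\rangle=0$ so the outer bias $c$ drops out — it actually yields the tighter bound $A(A-2\sqrt{3}\Repregbias{2}(f)/\sqrt{N_{d,n}})$, which implies the stated one.

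There is one imprecise step in the per-unit bound. After you rotate the two sphere factors independently so that $\hat\vw_1,\hat\vw_2$ both align with $e_1$, the unit $u_{\vw,b}$ becomes $\mathrm{Stab}(e_1)\times\mathrm{Stab}(e_1)$-invariant, but the kernel $\langle\vx^{(1)},\vx^{(2)}\rangle$ does \emph{not} stay put — it becomes $\langle\vx^{(1)},R\vx^{(2)}\rangle$ with $R=R_1R_2^{-1}$. The addition theorem then gives a \emph{twisted} diagonal sum $\sum_i Y_i(\vx^{(1)})Y_i(R\vx^{(2)})$, so the claimed identity $\langle u_{\vw,b},L_k(\langle\cdot\rangle)\rangle = M_k^*/\sqrt{N_{d,k}}$ is off by the overlap factor $P_k(\hat\vw_1^\top\hat\vw_2)=\langle Z_k^{(\hat\vw_1)},Z_k^{(\hat\vw_2)}\rangle$, which equals $1$ only when $\hat\vw_1=\hat\vw_2$. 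Fortunately $|P_k|\le 1$, so the final bound survives. A cleaner way to reach it is to skip the rotation: average $u_{\vw,b}$ over $\mathrm{Stab}(\hat\vw_1)\times\mathrm{Stab}(\hat\vw_2)$ directly, write $\langle u_{\vw,b},L_k(\langle\cdot\rangle)\rangle = P_k(\hat\vw_1^\top\hat\vw_2)\,M_k/\sqrt{N_{d,k}}$ with $M_k=\langle u_{\vw,b},Z_k^{(\hat\vw_1)}\otimes Z_k^{(\hat\vw_2)}\rangle$, and apply Bessel to the orthonormal family $\{Z_k^{(\hat\vw_1)}\otimes Z_k^{(\hat\vw_2)}\}_k$. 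You should also state that $N_{d,k}\ge N_{d,n}$ for all $k\ge n$ (this holds for $d\ge 2$ and is needed in your Cauchy--Schwarz step). With those repairs the argument is correct.
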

\begin{proof}
    Let $\bm\phi = (\mW,\vb,\va,c)$ be an arbitrary parameterization of $f$ with $\|\vw_k\|_2^2 + |b_k|^2 = 1$ for each unit $k$. That is, $f(\vx) 
        = \sum_{k} a_k \left[\vw_k^\top \vx + b_k \right]_+ + c.$
    We now upper bound the $L^2$-norm of each ReLU unit in $\bm\phi$. By Cauchy-Schwarz, for all $\vx \in \gX_d$ we have 
    \begin{equation}
        |\vw_k^\top \vx + b_k| \le \sqrt{\|\vw_k\|_2^2 + |b_k|^2} \sqrt{\|\vx\|_2^2 + 1} = \sqrt{3}.
    \end{equation}
    Thus
    \begin{equation}
        \left\|a_k \left[\vw_k^\top \cdot + b_k \right]_+\right\|_{L^2}
        = \sqrt{\E_{\vx \sim \Unif(\gX_d)}
        \left[a_k^2 \left[\vw_k^\top\vx + b_k \right]_+^2\right]}
        \le \sqrt{3}|a_k|.
    \end{equation}
    Additionally, 
    \begin{align}
        \|c\|_{L^2}
        &= \left\|f - \sum_{k} a_k \left[\vw_k^\top \cdot + b_k \right]_+\right\|_{L^2} \\
        &\le  \left\|f\right\|_{L^2} + \sum_k \left\|a_k \left[\vw_k^\top \cdot + b_k \right]_+\right\|_{L^2} \\
        &\le  \left\|f\right\|_{L^2} + \sqrt{3} \sum_k |a_k|.
    \end{align}
    By Theorem 4 in \cite{daniely2017depth}, \begin{align}
        \|f - f_d\|_{L^2}^2
        &\ge A_{d,n}(g_d) \left( 
            A_{d,n}(g_d) - 
            \frac{
                2 \sum_k 
                    \left\|a_k \left[\vw_k^\top \cdot + b_k \right]_+\right\|_{L^2}  
                + 2 \|c\|_{L^2} 
            }
            {\sqrt{N_{d,n}}}
        \right) \\
        &\ge A_{d,n}(g_d) \left( 
            A_{d,n}(g_d) - 
            \frac{
                2 \left\|f\right\|_{L^2} + 4\sqrt{3} \sum_k |a_k|
            }
            {\sqrt{N_{d,n}}}
        \right) \label{eq:applying daniely theorem 4}
    \end{align}
    We now take the supremum of the right-hand side of \eqref{eq:applying daniely theorem 4} over all such parameterizations $\bm\phi$. By \Cref{lem:reduced depth-2 rep cost}, this gives the desired result.
\end{proof}
The next lemma is analogous to \cite[Lemma 5]{daniely2017depth} but for the sawtooth function instead of a sinusoid.
\begin{lemma}
\label{lem:sawtooth far from polynomial}
    If $d$ is sufficiently large, then $A_{d,2d}(\psi_{3d}(\sqrt{d} \cdot) \ge \frac{1}{5e\pi}$.
\end{lemma}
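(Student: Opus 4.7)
The plan is to use a direct zero-counting argument exploiting the piecewise-linear structure of $\psi_{3d}$, rather than adapting the Hermite/Fourier machinery that \cite{daniely2017depth} uses for sinusoidal targets. The intuition is that a polynomial of degree less than $2d$ cannot track the $\Theta(d)$ sign changes of $\psi_{3d}(\sqrt d\,\cdot)$ within the support of the rescaled measure $\mu_d$, so the $L^2(\mu_d)$-error must remain bounded below by a fixed constant independent of $d$.

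First, I would label the points in $[-1/\sqrt d,1/\sqrt d]$ at which $\psi_{3d}(\sqrt d\,\cdot)$ attains its extreme values $\pm 1$ as $t_1<t_2<\cdots<t_M$, with $M=\Theta(d)$ and the values alternating (the exact constant depends on whether $\psi_n$ is a triangle wave or discontinuous sawtooth, but $M=6d+1$ is natural in the continuous piecewise-linear case). Any polynomial $p$ of degree less than $2d$ has at most $2d-1$ real zeros, so on at least $M-2d=\Omega(d)$ of the $M-1$ consecutive intervals $(t_i,t_{i+1})$, the polynomial $p$ has constant sign. On each such bad interval, $\psi_{3d}(\sqrt d\,\cdot)$ is linear and transitions between the endpoint values $\pm 1$, so on the quarter of the interval where $\psi_{3d}(\sqrt d\,\cdot)$ has magnitude at least $1/2$ and sign opposite to $p$, we have $(\psi_{3d}(\sqrt d\,t)-p(t))^2\ge 1/4$. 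Adjacent extrema are separated by $\Theta(1/d^{3/2})$ in $t$-space, so each bad interval contributes a positive-measure chunk of length $\Theta(1/d^{3/2})$ on which the squared error is bounded below.

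Summing over the $\Omega(d)$ bad intervals and invoking the density bound $\rho_d(t)\ge c_d(1-1/d)^{(d-3)/2}$ valid for $|t|\le 1/\sqrt d$, with $c_d=\Gamma(d/2)/(\sqrt\pi\,\Gamma((d-1)/2))\sim \sqrt{d/(2\pi)}$ and $(1-1/d)^{(d-3)/2}\to e^{-1/2}$, I obtain
\begin{equation*}
\int_{-1}^1 (\psi_{3d}(\sqrt d\,t)-p(t))^2\,d\mu_d(t)\;\ge\;\frac{C}{\sqrt{2\pi e}}\,(1+o_d(1))
\end{equation*}
for some absolute constant $C>0$ of order $1/10$ to $1/20$ depending on the exact shape of the wave. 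The resulting limit is of order $10^{-2}$, which comfortably exceeds the target $(5e\pi)^{-2}\approx 5\times 10^{-4}$ for all $d$ past an explicit threshold $d_0$, giving the stated bound $A_{d,2d}(\psi_{3d}(\sqrt d\,\cdot))\ge 1/(5e\pi)$.

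The main obstacle will be bookkeeping the finite-$d$ corrections cleanly to justify the explicit numerical constant $\tfrac{1}{5e\pi}$ rather than merely a positive asymptotic lower bound, and pinning down the precise threshold $d_0$. The density asymptotic $c_d(1-1/d)^{(d-3)/2}/\sqrt d\to 1/\sqrt{2\pi e}$ is routine via Stirling, and the extremum counts together with monotonicity on each half-cycle follow directly from the definition of $\psi_n$; the generous factor-of-twenty slack between the asymptotic constant and the target value leaves ample room for these finite-size losses. A secondary consideration is handling the behavior on the two boundary segments near $\pm 1/\sqrt d$, but since we only lose $O(1)$ of the $\Omega(d)$ bad intervals by discarding them, this does not affect the conclusion.
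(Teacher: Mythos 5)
Your proposal is correct and follows essentially the same route as the paper: the paper's own proof of this lemma is already an elementary zero-counting argument (it imports from \cite{daniely2017depth} only the density lower bound $d\mu_d(t)\ge \sqrt d/(2e\pi)$ on $|t|\le 1/\sqrt d$ together with the change of variables $s=\sqrt d\,t$, not any Hermite/Fourier machinery), partitioning $[-1,1]$ in $s$-space into the $n=3d$ full cycles of $\psi_n$, and on each of the $\ge n-2d+1$ cycles where $p$ cannot change sign, integrating $\psi_n^2$ over the opposite-sign half-cycle. Your half-cycle partition and the cruder $1/4$-on-a-quarter lower bound are a cosmetic variant that yields a constant of the same order and still clears the stated target $1/(5e\pi)$.
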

\begin{proof}
    By definition,
    \begin{equation}
        A_{d,2d}(\psi_{3d}(\sqrt{d} \cdot) := \min_{p \in \R[x;2d-1]} \|\psi_{3d}(\sqrt{d}\;\cdot) - p\|_{L^2(\mu_d)}
    \end{equation}
    where $\R[x;2d-1]$ denotes the set of polynomials of degree less than $2d$
    and $d\mu_d(t) := \frac{\Gamma(\frac{d}{2})}{\sqrt{\pi}\Gamma(\frac{d-1}{2})}(1-t^2)^\frac{d-3}{2}$.
    As shown in the proof of \cite[Lemma 5]{daniely2017depth}, for $|t| \le \frac{1}{\sqrt{d}}$ and $d$ sufficiently large, we have $d\mu_d(t) \ge \frac{\sqrt{d}}{2e\pi}$,
    and for all $p \in \R[x;2d-1]$ and $n \ge 2d-1$, 
    \begin{align}
        \|\psi_{n}(\sqrt{d}\;\cdot) - p\|_{L^2(\mu_d)}^2
        &= \int_{-1}^1 (\psi_{n}(\sqrt{d} t) - p(t))^2 d\mu_d(t)\\
        &\ge \frac{\sqrt{d}}{2e\pi}\int_{-d^{-1/2}}^{d^{-1/2}} (\psi_{n}(\sqrt{d} t) - p(t))^2 dt\\
        &= \frac{1}{2e\pi}\int_{-1}^1 (\psi_{n}(t) - p(t/\sqrt{d}))^2 dt.
    \end{align}
    Consider the intervals $I_i = (-1 + \frac{2i-2}{n}, -1 + \frac{2i}{n})$, $i = 1, \ldots n$, of width $2/n$. Each interval contains a full cycle of the sawtooth function. 
    Observe that $p(t/\sqrt{d})$ is a polynomial of degree at most $2d-1$, and so it has at most $2d-1$ roots in $[-1,1]$. On at least $n - 2d +1$ of the intervals $I_i$, the polynomial $p(t/\sqrt{d})$ does not change signs. On each interval $I_i$ where $p(t/\sqrt{d})$ does not change signs, $\psi_{n}$ is positive on 
    half of $I_i$ 
    and negative on 
    the other half of $I_i$.
    Thus, on 
    at least one subinterval of $I_i$ of width $1/n$, 
    $\psi_{n}(t)$ has the same sign as $p(t/\sqrt{d})$. It follows that
    \begin{align}
        \int_{-1}^1 (\psi_{n}(t) - p(t/\sqrt{d}))^2 dt
        &\ge (n - 2d +1) \int_0^{1/n} \psi_{n}^2(t) dt \\
        &= 2 (n - 2d +1) \int_0^{1/2n} (-2n t)^2 dt \\
        &= 2 (n - 2d +1) (2n)^2 \frac{1}{3(2n)^3} \\
        &= \frac{n - 2d +1}{3n}
    \end{align}
    where the first equality comes from the symmetry in $\psi_n$. 
    Thus $\|\psi_{n}(\sqrt{d}\;\cdot) - p\|^2_{L^2(\mu_d)} \ge \frac{n - 2d +1}{6ne\pi}$. In particular, choosing $n = 3d$ gives 
    \begin{equation}
        A_{d,2d}(\psi_{3d}(\sqrt{d} \cdot)^2 \ge \frac{d+1}{18de\pi} \ge \frac{1}{18e\pi} \ge \frac{1}{25e^2\pi^2}.
    \end{equation}
\end{proof}
\begin{lemma}
\label{lem:spherical harmonics of order 2d in d dims are 2^d}
$N_{d,2d} > 2^{d}$ for sufficiently large $d$.
\end{lemma}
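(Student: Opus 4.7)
The plan is to compute $N_{d,2d}$ explicitly from the standard formula for the dimension of the space of degree-$n$ spherical harmonics on $\bbS^{d-1}$, and then apply Stirling's approximation (or elementary binomial bounds) to see that it grows much faster than $2^d$.

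Recall that the dimension of the space of degree-$n$ spherical harmonics in $d$ dimensions is
\begin{equation*}
    N_{d,n} \;=\; \frac{(2n+d-2)\,(n+d-3)!}{n!\,(d-2)!}.
\end{equation*}
Substituting $n=2d$ gives
\begin{equation*}
    N_{d,2d} \;=\; \frac{(5d-2)\,(3d-3)!}{(2d)!\,(d-2)!} \;=\; \frac{5d-2}{2d}\,\binom{3d-3}{d-2}.
\end{equation*}
So my first step is just to write $N_{d,2d}$ in this binomial form, which makes the asymptotics transparent.

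Next I would lower-bound the binomial coefficient. The core fact I would invoke is that $\binom{3k}{k} \sim \frac{1}{\sqrt{k}}\bigl(\tfrac{27}{4}\bigr)^{k}$ (up to a universal constant) by Stirling, or equivalently the elementary bound $\binom{3k}{k} \geq (27/4)^{k}/(3k+1)$ obtained from the trinomial identity $(1+1+1)^{3k} \geq \binom{3k}{k,k,k}$ combined with $\binom{3k}{k,k,k}=\binom{3k}{k}\binom{2k}{k}$ and $\binom{2k}{k}\le 4^{k}$. Either bound gives $\binom{3d-3}{d-2} \geq c \cdot (27/4)^{d}/\mathrm{poly}(d)$ for some constant $c>0$ and all $d$ large enough.

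Finally, since $27/4 > 2$, the ratio $(27/4)^d / 2^d = (27/8)^d$ grows exponentially, so $N_{d,2d}/2^d \to \infty$ as $d\to\infty$, which yields $N_{d,2d} > 2^d$ for all sufficiently large $d$. There is no real obstacle here — the whole proof is bookkeeping with Stirling — and the only mild care required is making sure the polynomial-in-$d$ factor absorbed by Stirling does not interact badly with the exponential gap, which it clearly does not because the base $27/8 > 1$ is bounded away from $1$.
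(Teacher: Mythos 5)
Your proposal is correct and essentially the same as the paper's: both start from the formula $N_{d,n} = \frac{(2n+d-2)(n+d-3)!}{n!(d-2)!}$, substitute $n=2d$, and apply Stirling to conclude that $\log_2 N_{d,2d}/d$ converges to a constant strictly larger than $1$ (in fact to $3\log_2 3 - 2 \approx 2.75$); your rewrite via $\binom{3d-3}{d-2}$ is just a cosmetic repackaging of the same factorial asymptotics. One small wrinkle in the ``elementary'' backup bound: the inequality $(1+1+1)^{3k} \geq \binom{3k}{k,k,k}$ points the wrong way for a \emph{lower} bound on the multinomial --- what you actually want is $3^{3k} \leq \binom{3k+2}{2}\binom{3k}{k,k,k}$ (number of terms times the maximum term), which yields a $\Theta(k^2)$ rather than $\Theta(k)$ polynomial in the denominator; this is immaterial for the conclusion since the gap is exponential, and the Stirling route you also invoke is clean.
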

\begin{proof}
The quantity $N_{d,n}$ is defined to be the dimension of the set of spherical harmonics of order $n$ in $d$ dimensions:
    \begin{equation}
        N_{d,n} := \frac{(2n + d - 2)(n+d-3)!}{n! (d-2)!}.
    \end{equation}
Using Stirling's approximation,
    \begin{align*}
        \lim_{d \rightarrow \infty} \frac{\log_2(N_{d,2d})}{d}
        &= \lim_{d \rightarrow \infty} \frac{\log_2(4d + d - 2) + \log_2(2d + d - 3)! - \log_2(2d)! - \log_2(d-2)!}{d} \\
        &= \lim_{d \rightarrow \infty} \frac{(3d - 3)\log_2(3d - 3) - (2d)\log_2(2d) - (d-2)\log_2(d-2)}{d} \\
        &> \lim_{d \rightarrow \infty} \frac{(3d-3)\log_2(2d) - (2d)\log_2(2d) - (d-2)\log_2(d)}{d} \\    
        &= \lim_{d \rightarrow \infty} \frac{d\log_2(2d) - d\log_2(d)}{d}  = 1.
    \end{align*}
Therefore there exists a $d_0$ such that $d \ge d_0$ implies $\frac{\log_2(N_{d,2d})}{d} > 1$.
\end{proof}

\subsection{Approximating $f_d$ in the $L^\infty$-norm with polynomial $R_3$ Cost}
\label{sec:existence of good approximator}

In this section, we show that there is a depth-3 network $f_{d,K}$ that well approximates $f_d(\vx) = \psi_{3d}\left(\sqrt{d} \langle \vx^{(1)}, \vx^{(2)} \rangle\right)$ and bound its $R_3$ cost. 
The sawtooth function $\psi_n$ can be expressed as a depth-2 network of width $2n + 2$ as follows:
\begin{equation}
\label{eq:sawtooth as a relu}
    \psi_n(t) = -2n[t + 1]_+ + 2n[ t - 1]_+
    + 4n \sum_{j=1}^n (-1)^{j+n+1} \left[ t - \frac{2j-1}{2n}\right]_+ + (-1)^{j+n} \left[ t + \frac{2j-1}{2n}\right]_+.
\end{equation}
\begin{figure}[ht!]
    \centering
\includegraphics[width=0.6\textwidth]{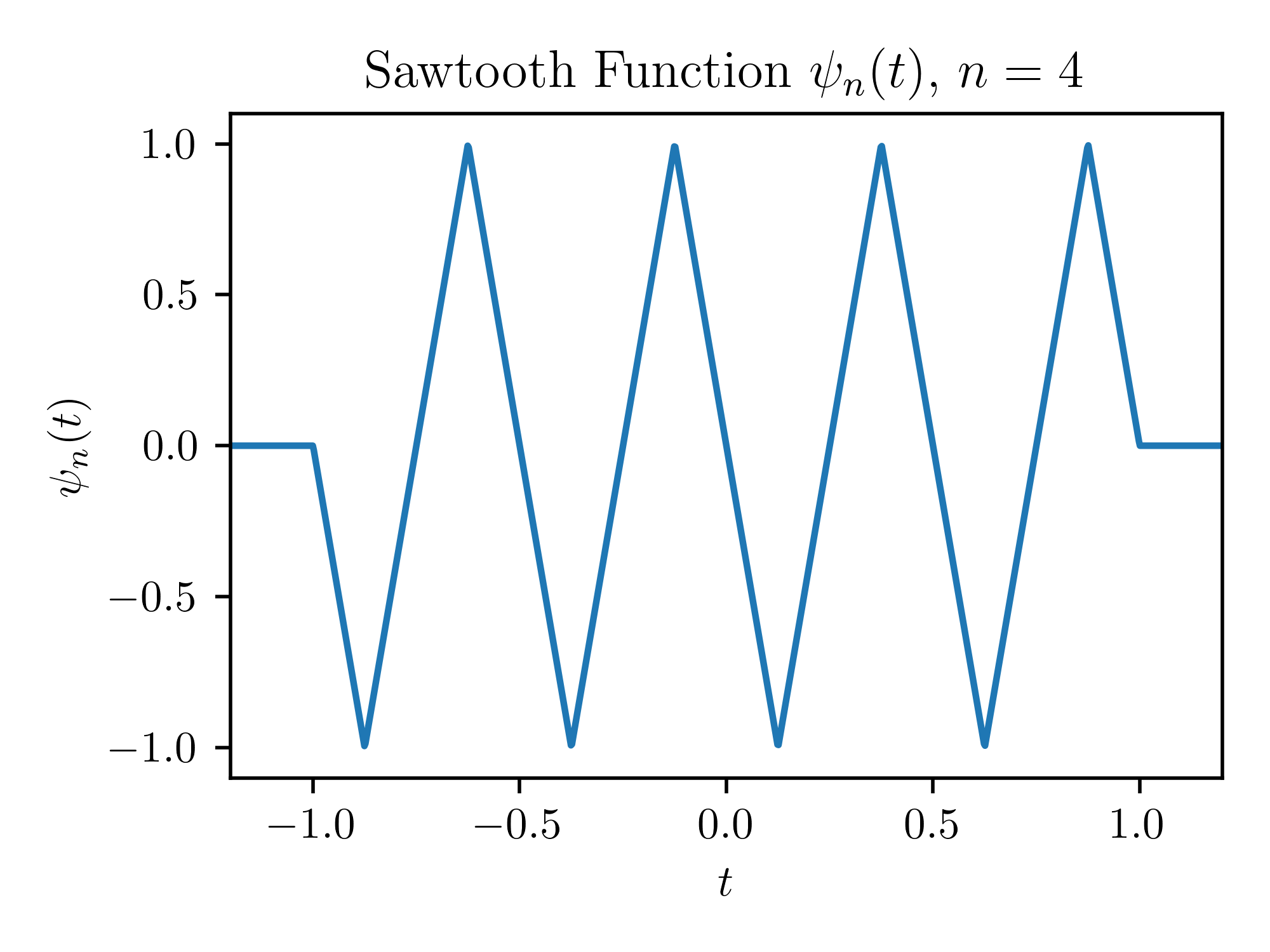}
      \caption{The sawtooth function $\psi_{n}: \R \rightarrow [-1,1]$ with $n=4$. The function $\psi_{n}$ has $n$ cycles in $[-1,1]$ and is equal to zero outside $[-1,1]$.}
      \label{fig:sawtooth}
    \end{figure}
\begin{lemma}
\label{lem:param of sawtooth}
    For all scalars $\beta > 0$ and $n \in \N$, there are vectors $\va,\vu,\vq \in \R^{2n+2}$ such that $\psi_n(\beta t) = \va^\top[\vu t + \vq]_+$, $\vu^\top\vq = 0$, $\|\vu\| = 1$, and $\|\va\|^2 + \|\vq\|^2 = O(n^4 \beta^2 + \beta^{-2})$.
\end{lemma}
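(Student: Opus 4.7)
The plan is to start from the explicit depth-2 decomposition of $\psi_n$ given in the formula displayed just before the lemma, substitute $t \mapsto \beta t$, and then apply positive homogeneity of the ReLU to rescale each of the $2n+2$ hidden units into a common normalized form. Concretely, after substitution we have
\begin{equation*}
\psi_n(\beta t) \;=\; \sum_{k=1}^{2n+2} a'_k\,[\beta t + c_k]_+,
\end{equation*}
where the multiset of biases is $\{c_k\} = \{+1,-1\} \cup \{\pm\tfrac{2j-1}{2n} : j=1,\dots,n\}$ and each outer weight satisfies $|a'_k| \le 4n$. Two properties of this representation will drive the proof: (i) $|c_k|\le 1$ for every $k$, and (ii) the biases are symmetric around $0$, so $\sum_k c_k = 0$.

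I will then use the identity $a'[\beta t + c]_+ = \tfrac{a'}{\lambda}[\lambda\beta t + \lambda c]_+$ for any $\lambda>0$ and apply it with the uniform choice $\lambda = 1/(\sqrt{2n+2}\,\beta)$, producing parameters
\begin{equation*}
u_k := \tfrac{1}{\sqrt{2n+2}}, \qquad q_k := \tfrac{c_k}{\sqrt{2n+2}\,\beta}, \qquad a_k := \sqrt{2n+2}\,\beta\,a'_k,
\end{equation*}
so that $\psi_n(\beta t) = \va^\top[\vu t + \vq]_+$. Since all the inner weights on $t$ in the original decomposition are positive and equal to $\beta$, I can use the same sign in $u_k$ throughout; this is why a uniform rescaling works.

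Verifying the three required conditions is then a direct calculation. $\|\vu\|^2 = (2n+2)\cdot\tfrac{1}{2n+2}=1$, as needed. The orthogonality $\vu^\top\vq = \tfrac{1}{(2n+2)\beta}\sum_k c_k = 0$ follows from the bias symmetry observed above. For the norm bound, $\|\vq\|^2 \le (2n+2)\cdot\tfrac{1}{(2n+2)\beta^2}=\beta^{-2}$ using $|c_k|\le 1$, while $\|\va\|^2 \le (2n+2)\cdot (4n)^2 (2n+2)\beta^2 = O(n^4\beta^2)$ since $|a'_k|\le 4n$. Adding these gives $\|\va\|^2+\|\vq\|^2 = O(n^4\beta^2 + \beta^{-2})$.

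There is no real obstacle here, since every step is a direct substitution, but the one thing worth flagging is that the whole argument is balanced by the single choice of $\lambda$: the factor $\sqrt{2n+2}\,\beta$ is what pushes the outer-weight cost up to $n^4\beta^2$ while simultaneously pulling the bias cost down to $\beta^{-2}$, which is what matches the two terms in the stated $O(\cdot)$ bound. If instead one let $\lambda$ vary per unit, one could trade these terms off, but the uniform choice already suffices and keeps $\|\vu\|=1$ automatic. The symmetry of the bias set is the only nontrivial structural fact being used; everything else is bookkeeping.
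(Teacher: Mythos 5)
Your proof is correct and is essentially the same as the paper's: the paper defines $\vu_0 = \beta\vone$, $\vq_0$, $\va_0$ from the displayed expansion of $\psi_n$ and then renormalizes via $\vu = \vu_0/\|\vu_0\|$, $\vq = \vq_0/\|\vu_0\|$, $\va = \|\vu_0\|\va_0$, which is precisely your uniform choice $\lambda = 1/(\sqrt{2n+2}\,\beta) = 1/\|\vu_0\|$. The bookkeeping (the $|a'_k|\le 4n$, $|c_k|\le 1$ bounds and the zero-sum symmetry of the biases giving $\vu^\top\vq = 0$) matches the paper's verbatim.
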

\begin{proof}
    Denote the vector of all ones by $\vone.$
    Using \Cref{eq:sawtooth as a relu}, define vectors $\va_0,\vu_0,\vq_0 \in \R^{2n+2}$ so that
    $\psi_n(\beta t) = \va_0^\top[\vu_0 t + \vq_0]_+$
    where $\vu_0 = \beta \vone$,
    \begin{equation}
        \vq_0 = \begin{bmatrix}
            1 , &
            -1 , &
            -\frac{1}{2n} , &
            \frac{1}{2n} , &
            -\frac{3}{2n} , &
            \frac{3}{2n} , &
            \cdots, &
            -\frac{2n-1}{2n} , &
            \frac{2n-1}{2n}
        \end{bmatrix}^\top, 
    \end{equation}
    and
    \begin{equation}
        \va_0 = \begin{bmatrix}
            -2n , &
            2n , &
            \pm4n , &
            \pm4n  , &
            \cdots  , &
            \pm4n  , &
            \pm4n
        \end{bmatrix}^\top.
    \end{equation}
    Observe that 
    \begin{align}
        \|\vu_0\|^2 &= (2n+2) \beta^2 \\
        \|\vq_0\|^2 &\le (2n+2) \\ 
        \|\va_0\|^2 &\le (2n+2) 16n^2. 
    \end{align}
    Let $\vu = \frac{\vu_0}{ \|\vu_0\|}$, $\vq = \frac{\vq_0}{ \|\vu_0\|}$, and 
    $\va =  \|\vu_0\| \va_0$. Then 
    $\psi_n(\beta t) = \va^\top[\vu t + \vq]_+$ by the homogeneity of ReLU. We also observe that $\|\vu\| = 1$ and $\vu^\top\vq = 0$. Finally, 
    \begin{equation}
        \|\va\|^2 + \|\vq\|^2
        \le (2n+2)^2 16n^2 \beta^2 + \beta^{-2}
        = O(n^4 \beta^2 + \beta^{-2}).
    \end{equation}
\end{proof}
The next two lemmas allow us to get an approximation of the inner product by approximating the square function.
\begin{lemma}
\label{lem:approximating the square}
    For all $s > 0$ and $K \in \N$, 
    the function $f_{square}^{s,K}(t) := \frac{2s}{K} \sum_{k=1}^{K} [t - \frac{sk}{K}]_+ + [-t - \frac{sk}{K}]_+$ with $2K$ ReLU units
    satisfies
    \[
        \sup_{t \in [-s,s]} |f_{square}^{s,K}(t) - t^2| \le s^2\left(\frac{1}{K} + \frac{1}{K^2}\right).
    \]
\end{lemma}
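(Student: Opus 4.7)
The plan is to exploit evenness of $f_{square}^{s,K}$ and reduce the problem to a direct, piecewise computation on $[0,s]$. Because $[-t - sk/K]_+ = 0$ whenever $t \ge 0$ and $k \ge 1$, and the function is invariant under $t \mapsto -t$, it suffices to bound the error on $[0, s]$, where $f_{square}^{s,K}(t) = \frac{2s}{K}\sum_{k=1}^{K}[t - sk/K]_+$. Partition $[0, s]$ at the knots $sm/K$ for $m = 0, 1, \ldots, K$. On each piece $t \in [sm/K, s(m+1)/K]$ with $m \in \{0, 1, \ldots, K-1\}$, precisely the units $k = 1, \ldots, m$ are active, so a direct calculation gives
\[
f_{square}^{s,K}(t) = \frac{2s}{K}\sum_{k=1}^{m}\bigl(t - \tfrac{sk}{K}\bigr) = \frac{2smt}{K} - \frac{s^2 m(m+1)}{K^2}.
\]

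The key step is then a single completion of the square:
\[
t^2 - f_{square}^{s,K}(t) \;=\; t^2 - \frac{2smt}{K} + \frac{s^2 m(m+1)}{K^2} \;=\; \left(t - \frac{sm}{K}\right)^2 + \frac{s^2 m}{K^2}.
\]
This identity is essentially the whole content of the proof. It shows that $f_{square}^{s,K}(t) \le t^2$ pointwise on $[0,s]$, so the approximation is one-sided and we may drop absolute values. It also cleanly decomposes the error into a ``local'' part $(t - sm/K)^2 \le (s/K)^2$ (since $t - sm/K \in [0, s/K]$ on the piece) and a ``global'' offset $s^2 m / K^2 \le s^2/K$ (since $m \le K$). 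Adding these two bounds yields $|t^2 - f_{square}^{s,K}(t)| \le s^2 (1/K + 1/K^2)$ for all $t \in [0,s]$, and the symmetry $f_{square}^{s,K}(-t) = f_{square}^{s,K}(t)$ extends the bound to $t \in [-s,s]$.

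There is essentially no hard step here: the proof is a two-line calculation once the active-set decomposition is written down. The only minor bookkeeping concern is the right endpoint $t = s$, which falls into the final piece $[s(K-1)/K, s]$ with $m = K - 1$, and the same identity (with the same uniform bound) applies without modification.
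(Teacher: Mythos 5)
Your proof is correct and follows essentially the same strategy as the paper's: reduce to $[0,s]$ by evenness, compute the piecewise-linear form of $f_{square}^{s,K}$ by identifying the active units, and bound the resulting error term. The only difference is cosmetic—where the paper writes a single formula for the error using $\lfloor Kt/s\rfloor$ and $\{Kt/s\}$, you complete the square on each piece, which cleanly exposes the useful fact that $f_{square}^{s,K}(t)\le t^2$ so the error is one-sided.
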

\begin{proof}
    Observe that $f_{square}^{s,K}(-t) = f_{square}^{s,K}(t)$, 
    so it suffices to consider $t \in [0,s]$. Given $t \in [0,s]$, all of the $[-t - \frac{sk}{K}]_+$ terms in $f_{square}^{s,K}$ are equal to zero, and the $[t - \frac{sk}{K}]_+$ terms are nonzero if and only if $k < \frac{Kt}{s}$. That is, 
    \[
        f_{square}^{s,K}(t) 
        = \frac{2s}{K} \sum_{k=1}^{\floor{\frac{Kt}{s}}} \left(t - \frac{sk}{K}\right).
    \]
    We use the summation formula $\sum_{j=1}^{n} j = \frac{n(n+1)}{2}$ and the notation $\{x\} := x - \floor{x} \in [0,1)$ to show that this quantity is approximately $t^2$; it is straightforward to verify that
    \begin{align*}
        f_{square}^{s,K}(t) 
        &= 
        t^2
        - \frac{st}{K} 
            - \frac{s^2}{K^2} \left\{\frac{Kt}{s}\right\} \left(
                \left\{\frac{Kt}{s}\right\} - 1 
            \right).
    \end{align*}
    Thus,
    \begin{equation}
        \sup_{t \in [-s,s]} |f_{square}^{s,K}(t) - t^2| 
        = \sup_{t \in [0,s]} \left|\frac{st}{K} 
            + \frac{s^2}{K^2} \left\{\frac{Kt}{s}\right\} \left(
                \left\{\frac{Kt}{s}\right\} - 1 
            \right) \right|
        \le \frac{s^2}{K} + \frac{s^2}{K^2}.
    \end{equation}
\end{proof}
\begin{lemma}
\label{lem:approximating the inner product}
    The function 
    \begin{equation}
        f_{inner}^K(\vx)
            := \sum_{i=1}^d f_{square}^{\sqrt{2},K}\left(\frac{1}{\sqrt{2}}
                \begin{bmatrix}
                    \ve_i \\ \ve_i
                \end{bmatrix}^\top 
            \vx\right) - 1
    \end{equation}
    satisfies
    \begin{equation}
        \sup_{\vx \in \gX_d} |f_{inner}^K(\vx) - \langle \vx^{(1)}, \vx^{(2)} \rangle| \le 2d \left(\frac{1}{K} + \frac{1}{K^2}\right).
    \end{equation}
    Further, for any scalar $\beta > 0$, the function $\beta^{-1} f_{inner}^K(\vx)$ is in $\setofnns{2,2Kd}$ and 
    \begin{equation}
        R_2(\beta^{-1} f_{inner}^K(\vx);2Kd) = O(d \beta^{-1} + \beta^{-2}).
    \end{equation}
\end{lemma}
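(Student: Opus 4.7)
The plan is to decompose the inner product via a polarization identity and then approximate each resulting square via \Cref{lem:approximating the square}, and separately to write down an explicit depth-2 parametrization of $\beta^{-1} f_{inner}^K$ and apply \Cref{lem:reduced depth-2 rep cost} to bound the representation cost.

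For the approximation bound, first observe that since $\vx \in \gX_d$ we have $\|\vx^{(1)}\|^2 = \|\vx^{(2)}\|^2 = 1$, so the polarization identity gives
\begin{equation}
\langle \vx^{(1)}, \vx^{(2)} \rangle
= \tfrac{1}{2}\|\vx^{(1)} + \vx^{(2)}\|^2 - 1
= \sum_{i=1}^d t_i(\vx)^2 - 1,
\end{equation}
where $t_i(\vx) := \tfrac{1}{\sqrt{2}}\begin{bmatrix}\ve_i \\ \ve_i\end{bmatrix}^\top \vx = \tfrac{1}{\sqrt{2}}(x_i^{(1)} + x_i^{(2)})$. Cauchy--Schwarz (or the triangle inequality) gives $|t_i(\vx)| \le \sqrt{2}$ for every $\vx \in \gX_d$. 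Applying \Cref{lem:approximating the square} with $s = \sqrt{2}$ to each coordinate,
\begin{equation}
\left|f_{inner}^K(\vx) - \langle \vx^{(1)},\vx^{(2)}\rangle\right|
= \left|\sum_{i=1}^d\left(f_{square}^{\sqrt{2},K}(t_i(\vx)) - t_i(\vx)^2\right)\right|
\le 2d\left(\frac{1}{K} + \frac{1}{K^2}\right),
\end{equation}
which proves the first claim.

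For the network realization and the representation cost, unfold the definition of $f_{square}^{\sqrt{2},K}$ to write
\begin{equation}
\beta^{-1} f_{inner}^K(\vx)
= \sum_{i=1}^d \sum_{k=1}^K \frac{2\sqrt{2}}{K\beta}\Bigl(\bigl[t_i(\vx) - \tfrac{\sqrt{2}k}{K}\bigr]_+ + \bigl[-t_i(\vx) - \tfrac{\sqrt{2}k}{K}\bigr]_+\Bigr) - \beta^{-1}.
\end{equation}
This is a depth-2 ReLU network with exactly $2Kd$ hidden units, so $\beta^{-1} f_{inner}^K \in \setofnns{2,2Kd}$. Now I apply \Cref{lem:reduced depth-2 rep cost}. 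Each hidden unit has an inner-layer weight vector $\vw$ with $\|\vw\|^2 = \tfrac{1}{2}+\tfrac{1}{2} = 1$, bias $b = -\tfrac{\sqrt{2}k}{K}$ so that $b^2 = \tfrac{2k^2}{K^2}$, and outer weight of magnitude $|a| = \tfrac{2\sqrt{2}}{K\beta}$. Since $\sqrt{1 + 2k^2/K^2} \le \sqrt{3}$ for $k \le K$, summing over the $2Kd$ units contributes
\begin{equation}
\sum_{i=1}^d \sum_{k=1}^K 2 \cdot \frac{2\sqrt{2}}{K\beta}\sqrt{1 + \tfrac{2k^2}{K^2}}
\le \frac{4\sqrt{2}\,d}{K\beta}\cdot K\sqrt{3}
= \frac{4\sqrt{6}\,d}{\beta}.
\end{equation}
The outer bias $c = -\beta^{-1}$ adds $\tfrac{c^2}{2} = \tfrac{1}{2\beta^2}$ to the cost. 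Combining gives $R_2(\beta^{-1} f_{inner}^K;\,2Kd) \le \tfrac{4\sqrt{6}\,d}{\beta} + \tfrac{1}{2\beta^2} = O(d\beta^{-1} + \beta^{-2})$.

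Neither step looks obstructive; the two potential pitfalls are (i) remembering that the constant $-1$ at the end of $f_{inner}^K$ must be carried through the $\beta^{-1}$ scaling and absorbed into the outer bias $c$ (contributing $\beta^{-2}$), and (ii) verifying that each inner weight vector, after placing $\tfrac{1}{\sqrt{2}}$ in coordinates $i$ and $d+i$, indeed has unit $\ell^2$-norm so that \Cref{lem:reduced depth-2 rep cost} applies cleanly. Both are direct bookkeeping rather than genuine difficulties.
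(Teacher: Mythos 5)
Your proof is correct and follows essentially the same route as the paper: the same polarization decomposition into coordinate squares, the same bound $|t_i(\vx)|\le\sqrt{2}$ feeding into \Cref{lem:approximating the square}, and the same application of \Cref{lem:reduced depth-2 rep cost} to the unfolded $2Kd$-unit network. One small note: your constant $4\sqrt{6}\,d/\beta$ is in fact the correct evaluation of the sum (since $4\sqrt{2}\cdot\sqrt{3}=4\sqrt{6}$); the paper's displayed $4\sqrt{3}\,d/\beta$ appears to be a minor arithmetic slip, though of course this does not affect the $O(d\beta^{-1}+\beta^{-2})$ conclusion.
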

\begin{proof}
    Fix $\vx \in \gX_d$.
    Similarly to Corollary 7 in \cite{daniely2017depth}, observe that
    \[
        \langle \vx^{(1)}, \vx^{(2)} \rangle
        =
        \sum_{i=1}^d \left(\frac{1}{\sqrt{2}}
            \begin{bmatrix}
                \ve_i \\ \ve_i
            \end{bmatrix}^\top 
        \vx\right)^2 - 1.
    \]
    Additionally,
    \[
        \left|\frac{1}{\sqrt{2}}
            \begin{bmatrix}
                \ve_i \\ \ve_i
            \end{bmatrix}^\top 
        \vx\right| \le \|\vx\|_2 = \sqrt{2}.
    \]
    Then
     \begin{align*}
        \sup_{\vx \in \gX_d}|f_{inner}^K(\vx) - \langle \vx^{(1)}, \vx^{(2)} \rangle| 
        &\le \sup_{\vx \in \gX_d}\sum_{i=1}^d \left|
            f_{square}^{\sqrt{2},K}\left(\frac{1}{\sqrt{2}}
                \begin{bmatrix}
                    \ve_i \\ \ve_i
                \end{bmatrix}^\top 
            \vx\right)
            - \left(\frac{1}{\sqrt{2}}
                \begin{bmatrix}
                    \ve_i \\ \ve_i
                \end{bmatrix}^\top 
            \vx\right)^2
        \right| \\
        &\le d \sup_{|t| \le \sqrt{2}} \left|
            f_{square}^{\sqrt{2},K}(t)
            - t^2
        \right| \\
        &\le 2d \left(\frac{1}{K} + \frac{1}{K^2}\right).
    \end{align*}
    Now fix $\beta > 0$. Since 
    \begin{align}
        \frac{1}{\beta} f_{inner}^K(\vx) 
        &= \frac{1}{\beta} \sum_{i=1}^d f_{square}^{\sqrt{2},K}
        \left(
            \frac{1}{\sqrt{2}}
                \begin{bmatrix}
                    \ve_i \\ \ve_i
                \end{bmatrix}^\top 
            \vx
        \right) 
        - \frac{1}{\beta} \\
        &= \frac{2\sqrt{2}}{\beta K} \sum_{i=1}^d \sum_{k=1}^{K} 
        \left[\frac{1}{\sqrt{2}}
                \begin{bmatrix}
                    \ve_i \\ \ve_i
                \end{bmatrix}^\top 
            \vx
            - \frac{\sqrt{2}k}{K}
        \right]_+ 
        + 
        \left[-\frac{1}{\sqrt{2}}
                \begin{bmatrix}
                    \ve_i \\ \ve_i
                \end{bmatrix}^\top 
            \vx
            - \frac{\sqrt{2}k}{K}
        \right]_+
        - \frac{1}{\beta}
    \end{align}
    we see that $\beta^{-1} f_{inner}^K(\vx) \in \setofnns{2,2Kd}$. Finally, we apply \Cref{lem:reduced depth-2 rep cost} to get
    \begin{align}
        R_2(\beta^{-1} f_{inner}^K(\vx);2Kd)
        &\le \sum_{i=1}^d \sum_{k=1}^{K} \frac{4\sqrt{2}}{\beta K} \sqrt{1 + \frac{2k^2}{K^2}} + \frac{1}{2\beta^2} \\
        &\le \frac{4\sqrt{3}d}{\beta} + \frac{1}{2\beta^2}.
    \end{align}
\end{proof}
Finally, we use $f_{inner}^K$ to construct $f_{d,K}$ and bound its $R_3$ cost.
\begin{lemma}
\label{lem:approximation of target with small error and cost}
    Let $f_d(\vx) = \psi_{3d}\left(\sqrt{d} \langle \vx^{(1)}, \vx^{(2)} \rangle\right)$.
    For all $K \in \N$, 
    there is a depth-3 neural network $f_{d,K}$ of width $\width_{d,K} := \max(6d+2,2Kd)$ such that 
    $
        \|f_d - f_{d,K}\|_{L^\infty}
        = O\left( \frac{d^{5/2}}{K} \right)
    $
    and 
    $
        \Repregbias{3}(f_{d,K};\width_{d,K}) = O(d^{5/2}).
    $
\end{lemma}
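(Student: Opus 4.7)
The plan is to construct $f_{d,K}$ as the composition $f_{d,K}(\vx) := \psi_{3d}\bigl(\sqrt{d}\, f_{inner}^K(\vx)\bigr)$, where $f_{inner}^K$ is the depth-2 inner-product approximation from \Cref{lem:approximating the inner product} and the univariate map $t \mapsto \psi_{3d}(\sqrt{d}\,t)$ is realized by the depth-2 ReLU network given by \Cref{lem:param of sawtooth} with $n = 3d$, $\beta = \sqrt{d}$. Because the intermediate signal is a single scalar, stacking these two depth-2 pieces produces a single depth-3 network whose hidden-layer widths are exactly $2Kd$ (from $f_{inner}^K$) and $2(3d)+2 = 6d+2$ (from the sawtooth), matching $\width_{d,K} = \max(6d+2, 2Kd)$. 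For the $L^\infty$ error, the univariate function $t \mapsto \psi_{3d}(\sqrt{d}\,t)$ is Lipschitz with constant at most $\sqrt{d}\cdot 2(3d) = 6 d^{3/2}$, and \Cref{lem:approximating the inner product} gives $\sup_{\vx \in \gX_d}|f_{inner}^K(\vx) - \langle \vx^{(1)}, \vx^{(2)}\rangle| = O(d/K)$; composing yields $\|f_d - f_{d,K}\|_{L^\infty} \le 6d^{3/2}\cdot O(d/K) = O(d^{5/2}/K)$.

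The main technical step is bounding $\Repregbias{3}(f_{d,K}; \width_{d,K}) = O(d^{5/2})$ \emph{uniformly in $K$}. Writing out the composition explicitly and using the normalized forms given by \Cref{lem:reduced depth-2 rep cost} and \Cref{lem:param of sawtooth}, the depth-3 parameters are: $\mW_1$ with rows $\tilde\vw_k$ satisfying $\|\tilde\vw_k\|^2 + \tilde b_k^2 = 1$, bias $\vb_1 = (\tilde b_k)_k$; middle matrix $\mW_2$ with entries $\hat u_j \tilde a_k$ (where $\hat\vu$ is the sawtooth inner-weight vector with $\|\hat\vu\|=1$ and $\tilde a_k$ are the outer weights of $f_{inner}^K$); middle bias $\vb_2 = \tilde c\,\hat\vu + \hat\vq$ with $\tilde c = -1$; and output weights $\hat\va$ with $\|\hat\va\|^2 = O(d^5)$. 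A naive sum of squared weights blows up with $K$ because $\|\mW_1\|_F^2 + \|\vb_1\|^2 = 2Kd$. To remove the $K$ dependence, I will exploit the positive homogeneity of ReLU via two rebalancings. First, a per-column rebalancing: rescale the $k$-th first-layer unit by $t_k > 0$ and the $k$-th column of $\mW_2$ by $1/t_k$; AM-GM optimization of $t_k$ collapses the combined layer-1 plus middle-matrix contribution to $2\|\hat\vu\|\sum_k |\tilde a_k| = 2\sum_k|\tilde a_k|$, which by \Cref{lem:reduced depth-2 rep cost} applied to $f_{inner}^K$ is bounded by $2\Repregbias{2}(f_{inner}^K) = O(d)$, independent of $K$. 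Second, an overall rescaling $\mW_2,\vb_2 \to s\mW_2, s\vb_2$ and $\vw_3 \to \vw_3/s$: using $\hat\vu \perp \hat\vq$ to get $\|\vb_2\|^2 = \tilde c^2 \|\hat\vu\|^2 + \|\hat\vq\|^2 = O(1)$, the total cost becomes $O(s\,d) + O(s^2) + O(d^5/s^2)$, and choosing $s \asymp d^{5/4}$ yields $O(d^{9/4}) + O(d^{5/2}) + O(d^{5/2}) = O(d^{5/2})$.

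The main obstacle is carefully combining the two rebalancings while tracking the bias terms through the composition. In particular, the orthogonality $\hat\vu \perp \hat\vq$ supplied by \Cref{lem:param of sawtooth} is essential: without it, $\|\vb_2\|^2$ could be of order $\|\hat\va\|^2 = O(d^5)$ rather than $O(1)$, destroying the balance and yielding a bound of order $d^3$ or worse. A secondary care point is verifying that the $s_1$ scale hidden in the per-column balancing drops out at the optimum, so that only the single parameter $s \asymp d^{5/4}$ needs to be tuned against the output-layer cost.
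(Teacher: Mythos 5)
Your construction and calculation are essentially identical to the paper's: same network $f_{d,K}=\psi_{3d}(\sqrt d\, f_{inner}^K)$, same Lipschitz error bound, and the same optimization over a single scaling degree of freedom (your $s$ is the reciprocal of the paper's $\beta$). The paper's proof feeds a tunable $\beta$ into \Cref{lem:param of sawtooth} from the outset by writing $f_{d,K}=h\circ g$ with $h(t)=\psi_{3d}(\sqrt d\,\beta t)$ and $g=\beta^{-1}f_{inner}^K$, then adds up the weight norms and tunes $\beta$; you instead fix $\beta'=\sqrt d$ and perform the rebalancing explicitly by positive homogeneity, which is equivalent.

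There is, however, an imprecision worth flagging. You assert that the orthogonality $\hat\vu\perp\hat\vq$ is what makes $\|\vb_2\|^2 = \tilde c^2+\|\hat\vq\|^2 = O(1)$ and claim that ``without it, $\|\vb_2\|^2$ could be of order $O(d^5)$.'' Orthogonality only eliminates the cross term $2\tilde c\,\hat\vu^\top\hat\vq$; it does not control $\|\hat\vq\|^2$. The statement of \Cref{lem:param of sawtooth} only bounds the sum $\|\hat\va\|^2+\|\hat\vq\|^2 = O(d^5)$, so from the lemma statement alone $\|\hat\vq\|^2$ could indeed be $\Theta(d^5)$ --- and then your $s^2\|\vb_2\|^2$ term would be $\Theta(s^2 d^5)$, destroying the $d^{5/2}$ bound. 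The fact you actually need is $\|\hat\vq\|^2 \le 1/\beta'^2 = 1/d$, which is established \emph{inside the proof} of \Cref{lem:param of sawtooth} (there $\vq=\vq_0/\|\vu_0\|$ with $\|\vq_0\|^2\le 2n+2$ and $\|\vu_0\|^2=(2n+2)\beta'^2$), not in its bare statement. The paper avoids this issue because in its parameterization $\|\hat\va\|^2$ and $\|\hat\vq\|^2$ both enter $\|\bm\phi_f\|^2$ with the same coefficient, so only the combined bound is needed; your post-hoc $s$-rescaling sends $\|\hat\va\|^2/s^2$ and $s^2\|\hat\vq\|^2$ in opposite directions, which is exactly why the split is required. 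So: either cite the separate bound $\|\hat\vq\|^2\le \beta'^{-2}$ from the proof of \Cref{lem:param of sawtooth}, or parameterize by $\beta'$ from the start as the paper does. With that fix your argument is complete.
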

\begin{proof}
    Choose $f_{d,K}(\vx) := \psi_{3d}(\sqrt{d}f_{inner}^K(\vx))$, which can be expressed as a depth-3 network with hidden widths $2Kd$ and $6d+2$. 
    For all $\vx \in \gX_d$, we use the fact that $\psi_n$ is $2n$-Lipschitz to see that
    \begin{align*}
        \|f_d - f_{d,K}\|_{L^\infty}
        &= \sup_{\vx \in \gX_d}|\psi_{3d}(\sqrt{d}f_{inner}^K(\vx)) - \psi_{3d}(\sqrt{d}\langle \vx^{(1)}, \vx^{(2)} \rangle)|  \\
        &\le 6d\sqrt{d} \sup_{\vx \in \gX_d}|f_{inner}^K(\vx) - \langle \vx^{(1)}, \vx^{(2)} \rangle| \\
        &\le 12d^{5/2} \left(\frac{1}{K} + \frac{1}{K^2}\right).
    \end{align*} 
    We now bound $\Repregbias{3}(f_{d,K};\width_{d,K}).$
    Notice that $f_{d,K}$ can be expressed as $f_{d,K} = h \circ g$ where we set $h:\R\rightarrow\R$ to be $h(t) = \psi_{3d}(\sqrt{d}\beta t)$ and $g:\gX_d \rightarrow \R$ to be $g(\vx) = \beta^{-1}f_{inner}^K(\vx)$ where $\beta>0$ is a value we will optimize over later. 
    By \Cref{lem:param of sawtooth} there are vectors $\va,\vu,\vq \in \R^{2n+2}$ such that $h(t) = \va^\top[\vu t + \vq]_+$, $\vu^\top\vq = 0$, $\|\vu\|_2 = 1$, and $\|\va\|_2^2 + \|\vq\|_2^2 = O(d^5 \beta^2 + \beta^{-2}d^{-1})$.
    
    Let $\bm\phi_g = (\mW,\vb,\vv,c)$ be an arbitrary parameterization of $g$ of width $2Kd$, so that
    $
        g(\vx) = \vv^\top[\mW \vx +\vb]_+ +  c.
    $
    This gives a parameterization $\bm\phi_f$ of $f_{d,K}$ as
    \[
    f_{d,K}(\vx) = \va^\top[\vu\vv^\top[\mW \vx +\vb]_+ + (c\vu +\vq)]_+.
    \]
    Using the properties of $\va,\vu$ and $\vq$, we see that
    \begin{align}
    \|\bm\phi_f\|^2 
    &= \|\va\|_2^2 + \|\vu\|_2^2\|\vv\|_2^2 + \|\mW\|_F^2 + \|\vb\|_2^2 + c^2\|\vu\|_2^2 + \|\vq\|_2^2 \\
    &=  O(d^5 \beta^2 + \beta^{-2}d^{-1}) + \|\bm\phi_g\|^2.
    \end{align}
    Minimizing over parameterizations and using \Cref{lem:approximating the inner product}, we get
    \begin{align}
        \Repregbias{3}(f_{d,K};\width_{d,K}) 
        &\leq O(d^5 \beta^2 + \beta^{-2}d^{-1})
            + \frac{2}{3}\Repregbias{2}(g;2Kd) \\
        &= O(d^5 \beta^2 + \beta^{-2}d^{-1} 
            + d \beta^{-1} + \beta^{-2})
    \end{align}
    Choosing $\beta = d^{-5/4}$ gives 
    $
    \Repregbias{3}(f_{d,K};\width_{d,K}) = O(d^{5/2}).
    $
    None of the constants hidden in the big-O depend on $K$.
\end{proof}

\subsection{Existence of interpolants with mild $R_2$ cost}
\label{sec:interpolant exists}
In this section, we will prove that with high probability over the samples, an interpolant exists with $R_2$ cost that depends only mildly on the number of samples (\Cref{lem:tent interpolator is cheap}). 
To do this, we show that with high probability the samples are sufficiently separated (\Cref{lem:whp samples are separated}), and then show that separated samples on $\gX_d$ can each be assigned a hyperplane that is sufficiently far away from any other sample  (\Cref{lem:tent interpolator exists}). We start with the following simple bound on the Beta function. 
\begin{lemma}
\label{lem:bound on beta term}
For all $d \ge 3$,
\begin{equation}
    B\left(\frac{d-1}{2},\frac{1}{2}\right)\ge \frac{2\sqrt{\pi}}{d-1}.
\end{equation}
\end{lemma}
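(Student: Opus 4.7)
The plan is to reduce the Beta inequality to a clean ratio inequality between two Gamma values and then apply log-convexity of $\Gamma$. First, I would write
\[
B\!\left(\tfrac{d-1}{2},\tfrac{1}{2}\right) = \frac{\Gamma\!\left(\tfrac{d-1}{2}\right)\Gamma\!\left(\tfrac{1}{2}\right)}{\Gamma\!\left(\tfrac{d}{2}\right)} = \frac{\sqrt{\pi}\,\Gamma\!\left(\tfrac{d-1}{2}\right)}{\Gamma\!\left(\tfrac{d}{2}\right)},
\]
so the claimed bound is equivalent to the Gamma-ratio inequality
\[
\frac{\Gamma\!\left(\tfrac{d}{2}\right)}{\Gamma\!\left(\tfrac{d-1}{2}\right)} \le \frac{d-1}{2}.
\]

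To prove this ratio bound, I would use the log-convexity of the Gamma function: since $\tfrac{d}{2}$ is the midpoint of $\tfrac{d-1}{2}$ and $\tfrac{d+1}{2}$, log-convexity gives
\[
\Gamma\!\left(\tfrac{d}{2}\right)^2 \;\le\; \Gamma\!\left(\tfrac{d-1}{2}\right)\,\Gamma\!\left(\tfrac{d+1}{2}\right).
\]
Combining this with the recursion $\Gamma\!\left(\tfrac{d+1}{2}\right) = \tfrac{d-1}{2}\,\Gamma\!\left(\tfrac{d-1}{2}\right)$ yields
\[
\frac{\Gamma\!\left(\tfrac{d}{2}\right)}{\Gamma\!\left(\tfrac{d-1}{2}\right)} \;\le\; \sqrt{\tfrac{d-1}{2}}.
\]
Finally, I would observe that for $d\ge 3$ one has $\tfrac{d-1}{2}\ge 1$, and hence $\sqrt{(d-1)/2}\le (d-1)/2$, which closes the chain and gives the desired inequality.

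There is essentially no obstacle here: the whole argument is a two-line manipulation once one invokes log-convexity (equivalently, the AM-GM-type inequality for $\Gamma$ at equally spaced points). An alternative, if one wishes to avoid citing log-convexity, is to use the integral representation $B\!\left(\tfrac{d-1}{2},\tfrac{1}{2}\right)=\int_0^1 t^{(d-3)/2}(1-t)^{-1/2}\,dt$ and lower bound the integrand by dropping the $(1-t)^{-1/2}$ factor, giving $\int_0^1 t^{(d-3)/2}\,dt = 2/(d-1)$; but this only yields $2/(d-1)$, not $2\sqrt{\pi}/(d-1)$, so the Gamma-ratio route is the right one. The constraint $d\ge 3$ is used exactly once, at the final step where $\sqrt{x}\le x$ requires $x\ge 1$.
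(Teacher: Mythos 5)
Your proof is correct. After the reduction $B\!\left(\tfrac{d-1}{2},\tfrac12\right)=\sqrt{\pi}\,\Gamma\!\left(\tfrac{d-1}{2}\right)/\Gamma\!\left(\tfrac{d}{2}\right)$, you establish the Gamma-ratio bound via the midpoint form of log-convexity, obtaining the sharper intermediate estimate $\Gamma\!\left(\tfrac{d}{2}\right)/\Gamma\!\left(\tfrac{d-1}{2}\right)\le\sqrt{(d-1)/2}$, and then relax with $\sqrt{x}\le x$ for $x\ge 1$. The paper takes a genuinely different and slightly more direct route: it multiplies $B\!\left(\tfrac{d-1}{2},\tfrac12\right)$ by $d-1$, uses the recursion $\tfrac{d-1}{2}\Gamma\!\left(\tfrac{d-1}{2}\right)=\Gamma\!\left(\tfrac{d+1}{2}\right)$ to rewrite the product as $2\sqrt{\pi}\,\Gamma\!\left(\tfrac{d+1}{2}\right)/\Gamma\!\left(\tfrac{d}{2}\right)$, and then simply invokes that $\Gamma$ is increasing on $[\tfrac32,\infty)$ to conclude this ratio is at least $1$. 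Both proofs hinge on the same Gamma recursion; the difference is log-convexity (you) versus monotonicity past $\tfrac32$ (the paper). Monotonicity gives the needed $\ge 1$ immediately without a relaxation step, though it requires knowing the minimizer of $\Gamma$ lies below $\tfrac32$; your log-convexity route is arguably more self-contained (log-convexity is a defining property of $\Gamma$) and as a byproduct yields the strictly stronger bound $B\!\left(\tfrac{d-1}{2},\tfrac12\right)\ge\sqrt{2\pi/(d-1)}$, decaying like $d^{-1/2}$ rather than $d^{-1}$, which the paper does not need here but could be useful elsewhere.
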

\begin{proof}
    Using the identity $z\Gamma(z) = \Gamma(z+1)$ and the fact that $\Gamma(z)$ is an increasing function on the domain $z \ge \frac{3}{2}$, we see that
    \begin{equation}
        (d-1) B\left(\frac{d-1}{2},\frac{1}{2}\right) 
        = 2\frac{\frac{d-1}{2}\Gamma(\frac{d-1}{2})\Gamma(\frac{1}{2})}{\Gamma(\frac{d}{2})}
        = 2\frac{\Gamma(\frac{d+1}{2})\sqrt{\pi}}{\Gamma(\frac{d}{2})}
        \ge 2\sqrt{\pi}.
    \end{equation}
\end{proof}
\begin{lemma}
\label{lem:whp samples are separated}
Let $\vx_1, \ldots, \vx_m$ be i.i.d. samples from $\Unif(\gX_d)$. Then for $\eta < 1$,
\[\P(\min_{i\ne j} \|\vx_i - \vx_j\|_2 \le \eta) 
< m^2 \eta^{d-1}\]
\end{lemma}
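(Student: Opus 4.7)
The plan is a union bound reducing the two-sphere statement to a spherical cap estimate on a single $\bbS^{d-1}$. By a union bound over the $\binom{m}{2} < m^2/2$ pairs of indices,
\begin{equation}
    \P\bigl(\min_{i\ne j} \|\vx_i - \vx_j\|_2 \le \eta\bigr) \le \binom{m}{2}\, \P\bigl(\|\vx_1 - \vx_2\|_2 \le \eta\bigr),
\end{equation}
so it suffices to show that for a single pair $\P(\|\vx_1-\vx_2\|_2 \le \eta) \le 2\eta^{d-1}$ (in fact, we will get a better constant).

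Next, since $\|\vx_1-\vx_2\|_2^2 = \|\vx_1^{(1)}-\vx_2^{(1)}\|_2^2 + \|\vx_1^{(2)}-\vx_2^{(2)}\|_2^2$, the event $\{\|\vx_1-\vx_2\|_2 \le \eta\}$ is contained in $\{\|\vx_1^{(1)}-\vx_2^{(1)}\|_2 \le \eta\}$, which involves only a single pair of i.i.d.\ uniform points on $\bbS^{d-1}$. I would then use rotational invariance to condition on $\vx_2^{(1)}$ being any fixed unit vector, say $\ve_d$; writing $\vx_1^{(1)} = (\vu', u_d)$, the event becomes $\{u_d \ge 1-\eta^2/2\}$. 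The marginal density of $u_d$ is $p(t) = B(\tfrac{d-1}{2},\tfrac12)^{-1}(1-t^2)^{(d-3)/2}$ on $[-1,1]$, so substituting $s = 1-t$ and using $(2s-s^2)^{(d-3)/2} \le (2s)^{(d-3)/2}$ for $s \in [0,\tfrac12]$ and $d \ge 3$ gives
\begin{equation}
    \P\bigl(\|\vx_1^{(1)}-\vx_2^{(1)}\|_2 \le \eta\bigr) \le \frac{1}{B(\tfrac{d-1}{2},\tfrac12)} \cdot \frac{\eta^{d-1}}{d-1}.
\end{equation}

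Applying \Cref{lem:bound on beta term} bounds $B(\tfrac{d-1}{2},\tfrac12)^{-1} \le (d-1)/(2\sqrt{\pi})$, so the single-pair probability is at most $\eta^{d-1}/(2\sqrt{\pi})$. Combining with the union bound yields
\begin{equation}
    \P\bigl(\min_{i\ne j} \|\vx_i - \vx_j\|_2 \le \eta\bigr) \le \frac{m^2}{2} \cdot \frac{\eta^{d-1}}{2\sqrt{\pi}} < m^2 \eta^{d-1},
\end{equation}
which is the claim. The case $d=2$ (not a priori covered by the $(d-3)/2 \ge 0$ inequality) can be handled separately or absorbed by tracking the sign of the exponent; and the case $\eta \ge 1$ is trivial since the right-hand side then exceeds $m^2$. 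The only real calculation is the spherical cap estimate, and no step is a serious obstacle — everything follows from standard marginal density computations for the uniform distribution on the sphere plus the already-established Beta bound.
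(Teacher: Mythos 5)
Your proof is correct and follows essentially the same route as the paper: project to the first factor $\bbS^{d-1}$, compute the single-pair cap probability using the marginal density and the Beta-function bound from \Cref{lem:bound on beta term}, and close with a union bound over pairs. The only cosmetic difference is that you work directly with the marginal of a coordinate after fixing one point by rotational invariance, whereas the paper cites the density of the pairwise-distance random variable $\|\vx_1^{(1)}-\vx_2^{(1)}\|_2$ from a reference; the two are related by the change of variables you perform, and both land on the same bound $\eta^{d-1}/(2\sqrt{\pi})$ for the single-pair probability. Your remark about the $d=2$ case is apt: the paper implicitly restricts to $d\ge 3$ (both here and wherever the lemma is applied), since the bound $(1-t^2)^{(d-3)/2}\le(2s)^{(d-3)/2}$ requires a nonnegative exponent.
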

\begin{proof}
    We first consider the distance between $\vx_1$ and $\vx_2$.
    Since $\|\vx_1^{(1)} - \vx_2^{(1)}\|_2 \le \|\vx_1 - \vx_2\|_2$
    it follows that
    \begin{equation}
        \P(\|\vx_1 - \vx_2\|_2 \le \eta) \le \P(\|\vx_1^{(1)} - \vx_2^{(1)}\|_2 \le \eta).
    \end{equation}
    As shown in \cite{Sidiropoulos_2014}), the probability density function of $\|\vx_1^{(1)} - \vx_2^{(1)}\|_2$ is
    \begin{equation}
        \P(\|\vx_1^{(1)} - \vx_2^{(1)}\|_2 = \eta) = \frac{\eta\left(\eta^2 - \frac{\eta^4}{4}\right)^{\frac{d-3}{2}}}{B\left(\frac{d-1}{2},\frac{1}{2}\right)}.
    \end{equation}
    Integrating and using the bound on the Beta function from \Cref{lem:bound on beta term}, we get 
    \begin{align}
        \P(\|\vx_1^{(1)} - \vx_2^{(1)}\|_2 \le \eta)
        &= \frac{1}{B\left(\frac{d-1}{2},\frac{1}{2}\right)}\int_0^\eta t\left(t^2 - \frac{t^4}{4}\right)^{\frac{d-3}{2}}dt \\
        &\le \frac{d-1}{2\sqrt{\pi}}\int_0^\eta t\left(t^2\right)^{\frac{d-3}{2}}dt \\
        &= \frac{d-1}{2\sqrt{\pi}}\int_0^\eta t^{d-2}dt \\
        &< \eta^{d-1}.
    \end{align}
    Finally, there are $\binom{m}{2}$ pairwise distances between the samples, so we can use the union bound to get 
    \begin{equation}
        \P(\min_{i\ne j} \|\vx_i - \vx_j\|_2 \le \eta) 
    < \binom{m}{2}\P(\|\vx_1 - \vx_2\|_2 \le \eta) <  m^2\eta^{d-1}.
    \end{equation}
\end{proof}
\begin{lemma}
\label{lem:tent interpolator exists}
    For any finite set of points $\{\vx_j\}_{j=1}^m \subseteq \gX_d$ that are $\eta$-separated, there exists a unit vector $\vv_j \in \R^{2d}$ for all $j\in [m]$ such that $\vx_j$ is contained in the hyperplane $\{\vx \in \R^{2d} : \vv_j^\top \vx = \sqrt{2}\}$ and $\vx_j$ is the only point contained in the set $T_j := \{\vx \in \R^{2d} : |\vv_j^\top\vx - \sqrt{2}| <\frac{\eta^2}{2\sqrt{2}} \}$. 
\end{lemma}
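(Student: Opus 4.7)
The plan is to observe that the first condition essentially forces the choice of $\vv_j$, after which the separation condition follows from the polarization identity. Every $\vx \in \gX_d$ satisfies $\|\vx\|_2 = \sqrt{2}$ since each of its two halves lies on $\bbS^{d-1}$. For a unit vector $\vv_j$ to satisfy $\vv_j^\top \vx_j = \sqrt{2} = \|\vx_j\|_2$, the Cauchy--Schwarz inequality must hold with equality, which forces $\vv_j = \vx_j/\sqrt{2}$.

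With this choice, for any $k \neq j$ I would compute $\vv_j^\top \vx_k = \tfrac{1}{\sqrt{2}}\vx_j^\top \vx_k$ and then use $\|\vx_j-\vx_k\|_2^2 = \|\vx_j\|_2^2 + \|\vx_k\|_2^2 - 2\vx_j^\top\vx_k = 4 - 2\vx_j^\top\vx_k$ to rewrite this as
\begin{equation*}
\vv_j^\top \vx_k \;=\; \sqrt{2} \;-\; \frac{\|\vx_j-\vx_k\|_2^2}{2\sqrt{2}}.
\end{equation*}
Since $\vv_j^\top \vx_k \le \|\vv_j\|_2 \|\vx_k\|_2 = \sqrt{2}$, the quantity $\sqrt{2} - \vv_j^\top \vx_k$ is nonnegative, so $|\vv_j^\top \vx_k - \sqrt{2}| = \|\vx_j - \vx_k\|_2^2/(2\sqrt{2})$. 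Using the $\eta$-separation hypothesis $\|\vx_j - \vx_k\|_2 \ge \eta$ gives $|\vv_j^\top \vx_k - \sqrt{2}| \ge \eta^2/(2\sqrt{2})$, so $\vx_k \notin T_j$.

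There is no real obstacle here: the lemma is essentially a geometric observation that the tangent hyperplane to the sphere of radius $\sqrt{2}$ at $\vx_j$ separates $\vx_j$ from any other point of $\gX_d$ by an amount controlled quadratically by the chord length. The only thing worth double-checking is that $\vv_j = \vx_j/\sqrt{2}$ is indeed a unit vector (it is, because $\|\vx_j\|_2 = \sqrt{2}$) and that the inequality $\vv_j^\top \vx_k \le \sqrt{2}$ is tight enough to remove the absolute value, which is immediate from Cauchy--Schwarz. The proof therefore amounts to writing down $\vv_j := \vx_j/\sqrt{2}$ and performing the short algebraic identity above.
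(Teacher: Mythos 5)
Your proof is correct and is essentially the same as the paper's: you pick $\vv_j = \vx_j/\sqrt{2}$ and use the identity $\|\vx_j - \vx_k\|_2^2 = 4 - 2\vx_j^\top\vx_k$ together with $\eta$-separation to bound $|\vv_j^\top\vx_k - \sqrt{2}|$ from below. The added remarks (that Cauchy--Schwarz equality forces this choice of $\vv_j$, and that $\vv_j^\top\vx_k \le \sqrt{2}$ fixes the sign inside the absolute value) are sound but only expository; the paper reaches the same conclusion with the same construction.
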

\begin{proof}
    Assume $\{\vx_j\}_{j=1}^m \subseteq \gX_d$ and $\min_{i\ne j} \|\vx_i - \vx_j\|_2 \ge \eta$. 
     Choose $\vv_j = \frac{1}{\sqrt{2}}\vx_j$. Clearly $\|\vv_j\|_2 = 1$, and
     \begin{equation}
        \vv_j^\top \vx_j = \frac{1}{\sqrt{2}} \|\vx_j\|^2 = \sqrt{2}.
     \end{equation}
     If $i \ne j$, then observe that
    \begin{equation}
        \eta^2 \le \|\vx_i - \vx_j\|_2^2 = \|\vx_i\|^2 +  \|\vx_j\|^2 - 2 \vx_i^\top \vx_j = 4 - 2 \vx_i^\top \vx_j.
    \end{equation}
    Hence,
     \begin{equation}
        |\vv_j^\top \vx_i - \sqrt{2} | = \left|\frac{1}{\sqrt{2}} \vx_j^\top \vx_i - \sqrt{2}\right|
        \ge \frac{\eta^2}{2\sqrt{2}}.
     \end{equation}
\end{proof}
We now have the pieces we need for the proof of \Cref{lem:tent interpolator is cheap}. 
\begin{lemma}
\label{lem:tent interpolator is cheap}
    Consider a distribution $\xydist_d$ on $\gX_d \times [-1,1]$
    defined as
    \begin{align}
        \vx &\sim \Unif(\gX_d) \\
        y|\vx &= f_d(\vx)
    \end{align}
    for some function $f_d: \gX_d \rightarrow [-1,1]$.
    Given a sample $S = \{(\vx_i,y_i)\}_{i=1}^m$ of size $m$ drawn i.i.d. from $\xydist_d$, 
    with probability at least $1-\delta$
    there exists an interpolant $\hat f$ of $S$ such that 
    $\Repregbias{2}(\hat f) \le 16 \sqrt{2}|S|^{\frac{d+3}{d-1}} \delta^{-\frac{2}{d-1}}$.
\end{lemma}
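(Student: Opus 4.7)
The plan is to construct the interpolant $\hat f$ as a sum of narrow ``tent'' functions, one centered at each data point, and then bound $R_2(\hat f)$ by the separation-driven bound available from Lemma \ref{lem:whp samples are separated} and Lemma \ref{lem:tent interpolator exists}.

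First I would invoke Lemma \ref{lem:whp samples are separated} with the choice $\eta = (\delta/m^2)^{1/(d-1)}$. This guarantees, with probability at least $1-\delta$, that the sample points $\{\vx_i\}_{i=1}^m \subseteq \gX_d$ are pairwise $\eta$-separated. Conditioning on this event, apply Lemma \ref{lem:tent interpolator exists} to obtain unit vectors $\vv_j \in \R^{2d}$ such that $\vv_j^\top \vx_j = \sqrt{2}$ for every $j$ while every other sample $\vx_i$ ($i\neq j$) satisfies $|\vv_j^\top\vx_i - \sqrt{2}| \ge c$, where $c := \eta^2/(2\sqrt{2})$.

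Next, I would build a ReLU tent of width $c$ along each $\vv_j$ direction. Concretely, set
\begin{equation}
t_j(\vx) = \alpha_j\bigl([\vv_j^\top\vx - \sqrt{2} + c]_+ - 2[\vv_j^\top\vx - \sqrt{2}]_+ + [\vv_j^\top\vx - \sqrt{2} - c]_+\bigr),
\end{equation}
with $\alpha_j := y_j/c$. A direct calculation shows $t_j$ is a triangle with peak value $y_j$ at $\vv_j^\top \vx = \sqrt{2}$ and support contained in $\{|\vv_j^\top\vx-\sqrt{2}|\le c\}$. By the separation property, $t_j(\vx_i)=0$ whenever $i\neq j$, so $\hat f := \sum_j t_j \in \setofnns{2}$ interpolates the sample: $\hat f(\vx_i) = y_i$ for all $i$.

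Finally, I would bound $\Repregbias{2}(\hat f)$ via the reduced cost formula from Lemma \ref{lem:reduced depth-2 rep cost}. Each tent contributes three ReLU units with $\|\vv_j\|=1$ and bias magnitudes at most $\sqrt{2}+c$, so each factor $\sqrt{\|\vv_j\|^2+b^2}$ is bounded by a small absolute constant. Summing the three terms inside one tent yields a contribution of $O(|\alpha_j|) = O(|y_j|/c) \le O(1/c)$, and summing over $j$ gives $\Repregbias{2}(\hat f) \le O(m/c) = O(m/\eta^2)$. Plugging back $\eta = (\delta/m^2)^{1/(d-1)}$ yields $\eta^{-2} = m^{4/(d-1)}\delta^{-2/(d-1)}$, so $\Repregbias{2}(\hat f) \le O\bigl(m^{(d+3)/(d-1)}\delta^{-2/(d-1)}\bigr)$; tracking constants carefully shows that one can take the leading factor to be $16\sqrt{2}$. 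The main obstacle is keeping the constants tight enough to match the stated $16\sqrt{2}$, which requires carefully choosing the tent width (i.e., the factor in $c = \eta^2/(2\sqrt{2})$) and using the inequality $|y_j|\le 1$; the rest of the argument is a straightforward chain of the preceding lemmas.
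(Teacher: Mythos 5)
Your proposal is correct and follows essentially the same approach as the paper's proof: choosing $\eta = (\delta/m^2)^{1/(d-1)}$ via Lemma~\ref{lem:whp samples are separated}, invoking Lemma~\ref{lem:tent interpolator exists} with half-width $c=\eta^2/(2\sqrt{2})$, building width-3 ReLU ``tents'' scaled by $y_j/c$, and summing the unit-norm reduced representation cost from Lemma~\ref{lem:reduced depth-2 rep cost}. The only thing you left implicit --- verifying that the three terms $\sqrt{1+(\sqrt{2}-c)^2}+2\sqrt{3}+\sqrt{1+(\sqrt{2}+c)^2}$ sum to at most $8$, so that the final constant is $8\cdot 2\sqrt{2}=16\sqrt{2}$ --- is exactly the ``careful constant tracking'' the paper carries out, and it does check out.
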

\begin{proof}
    By \Cref{lem:whp samples are separated}, the data is
    $\delta^{\frac{1}{d-1}} |S|^{\frac{-2}{d-1}}$ separated with probability at least $1-\delta$. For convenience, let $\eta = \delta^{\frac{1}{d-1}} |S|^{\frac{-2}{d-1}}$ and $\eta_0 = \frac{\eta^2}{2\sqrt{2}}$. Note that $\eta, \eta_0 \in (0,1)$.
    
    Consider the function $z_{\eta_0}:\R\rightarrow \R$ defined by $z_{\eta_0}(t) = \eta_0^{-1}([t-\eta_0]_+ -2[t]_+ + [t+\eta_0]_+)$, which vanishes for $|t| > \eta_0$, and is such that $z_{\eta_0}(0) = 1$. 
    By \Cref{lem:tent interpolator exists}, for all $j \in [n]$ 
    there exists a unit vector $\vv_j \in \R^{2d}$ for all $j\in [n]$ such that $\vx_j$ is contained in the hyperplane $\{\vx \in \R^{2d} : \vv_j^\top \vx = \sqrt{2}\}$ and $\vx_j$ is the only training point contained in the set $T_j := \{\vx \in \R^{2d} : |\vv_j^\top\vx - \sqrt{2}| <\eta_0 \}$.
    Define the ridge function $r_j:\R^{2d}\rightarrow \R$ by the depth-2 network of width 3 as follows:
    \begin{equation}
        r_j(\vx) 
        = z_{\eta_0}(\vv_j^\top\vx -\sqrt{2})
        = \eta_0^{-1}([\vv_j^\top\vx -\sqrt{2}-\eta_0]_+ -2[\vv_j^\top\vx -\sqrt{2}]_+ + [\vv_j^\top\vx -\sqrt{2}+\eta_0]_+).
    \end{equation}
    Since the support of $r_j$ coincides with $T_j$, and $\vv_j^\top\vx_j -\sqrt{2} = 0$, we see that $r_j(\vx_i) = \delta_{ij}$. Therefore, the width $3|S|$, depth-2 network $\hat f(\vx) = \sum_{j=1}^{|S|} y_j r_j(\vx)$ interpolates the samples.

    Using \Cref{lem:reduced depth-2 rep cost},
    \begin{align}
        \Repregbias{2}\left(\hat f ; 3|S|\right) 
        &\le \sum_{j=1}^{|S|} |y_j| \eta_0^{-1}\left(\sqrt{1+(\sqrt{2} + \eta_0)^2} + 2\sqrt{3} + \sqrt{1+(-\sqrt{2} + \eta_0)^2}\right)\\
        &\le 8|S|\eta_0^{-1} \\
        &= 16 \sqrt{2}|S|^{\frac{d+3}{d-1}} \delta^{-\frac{2}{d-1}}.
    \end{align}
\end{proof}

\subsection{Estimation error bound for depth-3 networks}
\label{sec:rademacher bounds}
In this section, we present an estimation error bound (\Cref{lem:estimation error bound}) derived from the Rademacher complexity bounds in \cite{neyshabur2015norm}. We begin with several auxiliary lemmas. Given a depth-3 network $f_{\bm\phi} \in \setofnns{3}$,  this first lemma rewrites $f_{\bm\phi}$ so that it will be compatible with the framework in \cite{neyshabur2015norm}.
\begin{lemma}
\label{lem:rewriting as a wider network}
    If $\bm\phi = (\mW_1,\vb_1,\mW_{2},\vb_{2},\vw_3,b_3)$ and 
    $\frac{1}{3}\|\bm\phi\|^2 \le M$,
    then
    \begin{equation}
    \label{eq:rewriting as a wider network}
        f_{\bm\phi}(\vx)
        = 
        \begin{bmatrix}
            \vw_3^\top & b_3
        \end{bmatrix} \left[\begin{bmatrix}
            \mW_2^\top & \vb_2 \\
            \vzero     & 1
        \end{bmatrix} \left[\begin{bmatrix}
            \mW_1^\top & \vb_1 \\
            \vzero     & 1
        \end{bmatrix} \begin{bmatrix}
            \vx \\ 1
        \end{bmatrix}
        \right]_+ \right]_+
    \end{equation}
    with 
    \[
        \left\|\begin{bmatrix}
                \vw_3^\top & b_3
            \end{bmatrix}\right\|_2 \left\| \begin{bmatrix}
                \mW_2^\top & \vb_2 \\
                \vzero     & 1
            \end{bmatrix} \right\|_F \left \|\begin{bmatrix}
                \mW_1^\top & \vb_1 \\
                \vzero     & 1
            \end{bmatrix} \right\|_F
        \le \left(M+\frac{2}{3}\right)^{3/2}.
    \]
\end{lemma}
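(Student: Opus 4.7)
The statement has two parts: an algebraic rewriting of $f_{\bm\phi}$ as a ``homogenized'' matrix product, and a norm bound on the factors. I would handle the two parts independently.

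For the rewriting, the plan is to verify the identity by direct computation, working from the innermost bracket outward. The key observation is that appending a constant $1$-coordinate to the input and right-multiplying by a block matrix of the form $\begin{bmatrix}\mW & \vb\\ \vzero & 1\end{bmatrix}$ implements the affine map $\vx\mapsto \mW\vx+\vb$ while also carrying a $1$ in the last coordinate. Because $[1]_+=1$, applying ReLU entrywise preserves this trailing $1$, so the same augmentation trick composes through the ReLU layers. Applied twice, the inner expression $[\mW_2[\mW_1\vx+\vb_1]_+ + \vb_2]_+$ is reproduced in the first $\width_2$ entries of the output, with a trailing $1$ in the last. The outer row vector $[\vw_3^\top,\,b_3]$ then adds $b_3\cdot 1 = b_3$ to $\vw_3^\top[\cdots]_+$, giving exactly $f_{\bm\phi}(\vx)$. (I am implicitly reading the notation $\mW_\ell^\top$ in the statement as the weight block acting on the previous layer's activations; up to this notational choice the verification is direct.)

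For the norm bound, I would compute the squared Frobenius norms of the three factors explicitly:
\begin{align*}
A &:= \left\|[\vw_3^\top,\,b_3]\right\|_2^2 = \|\vw_3\|^2 + b_3^2,\\
B &:= \left\|\begin{bmatrix}\mW_2 & \vb_2\\ \vzero & 1\end{bmatrix}\right\|_F^2 = \|\mW_2\|_F^2 + \|\vb_2\|^2 + 1,\\
C &:= \left\|\begin{bmatrix}\mW_1 & \vb_1\\ \vzero & 1\end{bmatrix}\right\|_F^2 = \|\mW_1\|_F^2 + \|\vb_1\|^2 + 1.
\end{align*}
Summing, $A+B+C = \|\bm\phi\|^2 + 2 \le 3M+2$ by hypothesis. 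Applying the AM--GM inequality to the three nonnegative quantities $A,B,C$,
\[
ABC \;\le\; \left(\frac{A+B+C}{3}\right)^{3} \;\le\; \left(M+\tfrac{2}{3}\right)^{3},
\]
and taking square roots gives exactly the desired bound $\sqrt{A}\sqrt{B}\sqrt{C} \le (M+2/3)^{3/2}$.

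There is no substantial obstacle. The only things to get right are (i) that the augmentation by the trailing $1$ is genuinely preserved by ReLU, which is why the ``$1$'' in the bottom-right of each augmented matrix is needed, and (ii) the accounting of the extra $+2$ coming from those two entries, which is exactly what produces the additive $2/3$ shift in $(M+2/3)^{3/2}$ after AM--GM. Everything else is routine.
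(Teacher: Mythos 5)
Your proof is correct and takes essentially the same route as the paper: the paper also treats the rewriting as a direct verification, then observes that the sum of the three squared factor norms equals $\|\bm\phi\|^2 + 2$ (the $+2$ from the two trailing $1$'s) and applies AM--GM to convert $\frac{1}{3}\|\bm\phi\|^2 \le M$ into the stated bound on the product. Your reading of $\mW_\ell^\top$ as the weight block (a notational slip in the statement, since dimensions otherwise do not match) is also how the identity must be interpreted.
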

\begin{proof}
    It is straightforward to verify \Cref{eq:rewriting as a wider network}.
    Observe that 
    \begin{align*}
        M 
        &\ge \frac{1}{3}\|\bm\phi\|^2
        = \frac{1}{3}\left(
            \left\|\begin{bmatrix}
                \vw_3^\top & b_3
            \end{bmatrix}\right\|_2^2 + \left\| \begin{bmatrix}
                \mW_2^\top & \vb_2 \\
                \vzero     & 1
            \end{bmatrix} \right\|_F^2 + \left \|\begin{bmatrix}
                \mW_1^\top & \vb_1 \\
                \vzero     & 1
            \end{bmatrix} \right\|_F^2 - 2
        \right) \\
        &\ge -\frac{2}{3} + \left(
            \left\|\begin{bmatrix}
                \vw_3^\top & b_3
            \end{bmatrix}\right\|_2 \left\| \begin{bmatrix}
                \mW_2^\top & \vb_2 \\
                \vzero     & 1
            \end{bmatrix} \right\|_F \left \|\begin{bmatrix}
                \mW_1^\top & \vb_1 \\
                \vzero     & 1
            \end{bmatrix} \right\|_F
        \right)^{2/3}.
    \end{align*}
    where the second inequality comes from the AM-GM inequality.
\end{proof}
We now apply Theorem 1 in \cite{neyshabur2015norm} to get a bound on the Rademacher complexity of the set of depth-3 networks with representation cost bounded by $M$ with respect to $\Unif(\gX_d)$. We use $\setofnns{3}^M$ to denote this set:
\begin{equation}
    \setofnns{3}^M := \{f \in \setofnns{3} : \Repregbias{3}(f) \le M\}.
\end{equation}
Given a function class $\gH$, we write $\Rad_m(\gH;(\vx_i)_{i=1}^m)$ for the empirical Rademacher complexity with respect to samples $(\vx_i)_{i=1}^m$. That is, 
\begin{equation}
    \Rad_m(\gH;(\vx_i)_{i=1}^m) 
        := \E_{\xi \sim \{\pm 1\}^m}\left[\sup_{h \in \gH} \frac{1}{m}\left|\sum_{i=1}^m \xi_i h(\vx_i)\right|\right]
\end{equation}
where $\xi \sim \{\pm 1\}^m$ denotes that each entry in $\xi$ is an iid draw from $\Unif\{\pm1\}$. 
We write $\Rad_{\gX_d^m}(\gH)$ for the Rademacher complexity of $\gH$ with respect to $m$ i.i.d. samples from $\Unif(\gX_d)$:
\begin{equation}
    \Rad_{\gX_d^m}(\gH) 
    := \E_{(\vx_i)_{i=1}^m \overset{iid}{\sim} \Unif(\gX_d)} [\Rad_m(\gH;(\vx_i)_{i=1}^m].
\end{equation}
\begin{lemma}[Rademacher Complexity Bound]
\label{lem:rademacher complexity bound}
    $\Rad_{\gX_d^m}(\setofnns{3}^M) = O\left(\frac{M^{3/2}}{m^{1/2}}\right)$.
\end{lemma}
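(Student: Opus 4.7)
The plan is to reduce to Theorem 1 of \cite{neyshabur2015norm}, which bounds the Rademacher complexity of the class of depth-$L$ bias-free ReLU networks on inputs of bounded $\ell_2$-norm in terms of the product of the layerwise Frobenius norms of the weight matrices. Lemma \ref{lem:rewriting as a wider network} is precisely the bridge: it shows that any $f \in \setofnns{3}^M$ can, on the augmented input $(\vx,1)$, be realized as a bias-free depth-3 ReLU network whose three weight matrices have product of Frobenius norms at most $(M+\tfrac{2}{3})^{3/2}$.

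Concretely, given $f \in \setofnns{3}^M$ and any $\varepsilon > 0$, I would first pick a parameterization $\bm\phi$ with $f = f_{\bm\phi}$ and $\|\bm\phi\|^2/3 \le M + \varepsilon$ (needed because the infimum in the definition of $R_3$ may not be attained). Applying Lemma \ref{lem:rewriting as a wider network} produces augmented matrices $\tilde{\mW}_1,\tilde{\mW}_2,\tilde{\vw}_3$ satisfying $\|\tilde{\vw}_3\|_2\,\|\tilde{\mW}_2\|_F\,\|\tilde{\mW}_1\|_F \le (M+\varepsilon+\tfrac{2}{3})^{3/2}$ and $f(\vx) = \tilde{\vw}_3^\top\!\left[\tilde{\mW}_2\left[\tilde{\mW}_1(\vx,1)^\top\right]_+\right]_+$. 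Since each $\vx \in \gX_d$ satisfies $\|\vx\|_2 = \sqrt{2}$, the augmented input $(\vx,1)$ has $\ell_2$-norm exactly $\sqrt{3}$, uniformly in $\vx$.

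The upshot is that $\setofnns{3}^M$ is contained, for every $\varepsilon > 0$, in the bias-free depth-3 ReLU class on inputs of norm $\sqrt{3}$ with product-of-Frobenius-norms bounded by $(M+\varepsilon+\tfrac{2}{3})^{3/2}$. Plugging these parameters into Theorem 1 of \cite{neyshabur2015norm} yields an empirical Rademacher complexity bound of the form $C\sqrt{3}\,(M+\varepsilon+\tfrac{2}{3})^{3/2}/\sqrt{m}$ for an absolute constant $C$ (the $2^L$ depth factor is a constant since $L=3$); taking expectations over the samples preserves the bound, and sending $\varepsilon \downarrow 0$ gives $\Rad_{\gX_d^m}(\setofnns{3}^M) \le C' M^{3/2}/\sqrt{m}$ as claimed.

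The main obstacle is purely bookkeeping: verifying that the hypotheses of the Neyshabur--Tomioka--Srebro theorem are met in our setup. That theorem is stated for bias-free networks, which is exactly why Lemma \ref{lem:rewriting as a wider network} is phrased so as to absorb all biases into augmented weight matrices; it requires 1-Lipschitz positively homogeneous activations, which ReLU satisfies; and it needs a uniform $\ell_2$-bound on inputs, which follows from $\|(\vx,1)\|_2 = \sqrt{3}$ on $\gX_d$. Once these are checked, the result is essentially immediate, and no tricks beyond the clean product-of-norms bound from Lemma \ref{lem:rewriting as a wider network} are needed.
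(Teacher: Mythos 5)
Your proof is correct and follows essentially the same route as the paper: Lemma~\ref{lem:rewriting as a wider network} absorbs the biases into augmented matrices on $(\vx,1)$ with $\ell_2$-norm $\sqrt{3}$, and Theorem~1 of \cite{neyshabur2015norm} then gives the $O(M^{3/2}/\sqrt{m})$ bound. Your $\varepsilon$-regularization to handle the possibility that the infimum defining $R_3$ is not attained is a small refinement the paper elides, but the substance of the argument is identical.
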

\begin{proof}
    Theorem 1 in \cite{neyshabur2015norm} bounds the empirical Rademacher complexity of 
    \begin{equation}
        \gN^{3,\text{dim}=D}_{\gamma_{22} \le \gamma}
        := \{f: \R^D \rightarrow \R | f(\vx) = \vw_3^\top \left[\mW_2\left[\mW_1 \vx \right]_+\right]_+, \|\vw_3\|_2\|\mW_2\|_F\|\mW_1\|_F \le \gamma\}
    \end{equation}
    as 
    \begin{equation}
        \Rad_m(\gN^{3,\text{dim}=D}_{\gamma_{22} \le \gamma};(\vx_i)_{i=1}^m) 
        \le \frac{4\sqrt{2} \gamma  \max_i \|\vx_i\|_2}{\sqrt{m}}. 
    \end{equation}
    By \Cref{lem:rewriting as a wider network}, 
    $
        \setofnns{3}^M \subseteq \gN^{3,\text{dim}=D}_{\gamma_{22} \le \gamma}
    $ with $D=2d+1$ and $\gamma = \left(M+\frac{2}{3}\right)^{3/2}$. 
    Therefore,
    \[
        \Rad_m(\setofnns{3}^M;(\vx_i)_{i=1}^m) 
        \le \frac{4\sqrt{2} \left(M+\frac{2}{3}\right)^{3/2}  \max_i \sqrt{1+\|\vx_i\|_2^2}}{\sqrt{m}}.
    \]
    where we have replaced $\|\vx_i\|_2$ with $\sqrt{1+\|\vx_i\|_2^2}$ because $\setofnns{3}^M$ is embedded in $\gN^{3,\text{dim}=D}_{\gamma_{22} \le \gamma}$ by extending in the input $\vx \in \R^{2d}$ to $\begin{bmatrix}
            \vx^\top & 1
        \end{bmatrix}^\top \in \R^{2d+1}$.
    Since all samples $\vx_i \sim \Unif(\gX_d)$ have norm $\sqrt{2}$, we get 
    \[
        \Rad_{\gX_d^m}(\setofnns{3}^M)
        \le \frac{4\sqrt{2} \left(M+\frac{2}{3}\right)^{3/2}\sqrt{3}}{\sqrt{m}} = O\left(\frac{M^{3/2}}{m^{1/2}}\right).
    \]
\end{proof}
The other piece we need for an estimation error bound is to uniformly bound $\|f_d - h\|_{L^\infty}$ over $\setofnns{3}^M$. 
\begin{lemma}
\label{lem:uniform loss bound}
    If $f_d : \gX_d \rightarrow [-1,1]$, then
    $\sup_{h \in \setofnns{3}^M} \|f_d - h\|_{L^\infty} = O(M^{3/2})$.
\end{lemma}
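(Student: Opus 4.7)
The plan is to reduce the bound on $\|f_d - h\|_{L^\infty}$ to a bound on $\|h\|_{L^\infty}$ for each $h \in \setofnns{3}^M$. Since $f_d$ takes values in $[-1,1]$, the triangle inequality immediately gives $\|f_d - h\|_{L^\infty} \le 1 + \|h\|_{L^\infty}$, so it is enough to show that $\|h\|_{L^\infty} = O(M^{3/2})$ uniformly over $h \in \setofnns{3}^M$.

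For the uniform bound on $\|h\|_{L^\infty}$, I would lean directly on the reformulation already proved in \Cref{lem:rewriting as a wider network}. Fix $h \in \setofnns{3}^M$ and $\varepsilon > 0$. By the definition of $R_3$, there exists a parameterization $\bm\phi = (\mW_1,\vb_1,\mW_2,\vb_2,\vw_3,b_3)$ with $f_{\bm\phi}=h$ on $\gX_d$ and $\|\bm\phi\|^2/3 \le M+\varepsilon$. \Cref{lem:rewriting as a wider network} rewrites $h(\vx)$ as a biasless three-layer network on the augmented input $\tilde{\vx} = [\vx^\top, 1]^\top$, with augmented outer weights $\tilde{\vw}_3 = [\vw_3^\top, b_3]^\top$ and augmented hidden matrices $\tilde{\mW}_1, \tilde{\mW}_2$ satisfying
\[
\|\tilde{\vw}_3\|_2 \,\|\tilde{\mW}_2\|_F \,\|\tilde{\mW}_1\|_F \;\le\; (M+\varepsilon+2/3)^{3/2}.
\]

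From here I would propagate a Euclidean norm bound through the network using two standard facts: the ReLU is $1$-Lipschitz and fixes $0$, so $\|[\vy]_+\|_2 \le \|\vy\|_2$; and the Frobenius norm upper-bounds the operator norm, so $\|\mW\vy\|_2 \le \|\mW\|_F \|\vy\|_2$. Chaining these from the input outward gives
\[
|h(\vx)| \;\le\; \|\tilde{\vw}_3\|_2 \,\|\tilde{\mW}_2\|_F \,\|\tilde{\mW}_1\|_F \,\|\tilde{\vx}\|_2.
\]
Since every $\vx \in \gX_d$ has $\|\vx\|_2 = \sqrt{2}$, the augmented vector has $\|\tilde{\vx}\|_2 = \sqrt{3}$, yielding $|h(\vx)| \le \sqrt{3}\,(M+\varepsilon+2/3)^{3/2}$. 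Letting $\varepsilon \to 0$ and taking the supremum over $\vx \in \gX_d$ gives $\|h\|_{L^\infty} \le \sqrt{3}\,(M+2/3)^{3/2} = O(M^{3/2})$, and combining with the initial triangle inequality yields $\|f_d - h\|_{L^\infty} \le 1 + \sqrt{3}\,(M+2/3)^{3/2} = O(M^{3/2})$, uniformly in $h$.

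I do not anticipate a real obstacle: the entire argument is a routine norm propagation, and the only non-trivial step, namely absorbing the biases into augmented weight matrices and bounding the product of their Frobenius norms by $(M+2/3)^{3/2}$, has already been carried out in \Cref{lem:rewriting as a wider network}. The mild care needed is just to handle the case where the infimum defining $R_3(h)$ is not attained, which is why I include the $\varepsilon$ slack above and pass to the limit.
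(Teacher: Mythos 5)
Your proof is correct and follows essentially the same route as the paper's: triangle inequality to isolate $\|h\|_{L^\infty}$, then pass through \Cref{lem:rewriting as a wider network} and propagate Frobenius/Euclidean bounds layer by layer using $1$-Lipschitzness of the ReLU, with $\|\tilde{\vx}\|_2 = \sqrt{3}$ on $\gX_d$. The only difference is that you explicitly include the $\varepsilon$-slack to handle the case where the infimum defining $R_3(h)$ is not attained, which is a small precision the paper's write-up glosses over.
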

\begin{proof}
    If $h \in \setofnns{3}^M$, then by \Cref{lem:rewriting as a wider network},
    \[
        h(\vx)
        = 
        \begin{bmatrix}
            \vw_3^\top & b_3
        \end{bmatrix} \left[\begin{bmatrix}
            \mW_2^\top & \vb_2 \\
            \vzero     & 1
        \end{bmatrix} \left[\begin{bmatrix}
            \mW_1^\top & \vb_1 \\
            \vzero     & 1
        \end{bmatrix} \begin{bmatrix}
            \vx \\ 1
        \end{bmatrix}
        \right]_+ \right]_+
    \]
    for some parameterization $\bm\phi = (\mW_1,\vb_1,\mW_{2},\vb_{2},\vw_3,b_3)$ with
    \[
        \left\|\begin{bmatrix}
                \vw_3^\top & b_3
            \end{bmatrix}\right\|_2 \left\| \begin{bmatrix}
                \mW_2^\top & \vb_2 \\
                \vzero     & 1
            \end{bmatrix} \right\|_F \left \|\begin{bmatrix}
                \mW_1^\top & \vb_1 \\
                \vzero     & 1
            \end{bmatrix} \right\|_F 
        \le \left(M+\frac{2}{3}\right)^{3/2}.
    \] 
    Because $\|\mA\mB\|_F \le \|\mA\|_F\|\mB\|_F$ and $\|[\mA]_+\|_F \le \|\mA\|_F$, we see that for $\vx \in \gX_d$,
    \[
        |h(\vx)|
        \le 
        \left\|\begin{bmatrix}
            \vw_3^\top & b_3
        \end{bmatrix}\right\|_2 \left\| \begin{bmatrix}
            \mW_2^\top & \vb_2 \\
            \vzero     & 1
        \end{bmatrix} \right\|_F \left \|\begin{bmatrix}
            \mW_1^\top & \vb_1 \\
            \vzero     & 1
        \end{bmatrix} \right\|_F \left\|\begin{bmatrix}
            \vx \\ 1
        \end{bmatrix}
        \right\|_2
        \le \sqrt{3}
        \left(M+\frac{2}{3}\right)^{3/2}.
    \]
    This shows that 
    \[
        \sup_{h \in \setofnns{3}^M} \|f_d - h\|_{L^\infty} 
        \le \|f_d\|_{L^\infty} + \sup_{h \in \setofnns{3}^M} \|h\|_{L^\infty} 
        \le 1 + \sqrt{3}\left(M+\frac{2}{3}\right)^{3/2}
        = O(M^{3/2}).
    \]
\end{proof}
Using \Cref{lem:rademacher complexity bound,lem:uniform loss bound}, standard Rademacher complexity arguments yield an estimation error bound over $\setofnns{3}^M$, as shown in the following lemma. 
\begin{lemma}
    \label{lem:estimation error bound}
    Consider a distribution $\xydist_d$ on $\gX_d \times [-1,1]$
    defined as
    \begin{align}
        \vx &\sim \Unif(\gX_d) \\
        y|\vx &= f_d(\vx)
    \end{align}
    for some function $f_d: \gX_d \rightarrow [-1,1]$ .
    If $f \in \setofnns{3}$ with $R_3(f) \le M$, then
    \begin{equation}
        |\poploss_{\xydist_d}(f) - \emloss{S}{f}| \le O\left(M^3\sqrt{\frac{ \log{1/\delta}}{|S|}}\right)
    \end{equation}
    with probability at least $1-\delta$ over samples $S$ drawn i.i.d. from $\xydist_d$.
\end{lemma}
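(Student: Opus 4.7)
The plan is to apply a standard Rademacher-complexity-based uniform generalization bound to the squared loss class induced by $\setofnns{3}^M$, then specialize it to the particular function $f$ at hand. Define the loss class
\begin{equation}
    \gL_M := \{(\vx,y) \mapsto (h(\vx)-y)^2 : h \in \setofnns{3}^M\},
\end{equation}
so that $f \in \setofnns{3}^M$ implies the loss function associated with $f$ lies in $\gL_M$. The target is bounded by $1$ and, by \Cref{lem:uniform loss bound}, every $h \in \setofnns{3}^M$ is uniformly bounded by $B = O(M^{3/2})$ on $\gX_d$. Consequently each element of $\gL_M$ is bounded by $(B+1)^2 = O(M^{3})$, and the map $u \mapsto (u-y)^2$ is Lipschitz with constant $2(B+1) = O(M^{3/2})$ on the range of interest for every fixed $y \in [-1,1]$.

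Next I would bound the Rademacher complexity of $\gL_M$. By Talagrand's contraction lemma applied with the $O(M^{3/2})$-Lipschitz map above (and a centering constant to absorb the dependence on $y$), one obtains
\begin{equation}
    \Rad_{\gX_d^m}(\gL_M) \le O(M^{3/2}) \cdot \Rad_{\gX_d^m}(\setofnns{3}^M) = O\!\left(\frac{M^{3}}{\sqrt{m}}\right),
\end{equation}
using \Cref{lem:rademacher complexity bound} in the last step. Then the standard uniform-convergence bound for bounded losses (via McDiarmid/bounded differences applied to $\sup_{h \in \setofnns{3}^M}|\poploss_{\xydist_d}(h) - \emloss{S}{h}|$) gives, with probability at least $1-\delta$,
\begin{equation}
    \sup_{h \in \setofnns{3}^M}|\poploss_{\xydist_d}(h) - \emloss{S}{h}| \le 2 \Rad_{\gX_d^m}(\gL_M) + O\!\left(M^{3}\sqrt{\tfrac{\log(1/\delta)}{m}}\right) = O\!\left(M^{3}\sqrt{\tfrac{\log(1/\delta)}{|S|}}\right).
\end{equation}
Since $f \in \setofnns{3}^M$, the bound applies to $f$ in particular, yielding the claim.

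The only mildly delicate step is the contraction: one must be careful that Talagrand's lemma applies fiberwise in $y$ (the Lipschitz constant of $u \mapsto (u-y)^2$ on $[-B-1, B+1]$ is $2(B+1)$ uniformly in $y \in [-1,1]$), so after conditioning on the $y_i$'s and applying the contraction argument we still obtain a Rademacher complexity bound in terms of $\Rad_{\gX_d^m}(\setofnns{3}^M)$. Everything else is a routine application of bounded-difference concentration with range $O(M^3)$, producing the final $M^{3}\sqrt{\log(1/\delta)/|S|}$ rate.
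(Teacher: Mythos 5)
Your proof is correct and follows essentially the same route as the paper: define the squared-loss class over $\setofnns{3}^M$, use \Cref{lem:uniform loss bound} for the uniform $L^\infty$ bound (both as the contraction Lipschitz constant and as the bounded-difference range), combine with \Cref{lem:rademacher complexity bound} via the contraction lemma, and conclude by McDiarmid plus specialization to $f$. The paper invokes the same steps by citing Bartlett--Mendelson and Wainwright, and its loss class $\{(h-f_d)^2\}$ is identical to yours since $y = f_d(\vx)$ deterministically; your extra remark about fiberwise contraction in $y$ is not needed here but does no harm.
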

\begin{proof}
    We apply the properties of Rademacher complexity (see for example Theorem 12 in \cite{Bartlett_Mendelson_2001} and Theorem 4.10 in \cite{wainwright2019high}) to give an estimation error bound over $\setofnns{3}^M$ as follows. Define the loss class $\gL_{\setofnns{3}^M,f_{d}}:= \{(h - f_d)^2 : h \in \setofnns{3}^M\}$. With probability at least $1-\delta$, 
    \begin{align}
        \sup&_{h\in\setofnns{3}^M} |\poploss_{\xydist_d}(h) - \emloss{S}{h}| \\
        &\le O\left(\Rad_{\gX_d^m}(\gL_{\setofnns{3}^M,f_{d}}) 
        + \sqrt{\frac{\log(1/\delta)}{m}} 
        \sup_{h \in \setofnns{3}^M} \|f_d - h\|_{L^\infty}^2  \right)\\
        &\le O\left(\sup_{h\in\setofnns{3}^M}(\|f_d - h\|_{L^\infty})\left( \Rad_{\gX_d^m}(\setofnns{3}^M) + 1/\sqrt{m}\right) 
        + \sqrt{\frac{\log(1/\delta)}{m}} 
        \sup_{h \in \setofnns{3}^M} \|f_d - h\|_{L^\infty}^2\right).
    \end{align}
    Plugging in the bounds from \Cref{lem:rademacher complexity bound,lem:uniform loss bound}, this becomes
    \begin{equation}
        \sup_{h\in\setofnns{3}^M} |\poploss_{\xydist_d}(h) - \emloss{S}{h}| 
        = O\left(M^3\sqrt{\frac{\log{1/\delta}}{m}}\right).
    \end{equation}
\end{proof}

\subsection{Full Proof of No Reverse Depth Separation}
\label{sec:full proof of no rev dep sep}

\subsubsection{Proof of \Cref{lem:learning with two layers means approximate with small cost}}
\label{sec:proof learning with two layers means approximate with small cost}
\begin{proof}
    Fix $\varepsilon> 0$.
    Let $S$ be a sample from $\xydist_d$ of size $m_2\left(\frac{\varepsilon}{2}\right)$. 
    As in the proof of \Cref{thm:depth separation} Part 1, we rely on the existence of an interpolant. 
    By \Cref{lem:tent interpolator is cheap},
    with probability at least $0.6$
    there is an interpolant $\hat f_S \in \setofnns{2}$ of the samples $S$ with 
    $
         \Repregbias{2}\left(\hat f_S\right) 
         \le 100 \sqrt{2}m_2\left(\frac{\varepsilon}{2}\right)^{\frac{d+3}{d-1}}.
    $
    Because $\idealrule{2}(S) \in \pareto{2}$ is Pareto optimal,
    it follows that
    $
        \Repregbias{2}(\idealrule{2}(S)) \le  \Repregbias{2}(\hat f_S).
    $
    We conclude that
    \[
        \P\left(
            \Repregbias{2}(\idealrule{2}(S)) > 
            100 \sqrt{2}m_2\left(\frac{\varepsilon}{2}\right)^{\frac{d+3}{d-1}}
        \right) 
        \le 0.4.
    \]
    
    On the other hand, since $\E_{S}[\poploss_{\xydist_d}(\idealrule{2}(S))] \le \frac{\varepsilon}{2}$
    whenever $|S| \ge m_2\left(\frac{\varepsilon}{2}\right)$, it follows from Markov's inequality that
    \begin{equation}
        \P\left(\poploss_{\xydist_d}(\idealrule{2}(S)) > \varepsilon\right)  \le 0.5.
    \end{equation}
    Therefore,
    \begin{equation}
        \P\left(
            \poploss_{\xydist_d}(\idealrule{2}(S)) > \varepsilon 
            \text{~or~}
            \Repregbias{2}(\idealrule{2}(S)) > 
            100 \sqrt{2}m_2\left(\frac{\varepsilon}{2}\right)^{\frac{d+3}{d-1}}
        \right)
        \le 0.9 < 1.
    \end{equation}
    We conclude that there is some sample $S_{\varepsilon}$ from $\xydist_d$ of size $m_2\left(\frac{\varepsilon}{2}\right)$ such that 
    \begin{equation}
        \poploss_{\xydist_d}(\idealrule{2}(S_{\varepsilon})) \le \varepsilon
        \text{~and~} 
        \Repregbias{2}(\idealrule{2}(S_{\varepsilon})) \le 
        100 \sqrt{2}m_2\left(\frac{\varepsilon}{2}\right)^{\frac{d+3}{d-1}}.
    \end{equation}
    We choose $f_{\varepsilon} = \idealrule{2}(S_{\varepsilon})$.
\end{proof}

\subsubsection{Proof of \Cref{thm:no reverse depth separation} (No Reverse Depth Separation)}
\label{sec:proof details no rev dep sep}
\begin{proof}
    Fix $\varepsilon,\delta>0$ and $\alpha \ge 1$. Let $\theta = \frac{\varepsilon}{2\alpha }$. 
    Under the assumptions of the theorem,
    \Cref{lem:learning with two layers means approximate with small cost} tells us there is a function $f_\theta \in \setofnns{2}$
    such that $\poploss_{\xydist_d}(f_\theta) \le \theta/8$ and
    $
        \Repregbias{2}(f_\theta) \le O\left(
            m_2\left(\frac{\varepsilon}{32\alpha }\right)^{\frac{d+3}{d-1}}
        \right).
    $
    Let 
    \begin{equation}
        \twolayerwidth 
        = \frac{24R_2(f_\theta)^2}{\theta}
        = O\left(\frac{m_2\left(\frac{\varepsilon}{32\alpha }\right)^{\frac{2(d+3)}{d-1}}\alpha }{\varepsilon}\right).
    \end{equation}
    \Cref{lem:can approximate with narrow network with same R2 cost} allows us to approximate $f_\theta$ --- and thus $\xydist_d$ --- with width $\twolayerwidth$; there is some $\tilde f_\theta \in \setofnns{2,\twolayerwidth}$ such that 
    $
        \Repregbias{2}(\tilde f_\theta;\twolayerwidth) \le \Repregbias{2}(f_\theta)
    $
    and
    $\|f_\theta - \tilde f_\theta\|_{L^2} < \sqrt{\theta/8}$.
    Thus,
    \begin{equation}
        \poploss_{\xydist_d}(\tilde f_\theta)
        \le 2 \left( \poploss_{\xydist_d}(f_\theta) + \|f_\theta - \tilde f_\theta\|_{L^2}^2\right)
        \le \theta/2.
    \end{equation}
    If $\width \ge \max(\twolayerwidth, 4d)$, then \Cref{lem:ub on R3b by R2b with identity layer} tells us that $\tilde f_\theta \in \setofnns{3,\width}$ and
    \begin{align}
        \Repregbias{3}(\tilde f_\theta;\width) 
        &\le \frac{4d}{3} + \frac{4}{3} \Repregbias{2}(\tilde f_\theta;\twolayerwidth) \\
        &\le \frac{4d}{3} + \Repregbias{2}(f_\theta) \\
        &= O\left(
            d + m_2\left(\frac{\varepsilon}{32\alpha }\right)^{\frac{d+3}{d-1}}
        \right). \label{eq:ftheta cost bound}
    \end{align}

    By the estimation error bound in \Cref{lem:estimation error bound} and the union bound, with probability at least $1-\delta$ we have that
    \begin{equation}
    \label{eq:estimation error bound for real rule}
        \left|\emloss{S}{\alpharealrule{3,\width}(S)} - \poploss_{\xydist_d}(\alpharealrule{3,\width}(S))\right|
        = O\left(
            \sqrt{
                \frac{
                    \Repregbias{3}( \alpharealrule{3,\width}(S) ;\width)^6 \log(1/\delta)
                }
                {|S|}
            }
        \right)
    \end{equation}
    and
    \begin{equation}
    \label{eq:estimation error bound for ftheta}
        \left|\emloss{S}{\tilde f_\theta} - \poploss_{\xydist_d}(\tilde f_\theta)\right|
        = O\left(
            \sqrt{
                \frac{
                    \Repregbias{3}( \tilde f_\theta ;\width)^6 \log(1/\delta)
                }
                {|S|}
            }
        \right).
    \end{equation}
    If $|S| \ge m_3(\varepsilon,\delta,\alpha)$, where 
    \begin{equation}
        m_3(\varepsilon,\delta,\alpha) = 
        O\left(\frac{\alpha^6 \left(d + m_2\left(\frac{\varepsilon}{64}\right)^{\frac{d+3}{d-1}} \right)^6 \log{1/\delta}}{\varepsilon^2}\right),
    \end{equation}
    then
    \Cref{eq:ftheta cost bound,eq:estimation error bound for ftheta} imply that $\left|\emloss{S}{\tilde f_\theta} - \poploss_{\xydist_d}(\tilde f_\theta)\right| \le \theta/2$, and so $\emloss{S}{\tilde f_\theta} \le \theta$. Hence 
    \begin{align}
        \Repregbias{3}(\alpharealrule{3,\width}(S);\width) 
        &\le \alpha  \inf_{\substack{f \in \setofnns{3,\width} \\ \emloss{S}{f} \le \theta}} \Repregbias{3}(f;\width) \\
        &\le \alpha  \Repregbias{3}( \tilde f_\theta ;\width ) \\
        &= O\left(
            \alpha  
            \left(
                d + m_2\left(\frac{\varepsilon}{32\alpha }\right)^{\frac{d+3}{d-1}}
            \right)
        \right). \label{eq:realrule cost bound}
    \end{align}
    By \Cref{eq:estimation error bound for real rule,eq:realrule cost bound,}, if 
    $
        |S| \ge 
        m_3(\varepsilon,\delta,\alpha)
    $ 
    then 
    $
        \left|\emloss{S}{\alpharealrule{3,\width}(S)} - \poploss_{\xydist_d}(\alpharealrule{3,\width}(S))\right|
        \le \frac{\varepsilon}{2}.
    $
    Therefore
    $
        \poploss_{\xydist_d}(\alpharealrule{3,\width}(S))
        \le \alpha  \theta + \frac{\varepsilon}{2} = \varepsilon.
    $
\end{proof}

\end{document}